\begin{document}

\title{Learning Discretized Neural Networks under Ricci Flow}

\author{\name Jun Chen$^{1,2}$ \email junc@zju.edu.cn
\AND
\name Hanwen Chen$^1$ \email chenhanwen@zju.edu.cn 
\AND
\name Mengmeng Wang$^1$ \email mengmengwang@zju.edu.cn
\AND
\name Guang Dai$^3$ \email guang.gdai@gmail.com
\AND
\name Ivor W.~Tsang$^{4,5,6}$ \email ivor.tsang@gmail.com 
\AND
\name Yong Liu$^1$\thanks{Corresponding author.} \email yongliu@iipc.zju.edu.cn
\AND
${}^1$\addr Institute of Cyber-Systems and Control, Zhejiang University, China \\
${}^2$\addr School of Computer Science and Technology, Zhejiang Normal University, China \\
${}^3$\addr SGIT AI Lab, State Grid Corporation of China, China \\
${}^4$\addr Centre for Frontier Artificial Intelligence Research, Agency for Science, Technology and Research (A*STAR), Singapore \\
${}^5$\addr Institute of High Performance Computing, Agency for Science, Technology and Research (A*STAR), Singapore \\
${}^6$\addr College of Computing and Data Science, Nanyang Technological University, Singapore}

\editor{Aurelien Garivier}

\maketitle

\begin{abstract}
In this paper, we study Discretized Neural Networks (DNNs) composed of low-precision weights and activations, which suffer from either infinite or zero gradients due to the non-differentiable discrete function during training. Most training-based DNNs in such scenarios employ the standard Straight-Through Estimator (STE) to approximate the gradient w.r.t. discrete values. However, the use of STE introduces the problem of gradient mismatch, arising from perturbations in the approximated gradient. To address this problem, this paper reveals that this mismatch can be interpreted as a metric perturbation in a Riemannian manifold, viewed through the lens of duality theory. Building on information geometry, we construct the Linearly Nearly Euclidean (LNE) manifold for DNNs, providing a background for addressing perturbations. By introducing a partial differential equation on metrics, i.e., the Ricci flow, we establish the dynamical stability and convergence of the LNE metric with the $L^2$-norm perturbation. In contrast to previous perturbation theories with convergence rates in fractional powers, the metric perturbation under the Ricci flow exhibits exponential decay in the LNE manifold. Experimental results across various datasets demonstrate that our method achieves superior and more stable performance for DNNs compared to other representative training-based methods.
\end{abstract}

\begin{keywords}
  discretized neural networks, gradient perturbation, information geometry, ricci flow, riemannian manifold
\end{keywords}

\section{Introduction}

Discretized Neural Networks (DNNs)~\citep{courbariaux2016binarized,li2016ternary,zhu2016trained} have been proven to be efficient in computing, significantly reducing computational complexity, storage space, power consumption, and resources~\citep{chen2021learning}.
Considering a discretized neural network that can be well-trained, the gradient w.r.t. the continuous weight\footnote{In this paper, the continuous weight is relative to the neural network (its data type is full-precision). And the discretized weight is relative to the discretized neural network (its data type is low-precision).} $\boldsymbol{w}$ propagating through a discrete function $Q(\cdot)$, i.e., $\frac{\partial L}{\partial \boldsymbol{w}}=\frac{\partial L}{\partial Q(\boldsymbol{w})} \frac{\partial Q(\boldsymbol{w})}{\partial \boldsymbol{w}}$, suffers from either infinite or zero derivatives because the derivative $\partial Q(\boldsymbol{w}) /\partial \boldsymbol{w}$ is not calculable. In the backward pass, one can obtain the gradient $\partial L /\partial Q(\boldsymbol{w})$, but must update the continuous weight $\boldsymbol{w}$ using the gradient $\partial L /\partial \boldsymbol{w}$. Since the gradient $\partial L /\partial \boldsymbol{w}$ can not be obtained explicitly, the derivative $\partial Q(\boldsymbol{w}) /\partial \boldsymbol{w}$ serves as a bridge to calculate $\partial L /\partial \boldsymbol{w}$ through the standard chain rule.

\begin{figure}[t]
	\centering
	\subfigure[Gradient propagation with STE]
	{\includegraphics[width=.49\textwidth]{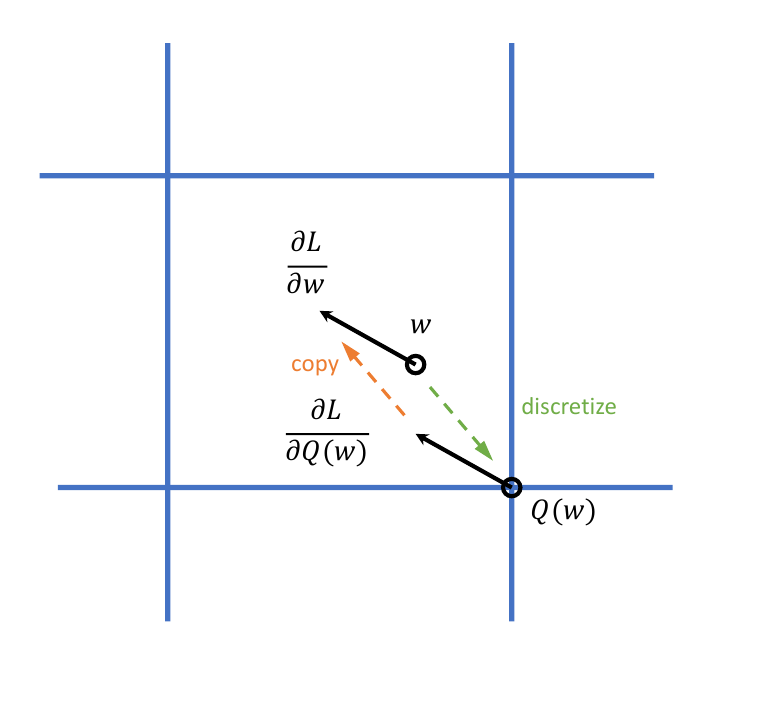}\label{introa}}
	\subfigure[Gradient propagation with metrics]
	{\includegraphics[width=.49\textwidth]{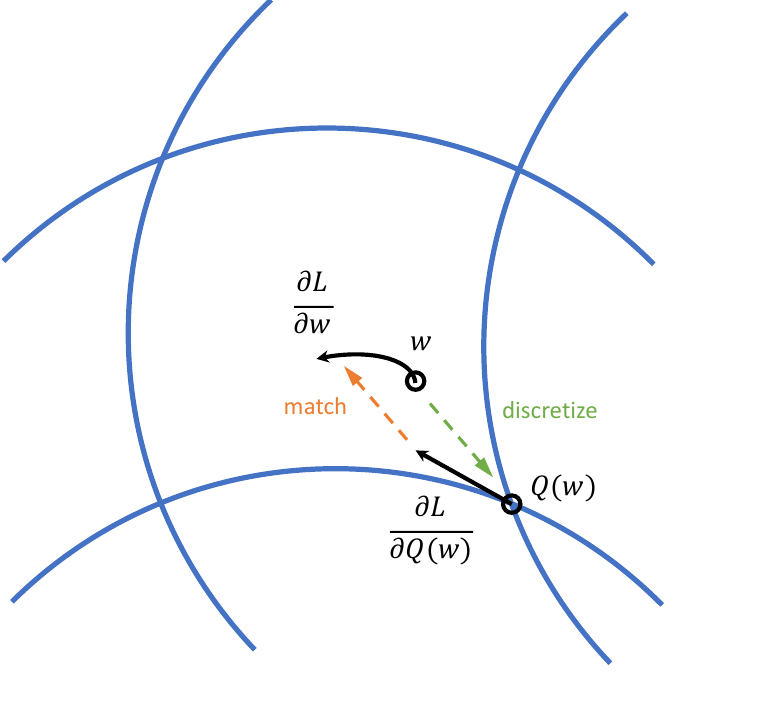}\label{introb}}
	\caption{Comparison of STE and our method. We denote the arrows and points as gradients and weights, respectively. In particular, when a point falls on the grid point, it means that the weight is discretized at this time. In the forward pass, the continuous weight $\boldsymbol{w}$ is mapped to a discrete weight $Q(\boldsymbol{w})$ via a discrete function. In the backward pass, the gradient is propagated from $\partial L /\partial Q(\boldsymbol{w})$ to $\partial L /\partial \boldsymbol{w}$. (a) The STE simply copies the gradient, i.e., $\partial L /\partial \boldsymbol{w}=\partial L /\partial Q(\boldsymbol{w})$. (b) Our method matches the gradient by introducing the proper metric $g_{\boldsymbol{w}}$, i.e., $\partial L /\partial \boldsymbol{w}=g^{-1}_{\boldsymbol{w}}\partial L /\partial Q(\boldsymbol{w})$ in a Riemannian manifold.}
	\label{intro}
\end{figure}

In order to address the problem of either infinite or zero gradients caused by the non-differentiable discrete function, \citet{hinton2012neural} first proposed the concept of Straight-Through Estimator (STE). This estimator 
directly equates $\partial L /\partial \boldsymbol{w}$ and $\partial L /\partial Q(\boldsymbol{w})$ in back-propagation as if the derivative $\partial Q(\boldsymbol{w}) /\partial \boldsymbol{w}$ had been the identity function.
Furthermore, the rigorous definition of STE was developed by \citet{bengio2013estimating}. This definition can be summarized as: the gradient w.r.t. the discretized weight can be approximated by the gradient w.r.t. the continuous weight with clipping, as shown in Figure~\ref{introa}. Subsequently, \citet{courbariaux2016binarized} applied STE to binarized neural networks and provided an approximated gradient as follows:
\begin{equation}
\begin{aligned}
	\frac{\partial L}{\partial \boldsymbol{w}}&=\frac{\partial L}{\partial \operatorname{sign}(\boldsymbol{w})}\mathbb{I}(\boldsymbol{w}), \\
 \operatorname{where}\; \mathbb{I}(w_i):&=\left\{\begin{array}{ll}
		1 & \text { if } \; |w_i| \leq 1 \\
		0 & \text { otherwise}
	\end{array}\right. \operatorname{and}\;
 \operatorname{sign}(w_i):=\left\{\begin{array}{ll}
		+1 & \text { if } \; w_i \geq 0 \\
		-1 & \text { otherwise}
	\end{array}\right. .
	\label{bste}
\end{aligned}
\end{equation}
Clearly, $\mathbb{I}(\cdot)$ is the indicator function, and $\operatorname{sign}(\cdot)$ is the binary function.
Note that the discrete function $Q(\cdot)$ will degenerate to the binary function $\operatorname{sign}(\cdot)$ in binarized neural networks. In this context, $\partial L / \partial \operatorname{sign}(\boldsymbol{w})$ represents the gradient w.r.t. the binarized weight in binarized neural networks.
STE had been successfully implemented in the training of binarized neural networks, and it was further extended to ternary neural networks~\citep{li2016ternary} and arbitrary bit-width discretized neural networks~\citep{zhou2016dorefa}.

In contrast, Non-STE methods encompass all techniques that do not rely on STE, such as those proposed by \citet{hou2016loss}, \citet{bai2018proxquant}, and \citet{leng2018extremely}.
However, the learning process of Non-STE methods is heavily dependent on hyper-parameters~\citep{NEURIPS2019f8e59f4b}, such as weight partition portion in each iteration~\citep{zhou2017incremental} and penalty setting in tuning~\citep{leng2018extremely}.
Consequently, STE methods are widely adopted in DNNs due to their simplicity and versatility.

Nevertheless, the introduction of STE into DNNs inevitably leads to the problem of \emph{gradient mismatch}: the gradient w.r.t. the continuous weight is not strictly equal to the gradient w.r.t. the discretized weight~\citep{NEURIPS2019f8e59f4b}, compromising the training stability of DNNs~\citep{cai2017deep,liu2018bi,qin2020forward}.
Furthermore, the formula of STE indicates that this problem can be alleviated by modifying the gradient.

\citet{zhou2016dorefa} firstly proposed to transform the weight $\boldsymbol{w}$ into the new one $\tilde{\boldsymbol{w}}$ via
\[
	\tilde{\boldsymbol{w}}=\frac{\tanh(\boldsymbol{w})}{\max(|\tanh(\boldsymbol{w})|)}.
\]
By discretizing the new weight $\tilde{\boldsymbol{w}}$, the STE then acts on $\tilde{\boldsymbol{w}}$. During back-propagation, the gradient can be further computed as follows
\[
	\frac{\partial L}{\partial \boldsymbol{w}}=\frac{\partial L}{\partial Q(\tilde{\boldsymbol{w}})} \frac{1-\tanh^2(\boldsymbol{w})}{\max(|\tanh(\boldsymbol{w})|)}.
\]
The authors aim to manually redefine the indicator function $\mathbb{I}(\boldsymbol{w})$ as $\frac{1-\tanh^2(\boldsymbol{w})}{\max(|\tanh(\boldsymbol{w})|)}$. This modification is motivated by the fact that the function $\frac{1-\tanh^2(\boldsymbol{w})}{\max(|\tanh(\boldsymbol{w})|)}$ facilitates a smooth transition, thereby preventing abrupt clipping of the indicator function near $\pm 1$. 
It is remarkable that \citet{NEURIPS2019f8e59f4b} proposed to learn $\partial L /\partial \boldsymbol{w}$ by a neural network, e.g., fully-connected layers or LSTM~\citep{sak2014long}.
Their specific approach is to use neural networks as a shared meta quantizer $M_{\psi}$ parameterized by $\psi$ across layers to replace the gradient via:
\[
	\frac{\partial L}{\partial \boldsymbol{w}}=M_{\psi}\left(\frac{\partial L}{\partial Q(\boldsymbol{w})},\overline{\boldsymbol{w}}\right)\frac{\partial \overline{\boldsymbol{w}}}{\partial \boldsymbol{w}},
\]
where $\overline{\boldsymbol{w}}$ is the weight from the meta quantizer. With the input of the gradient $\partial L /\partial Q(\boldsymbol{w})$, the meta quantizer outputs a new gradient to match $\partial L /\partial \boldsymbol{w}$ by updating the weight $\overline{\boldsymbol{w}}$ in the training process.
Recently, \citet{ajanthan2021mirror} formulated the binarization of neural networks as a constrained optimization problem by introducing a mirror descent framework~\citep{nemirovsky1983informational}. This method performs gradient descent in the dual space (unconstrained space) with gradients computed in the primal space (discrete space). Specifically, by mapping the primal variable $\boldsymbol{w}$ into the dual variable $\tilde{\boldsymbol{w}}=\tanh(\beta_k \boldsymbol{w})$, the gradient can be expressed as
\[
	\frac{\partial L}{\partial \boldsymbol{w}}=\frac{\partial L}{\partial \tilde{\boldsymbol{w}}} \left(1-\tanh^2(\beta_k \boldsymbol{w})\right).
\]
As the hyper-parameter $\beta_k$ approaches infinity, $\tilde{\boldsymbol{w}}$ gradually converges to $\operatorname{sign}(\boldsymbol{w})$ until the corresponding neural network is fully binarized with an adaptive mirror map.

However, the method proposed by \citet{zhou2016dorefa} only avoided abrupt clipping of $\mathbb{I}(\boldsymbol{w})$ by using $\frac{1-\tanh^2(\boldsymbol{w})}{\max(|\tanh(\boldsymbol{w})|)}$, which does not fundamentally alleviate the gradient mismatch in essence. Subsequently, while \citet{NEURIPS2019f8e59f4b} suggested automatically matching the gradient by learning a new neural network (a meta quantizer), it introduces additional errors in the gradient propagation due to extra weights from the meta quantizer, thereby intensifying the problem of gradient mismatch. Furthermore, \citet{ajanthan2021mirror} bypassed the problem of gradient mismatch by directly calculating the derivative $\partial \tilde{\boldsymbol{w}} /\partial \boldsymbol{w}=\left(1-\tanh^2(\beta_k \boldsymbol{w})\right)$, implying that this method does not maintain discrete weights during training. Consequently, the problem of gradient mismatch still remains to be solved.

\subsection{Contributions} 

In this study, we address the gradient mismatch between $\partial L /\partial \boldsymbol{w}$ and $\partial L /\partial Q(\boldsymbol{w})$, treating it as a perturbation phenomenon between these two gradients. By introducing the framework of Riemannian geometry in Figure~\ref{introb}, we further regard the gradient mismatch as a metric perturbation in a Riemannian manifold (Section~\ref{2.2}) through the lens of duality theory~\citep{amari2000methods}. As a partial differential equation on metrics, the Ricci flow~\citep{sheridan2006hamilton}, is introduced, the metric perturbation can be exponentially decayed in theory, providing a solution to the problem of gradient mismatch.
The main contributions of this paper are summarized in the following four aspects:
\begin{itemize}
  \item We propose the LNE manifold endowed with the LNE metric, which is a special form of Ricci-flat metrics in essence. According to the information geometry~\citep{amari2016information}, we construct LNE manifolds for neural networks, providing a background for dealing with perturbations.
  \item We reveal the stability of LNE manifolds under the Ricci-DeTurck flow with the $L^2$-norm perturbation on the basis of the connection between the Ricci-DeTurck flow and the Ricci flow. In this way, any Ricci flow starting close to the LNE metric exists for all time and converges to the LNE metric. This stands in contrast to previous perturbation theories, where the convergence rate is in fractional powers. Instead, the metric perturbation under the Ricci flow exhibits exponential decay in the LNE manifold, providing theoretical support for effectively solving the problem of gradient mismatch.
  \item Based on the appealing characteristics of LNE manifolds under Ricci flow, a novel DNNs with the acceptable complexity, i.e., Ricci Flow Discretized Neural Network (RF-DNN) is developed. In practice, we calculate the Ricci curvature in such a way that the selection of coordinate systems is related to the input transformations of neural networks. In essence, the discrete Ricci flow is employed to overcome the problem of gradient mismatch.
  \item The experiments are implemented on several classification benchmark datasets and network structures. Experimental results demonstrate the effectiveness of RF-DNN compared with other representative training-based methods.
\end{itemize}

\subsection{Overall Organization}

The paper is organized as follows. In Section~\ref{2}, we introduce the motivation and Ricci flow. Section~\ref{3} deduces the corresponding LNE manifold for neural networks based on the geometric structure measured by the LNE divergence. The stability of LNE manifolds under the Ricci-DeTurck is proved in Section~\ref{4}. In Section~\ref{5}, we calculate the approximated gradient in the LNE manifold to avoid solving the inverse of the LNE metric. Section~\ref{6} presents how to introduce discrete Ricci flow into DNNs and yields the corresponding algorithm. The experimental results and ablation studies for RF-DNNs are presented in Section~\ref{7}. Section~\ref{8} concludes the entire paper. Proofs are provided in the Appendices.

The Ricci flow on Ricci-flat metrics is known in the literature to be stable for $C^0$ perturbations in the $L^{\infty}$-norm (Section~\ref{2.4}).
Based on a Bregman divergence~\citep{bregman1967relaxation}, the LNE metric, a special form of Ricci-flat metrics, is introduced in neural networks via the LNE divergence (Theorem~\ref{thm7}). The stability of LNE manifolds under the Ricci-DeTurck flow is then proved (Corollary~\ref{cor2} and Theorem~\ref{thm6}). A discretization of the Ricci flow is therefore proposed, leading to a practical algorithm (RF-DNNs, Algorithm~\ref{alg2}).

\section{Motivation and Formulation}
\label{2}

\subsection{Background}
\label{2.1}

To establish the foundation for our study throughout the paper, we begin with the basic background for feed-forward DNNs, drawing from the work by \citet{martens2015optimizing}. Important notations are listed in Appendix~\ref{appnotation}.

A neural network can be regarded as a function that transforms the input $\boldsymbol{a}_0$ into the output $\boldsymbol{a}_l$ through a series of $l$ layers. For the $i$-th layer ($i \in \{1,2,\ldots,l\}$), we denote $\boldsymbol{W}_i$ as the weight matrix, $\boldsymbol{s}_i$ as the vector of these weighted sum, and $\boldsymbol{a}_i$ as the vector of output (also known as the activation). Each layer receives vectors of a weighted sum of the input from the previous layer and calculates their output through a nonlinear function. Note that we ignore the bias vector for brevity.

For a DNN, the introduction of a discrete function $Q(\cdot)$ is necessary to discretize the weight matrix $\boldsymbol{W}_i$ and the activation vector $\boldsymbol{a}_i$. We denote the discretized weight matrix as $\boldsymbol{\hat{W}}_i=Q(\boldsymbol{W}_i)$ and the discretized activation vector as $\boldsymbol{\hat{a}}_i=Q(\boldsymbol{a}_i)$. Then, the feed-forward of DNNs at each layer is given as follows:
\begin{equation}
  \begin{aligned}
    &\boldsymbol{s}_i = \boldsymbol{\hat{W}}_i \boldsymbol{\hat{a}}_{i-1} \\
    &\boldsymbol{a}_i = f(\boldsymbol{s}_i) \\
    &\boldsymbol{\hat{a}}_i=Q(\boldsymbol{a}_i)
    \label{eq1}
  \end{aligned}
\end{equation}
where $f$ is a nonlinear (activation) function. The vectorized weights in each layer, before and after discretization, are denoted as $\boldsymbol{w}$ and $\boldsymbol{\hat{w}}$, respectively.
Additionally, we define the discretized parameter vector as $\boldsymbol{\hat{\xi}}=\left[\operatorname{vec}\left(\boldsymbol{\hat{W}}_{1}\right)^{\top}, \operatorname{vec}\left(\boldsymbol{\hat{W}}_{2}\right)^{\top}, \ldots, \operatorname{vec}\left(\boldsymbol{\hat{W}}_{l}\right)^{\top}\right]^{\top}$, which consists of all of the network's parameters concatenated together, where $\operatorname{vec}(\cdot)$ is the operator that vectorizes a matrix by stacking their columns together.
Similarly, the parameter vector is defined as $\boldsymbol{\xi}=\left[\operatorname{vec}\left(\boldsymbol{W}_{1}\right)^{\top}, \operatorname{vec}\left(\boldsymbol{W}_{2}\right)^{\top}, \ldots, \operatorname{vec}\left(\boldsymbol{W}_{l}\right)^{\top}\right]^{\top}$. Details regarding the back-propagation of DNNs are provided in later sections.


\subsection{Motivation}
\label{2.2}

We consider that the source of the gradient mismatch lies in a perturbation phenomenon between $\partial L /\partial \boldsymbol{w}$ and $\partial L /\partial Q(\boldsymbol{w})$ in terms of linear operators, expressed as:
\begin{equation}
	\frac{\partial L}{\partial \boldsymbol{w}}=\frac{\partial L}{\partial Q(\boldsymbol{w})} + \mathcal{P}\left(\frac{\partial L}{\partial Q(\boldsymbol{w})}\right),
	\label{pertur}
\end{equation}
where the perturbation function $\mathcal{P}$ takes the gradient $\partial L /\partial Q(\boldsymbol{w})$ as input, with $\mathcal{P}\left(\partial L /\partial Q(\boldsymbol{w})\right)$ being much smaller than $\partial L /\partial Q(\boldsymbol{w})$. In general, $\mathcal{P}\left(\partial L /\partial Q(\boldsymbol{w})\right)$ can be expressed as $\mathcal{P}\left(\partial L /\partial Q(\boldsymbol{w})\right)=o\left(\partial L /\partial Q(\boldsymbol{w})\right)$.
If the perturbation term $\mathcal{P}\left(\partial L /\partial Q(\boldsymbol{w})\right)$ can be significantly eliminated or decayed, an elegant solution to the gradient mismatch arises. Within the framework of perturbation theory in linear spaces~\citep{kato2013perturbation}, the rate of convergence for perturbations is typically expressed in fractional powers.

Inspired by the mirror descent framework\footnote{Mirror descent induces non-Euclidean structure by solving iterative optimization problems using different proximity functions. This algorithm is introduced by~\citet{nemirovsky1983informational}, and analyzed by~\citet{beck2003mirror}.}, one can map the parameter from the primal space to the dual space, and subsequently calculate the gradient in the dual space. Naturally\footnote{Natural gradient descent selects the steepest descent along a Riemannian manifold by multiplying the standard gradient by the inverse of the metric tensor~\citep{amari1998natural}.It is worth mentioning that mirror descent and natural gradient descent are proven to be equivalent~\citep{raskutti2015information}, implying that mirror descent represents the steepest descent direction along the Riemannian manifold corresponding to the choice of Bregman divergence.}, when the Riemannian metric structure is introduced by means of information geometry, the gradient mismatch is conclusively viewed as a metric perturbation in a Riemannian manifold. Specifically, we rewrite the gradient $\partial L /\partial \boldsymbol{w}$ in Euclidean space as the gradient $\tilde{\partial} L /\tilde{\partial} \boldsymbol{w}$ in a Riemannian manifold. For the sake of simplicity, we use ``$\tilde{\partial}$" to denote the derivative in a Riemannian manifold and ``$\partial$" to denote the derivative in Euclidean space. The difference between these two gradients is governed by the inverse of the corresponding metric tensor.
The problem of gradient mismatch can be further expressed as:
\begin{equation}
	\label{metripertur}
	\frac{\tilde{\partial} L}{\tilde{\partial} \boldsymbol{w}}=g^{-1}_{\boldsymbol{w}}\frac{\partial L}{\partial Q(\boldsymbol{w})},
\end{equation}
where the perturbation item is implied in the metric $g_{\boldsymbol{w}}$, with the term $\frac{\partial L}{\partial Q(\boldsymbol{w})}$ representing the gradient $\partial L(\boldsymbol{w})$ as defined in Definition~\ref{def7}. Then the metric perturbation emerges, and the perturbation at this time is referred to the deviation from the original metric. In this way, we present the generalization of STE in a Riemannian manifold, which will degenerate into the standard STE when the Riemannian metric $g$ returns to the Euclidean metric $\delta$.

\begin{definition}
	\label{def7}
	\citep{amari1998natural} The steepest descent direction of $L(\boldsymbol{w})$ in a Riemannnian manifold, i.e., the \textbf{natural gradient descent}, is given by
	\[
	\tilde{\partial} L(\boldsymbol{w})=g^{-1}_{\boldsymbol{w}} \partial L(\boldsymbol{w}),
	\]
	where $g^{-1}=(g^{ij})$ is the inverse of the metric $g=(g_{ij})$ and $\partial L(\boldsymbol{w})$ is the gradient:
	\[
	\partial L(\boldsymbol{w}) = \left[\frac{\partial L(\boldsymbol{w})}{\partial w_1},\cdots,\frac{\partial L(\boldsymbol{w})}{\partial w_n}\right]^{\top} .
	\]
\end{definition}

Subsequently, a key question arises: What kind of manifolds do we need to construct to naturally and effectively handle metric perturbations? Or, what makes a manifold ``good" in the presence of perturbations? In practice, general relativity gives an excellent example in nature of dealing with small gravitational perturbations within the framework of a Riemannian manifold~\citep{wald2010general}.
To address the approximation in scenarios where gravity is ``weak", the spacetime metric is nearly flat at this time in the context of general relativity. This approximation is sufficient for most cases, except for phenomena involving gravitational collapse and the large-scale structure of the universe. Assuming that the deviation $\gamma_{ij}$ of the actual spacetime metric $g_{ij}= \eta_{ij}+ \gamma_{ij}$ from a flat metric $\eta_{ij}$ is ``small", the linearized gravity is introduced to approximate the gravity in general relativity\footnote{Firstly, when analyzing flat spacetime corresponding to ``zero" gravity, the flat metric $\eta_{ij}$ can be employed. Secondly, when dealing with nearly flat spacetime indicative of ``weak" gravity, one can use the nearly flat metric $g_{ij}= \eta_{ij}+ \gamma_{ij}$ for analysis. In this case, $g_{ij}$ and $\eta_{ij}$ are very close, allowing the first-order Taylor expansion of this linearized form to yield sufficiently accurate results. For instance, this metric form can accurately analyze the gravity produced by celestial bodies like the Earth or the Sun. Thirdly, when faced with ``strong" gravity resulting from the large-scale structure of the universe, the linearized metric is no longer applicable due to the curved nature of spacetime.}. In this context, ``smallness" is defined such that the components of $\gamma_{ij}$ are much smaller than $1$ in the global inertial coordinate system of $\eta_{ij}$. Such a linearly nearly flat metric greatly simplifies the calculation of ``weak" gravity, and manifolds constructed with such metrics are considered sufficient for approximating the manifold with perturbations.

\begin{figure}[htbp]
	\centering
	\includegraphics[width=\textwidth]{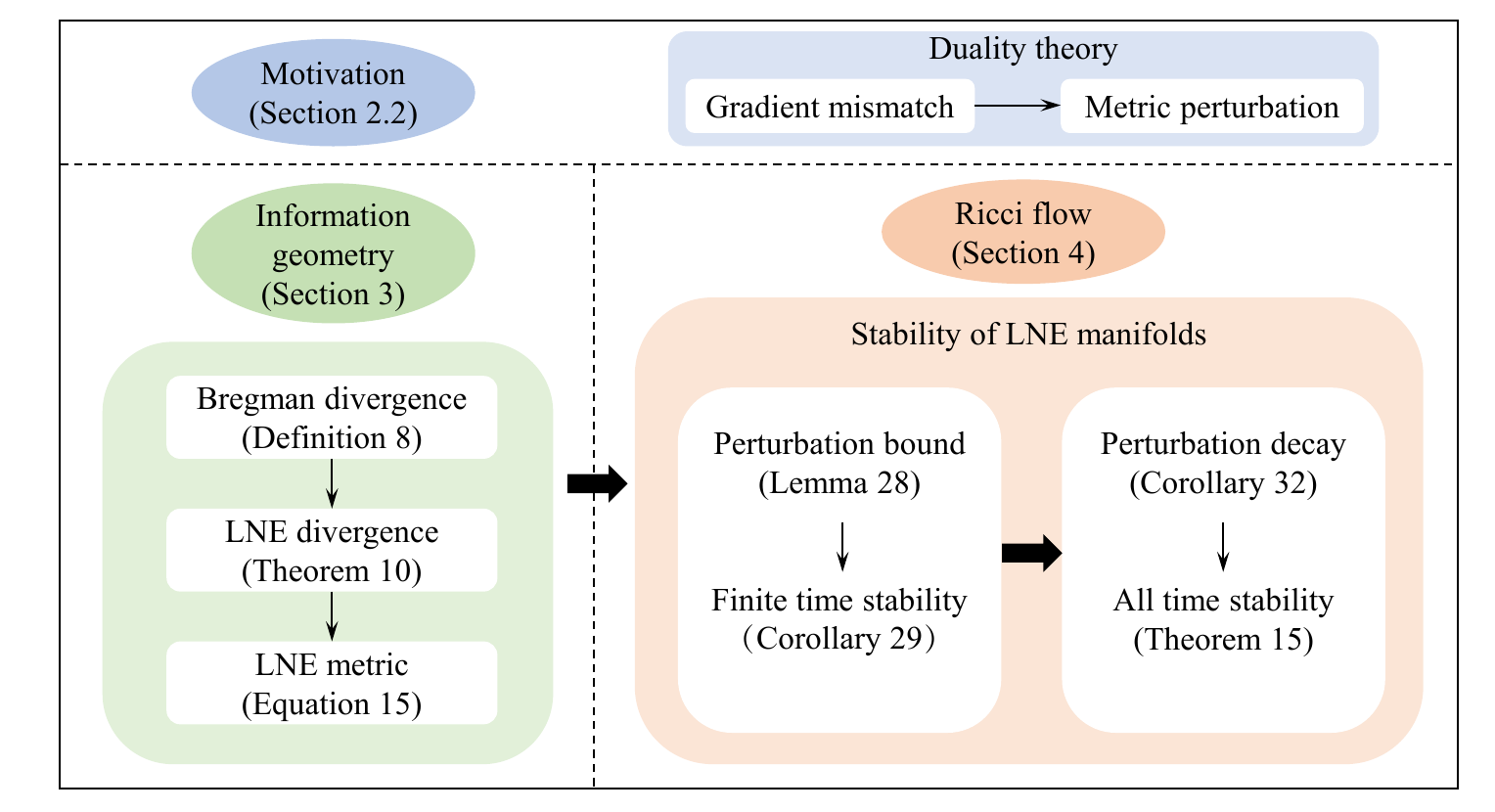}
	\caption{The overview of the theoretical ideas.}
	\label{overview}
\end{figure}

Similarly, in this paper, we define the Linearly Nearly Euclidean (LNE) metric by regarding the Euclidean metric $\delta_{ij}$ as the flat metric $\eta_{ij}$. This metric plays a crucial role in handling metric perturbations in the background of LNE manifolds for DNNs. Motivated by the natural gradient descent connecting a neural network with the Riemannian metric\footnote{In the natural gradient descent, the Riemannian metric is expressed in the form of Fisher information matrix, i.e., $g=\begin{bmatrix} E\left[\operatorname{vec}\left(\frac{\partial L}{\partial \boldsymbol{W}_1}\right) \operatorname{vec}\left(\frac{\partial L}{\partial \boldsymbol{W}_1}\right)^\top\right] & \cdots & E\left[\operatorname{vec}\left(\frac{\partial L}{\partial \boldsymbol{W}_1}\right) \operatorname{vec}\left(\frac{\partial L}{\partial \boldsymbol{W}_l}\right)^\top\right] \\ \vdots&\ddots&\vdots \\ E\left[\operatorname{vec}\left(\frac{\partial L}{\partial \boldsymbol{W}_l}\right) \operatorname{vec}\left(\frac{\partial L}{\partial \boldsymbol{W}_1}\right)^\top\right]&\cdots&E\left[\operatorname{vec}\left(\frac{\partial L}{\partial \boldsymbol{W}_l}\right) \operatorname{vec}\left(\frac{\partial L}{\partial \boldsymbol{W}_l}\right)^\top\right] \end{bmatrix}$, where Fisher information matrix is associated with the weights from a neural network~\citep{martens2015optimizing}.}, LNE metrics can be mathematically constructed in neural networks. To achieve this, our method involves introducing a convex function to derive the LNE divergence with the assistance of Bregman divergence~\citep{bregman1967relaxation}. The transition from a convex function to the LNE divergence operates within the mirror descent framework. Subsequently, the step from the LNE divergence to the LNE metric incorporates the concept of information geometry. Consequently, the LNE metric emerges in the gradient of the LNE manifold, similar to Definition~\ref{def7}. 
Finally, with the constructed manifold for DNNs in place, the remaining problem is how to efficiently decay the metric perturbation. This is achieved by employing a geometric tool, i.e., Ricci flow.

In addition,  a series of proofs about stability illustrates that the Ricci flow can decay the metric perturbation in the cases of Ricci-flat metrics. Therefore, as long as we can prove that a small perturbation of the LNE metric under the Ricci flow decays, the metric perturbation can be alleviated, providing a theoretical solution for the problem of gradient mismatch in the training of DNNs. In contrast to previous perturbation theories, where the convergence rate is in fractional powers, the metric perturbation under the Ricci flow can be exponentially decayed in the LNE manifold. Figure~\ref{overview} give an overview of the theoretical ideas to facilitate sorting out the solution steps.

\subsection{Ricci Flow}
\label{2.3}

\begin{definition}
	\label{def8}
	\citep{sheridan2006hamilton} A \textbf{Riemannian metric} on a smooth manifold $\mathcal{M}$ is a smoothly-varying inner product on the tangent space $T_p \mathcal{M}$ at each point $p \in \mathcal{M}$, i.e., a (0,2)-tensor which is symmetric and positive-definite at each point of $\mathcal{M}$. One will usually write $g$ for a Riemannian metric, and $g_{ij}$ for it coordinate representation. A manifold together with a Riemannian metric, $(\mathcal{M},g)$, is called a \textbf{Riemannian manifold}.
\end{definition}

The concept of Ricci flow was first proposed by Hamilton~\citep{hamilton1982three} on the Riemannian manifold $\mathcal{M}$, building upon Definition~\ref{def8} for a time-dependent metric $g(t)$. Given the initial metric $g_0$, the Ricci flow is described by a partial differential equation that evolves the metric tensor:
\begin{equation}
\begin{aligned}
\frac{\partial}{\partial t} g(t) &=-2 \operatorname{Ric}(g(t)) \\
g(0) &=g_{0}
\end{aligned}
\label{ricci}
\end{equation}
where $\operatorname{Ric}$ denotes the Ricci curvature tensor, with a detailed definition available in Appendix~\ref{app1}.
The purpose of the Ricci flow is to prove Thurston's Geometrization Conjecture and Poincar\'e Conjecture, guiding the evolution of the metric towards specific geometric structures and topological properties~\citep{sheridan2006hamilton}.

\begin{corollary}
	\label{def1}
	\citep{sheridan2006hamilton} The Ricci flow is \textbf{strongly parabolic} if there exists $\delta > 0$ such that for all covectors $\varphi \neq 0$ and all (symmetric\footnote{The Riemannian metric $g_{ij}$ is always symmetric based on Definition~\ref{def8}. Hence, $h_{ij}=\frac{\partial}{\partial t} g_{ij}(t)$ is required to be symmetric.}) $h_{ij}=\frac{\partial}{\partial t} g_{ij}(t) \neq 0$, the principal symbol of the differential operator $-2\operatorname{Ric}$ satisfies
	\[
	[-2\operatorname{Ric}](\varphi)(h)_{ij} h^{ij}=g^{p q}\left(\varphi_{p} \varphi_{q} h_{i j}+\varphi_{i} \varphi_{j} h_{p q}-\varphi_{q} \varphi_{i} h_{j p}-\varphi_{q} \varphi_{j} h_{i p}\right) h^{i j}>\delta \varphi_{k} \varphi^{k} h_{r s} h^{r s}
	\]
    where $h^{ij}$ is the inverse of $h_{ij}$.
\end{corollary}

\begin{theorem}
	\label{thm1}
	\citep{ladyzhenskaia1988linear} Suppose that $u(t): \mathcal{M}\times[0, T) \rightarrow \mathcal{E}$ is a time-dependent section of the vector bundle $\mathcal{E}$ where $\mathcal{M}$ is a Riemannian manifold. If the system of the Ricci flow is strongly parabolic at $u_0$ where $u_0=u(0):\mathcal{M} \rightarrow \mathcal{E}$, then there exists a unique solution on the time interval $[0, T)$.
\end{theorem}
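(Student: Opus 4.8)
The plan is to read this as the standard short-time existence and uniqueness statement for quasilinear strongly parabolic systems on a compact manifold, and then to indicate how it is actually invoked for the Ricci flow via the DeTurck trick. Write the evolution equation abstractly as $\partial_t u = F(u)$, with $F$ a second-order quasilinear differential operator on sections of $\mathcal{E}$, and let $L := DF(u_0)$ be its linearization at the initial datum. Strong parabolicity at $u_0$ means precisely that the principal symbol $\sigma(L)(\zeta)$ is negative definite for every nonzero cotangent vector $\zeta$, so that a G{\aa}rding inequality holds; this is the hypothesis that makes the whole machine run. First I would fix parabolic H\"older spaces $C^{2+\alpha,\,1+\alpha/2}(\mathcal{M}\times[0,T'])$ (one may equally use anisotropic Sobolev spaces) and recall the linear theory: for a linear strongly parabolic system the map $u\mapsto(\partial_t u - Lu,\ u|_{t=0})$ is an isomorphism onto the corresponding data spaces, with Schauder-type a priori estimates (Ladyzhenskaya--Solonnikov--Uraltseva / Eidelman / Friedman).

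Next I would solve the nonlinear problem by a contraction argument. Setting $u = u_0 + v$, the equation becomes $\partial_t v - Lv = N(v)$, where $N(v) := F(u_0+v) - F(u_0) - Lv$ collects the quadratic-and-higher remainder together with the genuinely lower-order terms, so that $N(0)=0$ and $DN(0)=0$. Define $\Phi(v) := (\partial_t - L)^{-1} N(v)$ with vanishing initial data; using the linear estimates together with the fact that $\|N(v)-N(v')\|$ on $\mathcal{M}\times[0,T']$ is bounded by $C(T')\,\|v-v'\|$ with $C(T')\to 0$ as $T'\to 0$ (the remainder is superlinear and the time slab shrinks), $\Phi$ is a contraction of a small ball of $C^{2+\alpha,1+\alpha/2}$ for $T'$ small. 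Its fixed point is the desired solution, and continuation gives a maximal interval $[0,T)$. For uniqueness, if $u_1,u_2$ are two solutions on a common interval with the same initial datum, then $w := u_1 - u_2$ solves a linear strongly parabolic system with coefficients depending on $u_1,u_2$ and with $w|_{t=0}=0$; a standard energy estimate yields $\frac{d}{dt}\|w\|_{L^2}^2 \le C\|w\|_{L^2}^2$, and Gr\"onwall forces $w\equiv 0$.

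The point I should be careful about is that the Ricci flow itself is \emph{not} strongly parabolic: the principal symbol of $g\mapsto -2\,\mathrm{Ric}(g)$ annihilates the image of $\delta_g^*$ (Lie derivatives of $g$), a degeneracy forced by diffeomorphism invariance and the contracted second Bianchi identity, so the hypothesis of the theorem fails for the bare equation. Hence one passes to the Ricci--DeTurck flow $\partial_t g = -2\,\mathrm{Ric}(g) + \mathcal{L}_{W(g,\bar g)}g$, with $W^k = g^{pq}\big(\Gamma^k_{pq}(g)-\Gamma^k_{pq}(\bar g)\big)$ for a fixed background metric $\bar g$; a symbol computation shows this modified system is strongly parabolic at any $g_0$, so the argument above gives a unique short-time solution $\tilde g(t)$. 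One then solves the ODE $\partial_t\phi_t = -W(\tilde g(t))\circ\phi_t$, $\phi_0=\mathrm{id}$, which has a unique solution since $W$ is smooth, and checks that $g(t):=\phi_t^*\tilde g(t)$ solves the genuine Ricci flow with $g(0)=g_0$; uniqueness for Ricci flow is transferred back by coupling with the harmonic-map heat flow in the manner of DeTurck and Hamilton. The main obstacle throughout is therefore this gauge degeneracy --- isolating the DeTurck vector field and verifying strong parabolicity of the modified system --- together with arranging the sharp linear parabolic estimates that make the contraction close.
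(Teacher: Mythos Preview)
Your proposal is correct and in fact goes well beyond what the paper itself provides: the paper does not prove Theorem~\ref{thm1} at all but simply cites the standard reference \citep{ladyzhenskaia1988linear} (Ladyzhenskaya--Solonnikov--Ural'tseva). Your sketch---linear Schauder theory in parabolic H\"older spaces, contraction mapping on the quadratic remainder, Gr\"onwall for uniqueness, and the DeTurck gauge-fix to repair the diffeomorphism degeneracy---is precisely the argument one finds in that reference (and in the standard expositions of Hamilton/DeTurck), so there is no genuine divergence in approach, only a difference in how much detail is written out.
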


Combined with Corollary~\ref{def1} and Theorem~\ref{thm1}, one can determine the existence of a unique solution of the Ricci flow over a short time by verifying whether it is strongly parabolic.
However, if we choose $h_{ij}=\varphi_i \varphi_j$, it is clear that the left hand side of the inequality in Corollary~\ref{def1} is $0$, thus the inequality can not hold. As a consequence, Ricci flow is not always strongly parabolic, and this lack of guarantee for the existence of a solution is highlighted by Theorem~\ref{thm1}. In the following analysis, we delve into the non-parabolic nature and find a solution based on the relationship between the Ricci flow and the Ricci-DeTurck flow. The impact of its non-parabolic nature on different parts can be understood through the linearization of the Ricci curvature tensor. We define the linearization of the Ricci curvature as $\mathcal{D}[\operatorname{Ric}]$ such that
\[
\mathcal{D}[\operatorname{Ric}]\left(\frac{\partial}{\partial t} g_{ij}(t)\right) = \frac{\partial}{\partial t} \operatorname{Ric}(g_{ij}(t)).
\]

\begin{lemma}
	\label{le1}
	The linearization of $-2\operatorname{Ric}$ can be rewritten as\footnote{In this paper, we use the Einstein summation convention (for example, $(AB)^j_i=A^k_i B^j_k$). When the same index appears twice in one term, once as an upper index and the other time as a lower index, summation is automatically taken over this index even without the summation symbol.}
	\begin{equation}
	\begin{aligned}
	&\mathcal{D}[-2 \operatorname{Ric}](h)_{i j}=g^{p q} \nabla_{p} \nabla_{q} h_{i j}+\nabla_{i} V_{j}+\nabla_{j} V_{i}+O(h_{ij}) \\
	&\operatorname{where} \;\;\;\;V_{i}=g^{p q}\left(\frac{1}{2} \nabla_{i} h_{p q}-\nabla_{q} h_{p i}\right) \operatorname{and} \;\; h_{ij}=\frac{\partial}{\partial t} g_{ij}(t).
	\end{aligned}
	\end{equation}
\end{lemma}

\begin{proof}
	The proofs can be found in Appendix~\ref{app21}.
\end{proof}

By carefully observing Lemma~\ref{le1}, the impact on the non-parabolic nature of the Ricci flow comes from the terms $V_i$ and $V_j$~\citep{sheridan2006hamilton}, rather than the term $g^{p q} \nabla_{p} \nabla_{q} h_{i j}$. On the other hand, the term $O(h_{ij})$ will have no contributions to the principal symbol of $-2 \operatorname{Ric}$, so we can ignore it in this problem. Next, we attempt to eliminate the impact of the non-parabolic nature on the Ricci flow.

Using a time-dependent diffeomorphism $\varphi(t): \mathcal{M}\rightarrow\mathcal{M}$ (with $\varphi(0)=\operatorname{id}$), the pullback metrics $g(t)$ can be expressed as
\begin{equation}
g(t)=\varphi^*(t) \bar{g}(t),
\label{pb}
\end{equation}
satisfying the Ricci flow equation, where $\varphi^*(t)$ is the pullback through $\varphi(t)$. The above formula yields the new metric $\bar{g}(t)$ via the pullback, and the terms $V_i$ and $V_j$ can be reparameterized by choosing $\varphi(t)$ to form the Ricci-DeTurck flow (w.r.t. $\bar{g}(t)$), which is strongly parabolic.
Furthermore, the solution is followed by the DeTurck Trick~\citep{deturck1983deforming}, involving a time-dependent reparameterization of the manifold:
\begin{equation}
\begin{aligned}
\frac{\partial}{\partial t}\bar{g}(t)&=-2 \operatorname{Ric}(\bar{g}(t))- \mathcal{L}_{\frac{\partial \varphi(t)}{\partial t}} \bar{g}(t) \\
\bar{g}(0) &=\bar{g}_0,
\end{aligned}
\label{deturck}
\end{equation}
See Appendix~\ref{app22} for details. Thus, the Ricci-DeTurck flow has a unique solution for a short time. For the long time behavior, please refer to Appendix~\ref{app24}.

\subsection{Literature}
\label{2.4}

For the Riemannian $n$-dimensional manifold $(\mathcal{M}^n,g)$ that is isometric to the Euclidean $n$-dimensional space $(\mathbb{R}^n,\delta)$, \citet{schnurer2007stability} showed the stability of the Euclidean space under the Ricci flow for a small $C^0$ perturbation.
\citet{koch2012geometric} demonstrated the stability of the Euclidean space along with the Ricci flow in the $L^{\infty}$-norm. Moreover, for the decay of the  $L^{\infty}$-norm on Euclidean space, \citet{appleton2018scalar} provided a proof from another idea.

On the other hand, for a Ricci-flat metric with small perturbations, \citet{guenther2002stability} proved that such metrics converge under Ricci flow.
Considering the stability of integrable and closed Ricci-flat metrics, \citet{sesum2006linear} proved that the convergence rate is exponential because the spectrum of the Lichnerowicz operator is discrete. Furthermore, \citet{deruelle2021stability} demonstrated that an asymptotically locally Euclidean Ricci-flat metric is dynamically stable under the Ricci flow, with the $L^2 \cap L^{\infty}$ perturbation on non-flat and non-compact Ricci-flat manifolds. In our work, we discuss aspects related to Ricci-flat manifolds.

\section{Neural Networks in LNE Manifolds}
\label{3}

The aim of this section is to bulid an LNE manifold via information geometry, laying the foundation for the introduction of the Ricci flow. Specifically, we first introduce a convex function (Equation~(\ref{convex1})) to derive the LNE divergence (Theorem~\ref{thm7}) with the assistance of Bregman divergence (Definition~\ref{def6}). We then construct the LNE metric (Equation~(\ref{lne})) by incorporating the LNE divergence into neural networks. Consequently, the LNE metric emerges in the steepest descent gradient (Lemma~\ref{lem5}) of the LNE manifold.
Certainly, the mirror descent algorithm can equivalently establish the link between divergences and gradients, but it lacks the geometric meaning (manifold and metric) crucial for our purposes.

\subsection{Neural Network Manifold}
\label{3.1}

A neural network is composed of a large number of neurons connected with each other. The set of all such networks forms a manifold, where the weights represented by the neuron connections can be regarded as the coordinate system.


\begin{remark}
Comparing straight lines in Euclidean space, geodesics are the straightest possible lines that we can draw in a Riemannian manifold. Given a geodesic, there exists a unique non-Euclidean coordinate system. Once a curved coordinate system is selected in a Riemannian manifold, the symmetric and positive-definite metric is also defined based on Definition~\ref{def8}. This geometry-based metric can describe the properties of manifolds, such as curvature~\citep{helgason2001differential}.
\end{remark}

\subsection{Euclidean Space and Divergence}
\label{3.2}

From the viewpoint of information geometry, the metric can be deduced by the divergence satisfying the certain criteria~\citep{basseville2013divergence}, summarized in Definition~\ref{def5}. 
\begin{definition}
	\label{def5}
	\citep{amari2016information} $D[P:Q]$ is called a \textbf{divergence} when it satisfies the following criteria:

	(1) $D[P:Q] \geq 0$,

	(2) $D[P:Q]=0$ when and only when $P=Q$,

	(3) When $P$ and $Q$ are sufficiently close to each other, and their coordinates are denoted by $\boldsymbol{\xi}_P$ and $\boldsymbol{\xi}_Q=\boldsymbol{\xi}_P+d\boldsymbol{\xi}$ respectively, the Taylor expansion of the divergence can be written as
	\begin{equation}
	D[\boldsymbol{\xi}_P:\boldsymbol{\xi}_Q]=\frac{1}{2}\sum_{i,j} g_{ij}(\boldsymbol{\xi}_P) d\xi_i d\xi_j + O(|d\boldsymbol{\xi}|^3),
  \end{equation}
	and the Riemannian metric $g_{ij}$ is symmetric and positive-definite\footnote{The components of a Riemannian metric in a coordinate basis take on the form of a symmetric and positive-definite matrix in differential geometry~\citep{helgason2001differential}.}, acting on $\boldsymbol{\xi}_P$.
\end{definition}

When $P$ and $Q$ are sufficiently close, expressed in coordinates as column vectors $\boldsymbol{\xi}_P$ and $\boldsymbol{\xi}_Q$ based on Definition~\ref{def5}, the square of an infinitesimal distance $ds^2$ between them can be defined as:
\begin{equation}
\label{ds}
  ds^2 = 2D[\boldsymbol{\xi}_P:\boldsymbol{\xi}_Q]=\sum_{i,j} g_{ij}(\boldsymbol{\xi}_P) d\xi_i d\xi_j
\end{equation}
where $d\boldsymbol{\xi}$ denotes a sufficiently small coordinate variation between the coordinates $\boldsymbol{\xi}_P$ and $\boldsymbol{\xi}_Q$. Here, we can ignore the third-order term $O(|d\boldsymbol{\xi}|^3)$ followed by~\citet{amari2016information} because the second-order approximation can give sufficiently accurate results. A manifold $\mathcal{M}$ is said to be Riemannian when a postive-definite metric $g_{ij}$ is defined on $\mathcal{M}$, and the square of the local distance between $\boldsymbol{\xi}_P$ and $\boldsymbol{\xi}_Q$ is given by Equation~(\ref{ds}). Geometrically, the divergence $D[\boldsymbol{\xi}_P:\boldsymbol{\xi}_Q]$ provides the manifold with a Riemannian structure.

Using an orthonormal Cartesian coordinate system in Euclidean space, the Euclidean divergence is defined as half of the square of the Euclidean distance between $\boldsymbol{\xi}$ and $\boldsymbol{\xi}'$
\begin{equation}
D_E [\boldsymbol{\xi}:\boldsymbol{\xi}']=\frac{1}{2}\sum_{i} (\xi_i-\xi'_i)^2.
\label{eq18}
\end{equation}
In this context, the Riemannian metric $g_{ij}$ degenerates into the Euclidean metric $\delta_{ij}$, resulting in the squared infinitesimal distance $ds^2$ expressed as:
\begin{equation}
  ds^2=2D_{E}[\boldsymbol{\xi}:\boldsymbol{\xi}+d\boldsymbol{\xi}]
  =\sum_i (d\xi)^2=\sum_{i,j} \delta_{ij} d\xi_i d\xi_j.
\end{equation}
It is worth noting that the Euclidean metric $\delta_{ij}$ is equivalent to the identity matrix $\boldsymbol{I}$, and we use the notation of metrics here for consistency with geometry theory conventions.

\subsection{LNE Manifold and Divergence}
\label{3.3}

Recall that\footnote{The link between the LNE manifold and general relativity can be found in Section~\ref{2.2}.}, in general relativity~\citep{wald2010general}, the complete Riemannian manifold $(\mathcal{M},g)$ endowed with a linearly nearly flat spacetime metric is considered to address the Newtonian limit through the linearized gravity. The form of this metric is $g_{ij}= \eta_{ij}+ \gamma_{ij}$, where $\eta_{ij}$ represents the Minkowski metric (background to special relativity in flat spacetime), and $\gamma_{ij}$ denotes small perturbations. In practice, this theory is excellent for describing small gravitational perturbations when gravity is ``weak".

Similarly, we define a metric $g_{ij}= \delta_{ij}+ \gamma_{ij}$ in a Riemannian manifold, where $\delta_{ij}$ represents a flat Euclidean metric. An adequate definition of ``smallness" in this context is that the components of $\gamma_{ij}$ are much smaller than $1$ in the global inertial coordinate system of $\delta_{ij}$.
Therefore, we can systematically develop the LNE metric to address small perturbations.

\subsubsection{Convex Function and Bregman Divergence}

To construct the LNE manifold endowed with the LNE metric in the neural network, in accordance with Definition~\ref{def5}, we introduce a divergence to express the LNE metric, drawing an analogy to the relationship between the Euclidean metric and its divergence.

\begin{figure}[thbp]
	\centering
	\includegraphics[width=.7\textwidth]{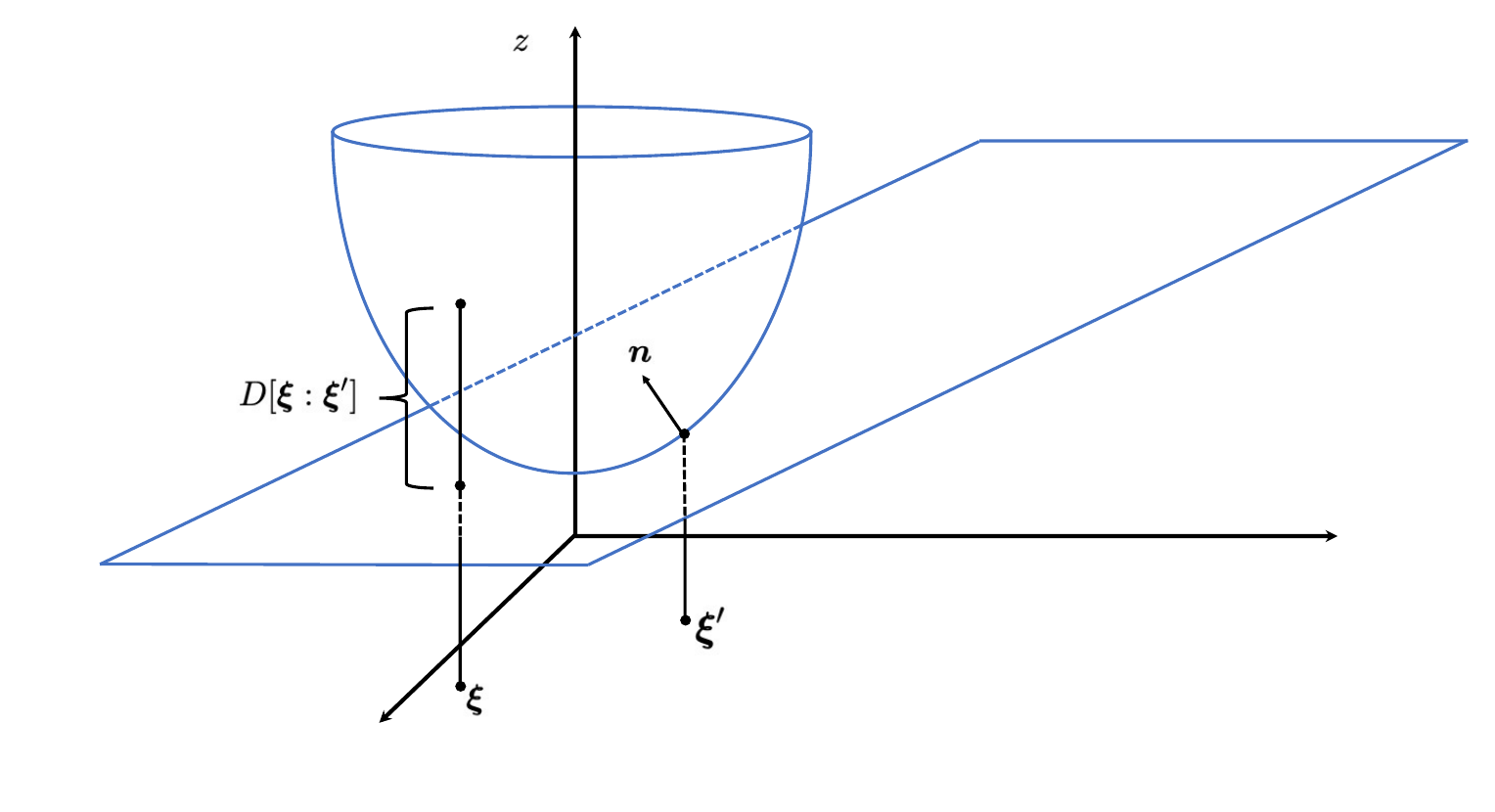}
	\caption{The divergence $D[\boldsymbol{\xi}:\boldsymbol{\xi}']$ is viewed as the distance between the convex function $\Phi(\boldsymbol{\xi})$ and its tangent hyperplane $z$, where the supporting hyperplane with normal vector $\boldsymbol{n}=\nabla \Phi(\boldsymbol{\xi}')$ at the point $\boldsymbol{\xi}'$ is defined.}
	\label{convex}
\end{figure}

The construction of a divergence relies on finding a suitable convex function~\citep{bubeck2015convex}. Here, we introduce a nonlinear function $\Phi(\boldsymbol{\xi})$ of coordinates $\boldsymbol{\xi}$ as the convex function, possessing a specific geometric structure to fulfill the requirements for constructing the LNE divergence. For a twice differentiable function, it is considered convex if and only if its Hessian is positive-definite
\[
H(\boldsymbol{\xi})=\left(\frac{\partial^2}{\partial \xi_i \partial \xi_j} \Phi(\boldsymbol{\xi})\right).
\]

\begin{definition}
	\label{def6}
	\citep{bregman1967relaxation} The \textbf{Bregman divergence} $D_B[\boldsymbol{\xi}:\boldsymbol{\xi}']$ is defined as the difference between a convex function $\Phi(\boldsymbol{\xi})$ and its tangent hyperplane $z=\Phi(\boldsymbol{\xi}')+(\boldsymbol{\xi}-\boldsymbol{\xi}') \cdot \nabla\Phi(\boldsymbol{\xi}')$, depending on the Taylor expansion at the point $\boldsymbol{\xi}'$:
	\[
	D_B[\boldsymbol{\xi}:\boldsymbol{\xi}']=\Phi(\boldsymbol{\xi})-\Phi(\boldsymbol{\xi}')-(\boldsymbol{\xi}-\boldsymbol{\xi}') \cdot \nabla\Phi(\boldsymbol{\xi}').
	\]
\end{definition}

By drawing a tangent hyperplane that touches the convex function at the point $\boldsymbol{\xi}'$
\[
z=\Phi(\boldsymbol{\xi}')+(\boldsymbol{\xi}-\boldsymbol{\xi}') \cdot\nabla \Phi(\boldsymbol{\xi}'),
\]
we can express the distance between the convex function $\Phi(\boldsymbol{\xi})$ and the tangent hyperplane $z$ as the Bregman divergence. Since $\Phi(\boldsymbol{\xi})$ is convex, the graph of $\Phi(\boldsymbol{\xi})$ is always above the tangent hyperplane, touching it at $\boldsymbol{\xi}'$. The relationship between $\Phi(\boldsymbol{\xi})$ and $z$ is illustrated in Figure~\ref{convex}, with $z$ representing the vertical axis of the graph.

\begin{remark}
We show examples of Bregman divergence~\citep{amari2016information}. For a convex function $\Phi(\boldsymbol{\xi})=1/2 \sum_i \xi^2_i$ in a Euclidean space, the Bregman divergence coincides with the Euclidean divergence, equivalently, the square of the Euclidean distance. When considering a convex function $\Phi(\boldsymbol{\xi})=-\sum_i \log \xi_i$, the Bregman divergence is equivalent to the Logarithmic divergence. For another convex function $\Phi(\boldsymbol{\xi})= \sum_i \xi_i \log \xi_i$ satisfying $\sum_i \xi_i = 1$, the Bregman divergence is the same as the KL divergence.
\end{remark}

\subsubsection{LNE Divergence and Gradient}

Similar to the Bregman divergence associated with a convex function, we aim to construct a new convex function to derive the LNE divergence, from which the LNE metric naturally emerges based on Definition~\ref{def5}.
Inspired by the work of~\citet{ajanthan2021mirror}, we propose a novel convex function that satisfies symmetry and allows the geometric construction of an easy-to-compute metric with the linearly nearly Euclidean nature:
\begin{equation}
	\Phi(\boldsymbol{\xi})=\sum_i \frac{1}{\tau^2}\log \left(\cosh(\tau\xi_i)\right)
	\label{convex1}
\end{equation}
where $\tau$ is a constant parameter controlling the linearity closeness to Euclidean structure. 

\begin{theorem}
	\label{thm7}
	By introducing a convex function $\Phi$ defined by Equation~(\ref{convex1}) into Definition~\ref{def6}, the \textbf{LNE divergence} between two points $\boldsymbol{\xi}$ and $\boldsymbol{\xi}'$ can be expressed as:
	\begin{equation}
	\begin{aligned}
	D_{LNE}[\boldsymbol{\xi}':\boldsymbol{\xi}] &=\sum_i \left[\frac{1}{\tau^2}\log\frac{\cosh(\tau\xi'_i)}{\cosh(\tau\xi_i)} - \frac{1}{\tau}(\xi'_i-\xi_i)\tanh(\tau\xi_i)\right] \\
	&\approx \frac{1}{2}\sum_{i,j} \left[\delta_{ij} -\left(\tanh(\tau\boldsymbol{\xi})\tanh(\tau\boldsymbol{\xi})^\top\right)_{ij} d\xi_i d\xi_j\right]	.
	\end{aligned}
	\end{equation}
\end{theorem}
\begin{proof}
	The detailed proofs can be found in Appendix~\ref{app31}.
\end{proof}

Combined with Definition~\ref{def5}, it is evident that the \textbf{LNE metric} corresponding to the LNE divergence is given by
\begin{equation}
	\label{lne}
	\begin{aligned}
		&g(\boldsymbol{\xi}) = \delta_{ij} - \left[\tanh(\tau\boldsymbol{\xi})\tanh(\tau\boldsymbol{\xi})^\top\right]_{ij} \\
		&=\begin{bmatrix} 1-\tanh (\tau \xi_1)\tanh (\tau\xi_1)& \cdots &-\tanh (\tau\xi_1)\tanh (\tau \xi_n)\\ \vdots&\ddots&\vdots \\
			-\tanh (\tau\xi_n)\tanh (\tau \xi_1)&\cdots&1-\tanh (\tau \xi_n)\tanh (\tau\xi_n)
		\end{bmatrix}.
	\end{aligned}
\end{equation}

Building upon the concepts introduced in Section~\ref{3.1}, we can leverage the parameters of a neural network to construct the LNE metric (with the neural network's parameter vector $\boldsymbol{\xi}$). Consequently, the neural network can be characterized within the LNE manifold, as measured by the LNE divergence based on Theorem~\ref{thm7}. The steepest descent gradient in the LNE manifold is given by Lemma~\ref{lem5}, resembling the natural gradient defined in Definition~\ref{def7}.

\begin{lemma}
	\label{lem5}
	The steepest descent gradient measured by the LNE divergence is defined as
	\begin{equation}
		\tilde{\partial}_{\boldsymbol{\xi}} =g(\boldsymbol{\xi})^{-1} \partial_{\boldsymbol{\xi}} =\left[\delta - \tanh(\tau\boldsymbol{\xi})\tanh(\tau\boldsymbol{\xi})^\top\right]^{-1} \partial_{\boldsymbol{\xi}}.
	\end{equation}
\end{lemma}
\begin{proof}
	The proofs can be found in Appendix~\ref{app32}.
\end{proof}

Within the constructed LNE manifold, the introduction of the Ricci flow facilitates the decay of metric perturbations w.r.t. the LNE metric, which will be elaborated on in the following section.

\section{Evolution of LNE Manifolds under Ricci Flow}
\label{4}

This section focuses on LNE metrics under Ricci flow, aiming to demonstrate that the evolution of LNE manifolds exhibits strong stability properties over time. Specifically, we prove that the Ricci flow exponentially decays the $L^2$-norm perturbation to the LNE metric.

\subsection{LNE Metrics and Ricci Flow}
\label{4.1}

To facilitate the handling of metric perturbations, we have presented the LNE metric $g(\boldsymbol{\xi})$ in Equation~(\ref{lne}), which takes the form of Ricci-flat metrics~\citep{guenther2002stability,deruelle2021stability}. Furthermore, the definition of the LNE metric corresponds to the linearly nearly Euclidean Ricci-flat metric as per Definition~\ref{def2}, building upon prior work of~\citet{deruelle2021stability}.
Notably, the equivalence of the LNE metric $g(\boldsymbol{\xi})$ extends to either $g_0$ under the Ricci flow or $\bar{g}_0$ under the Ricci-DeTurck flow, as they are diffeomorphic\footnote{When a Ricci flow exists, a corresponding Ricci-DeTurck flow exists, and vice versa.} to each other based on Equation~(\ref{pb}).

\begin{definition}
	\label{def2}
	A complete Riemannian $n$-manifold $(\mathcal{M}^n, \bar{g}_0)$ is said to be LNE with one end of order $\iota > 0$ if there exists a compact set $K \subset \mathcal{M}$, a radius $r$, a point $x$ in $\mathcal{M}$ and a diffeomorphism satisfying $\phi : \mathcal{M} \backslash K \rightarrow (\mathbb{R}^n \backslash B(x,r))/SO(n)$. Note that $B(x,r)$ is the ball and $SO(n)$ is a finite group acting freely on $\mathbb{R}^n \backslash \{0\}$. Then
	\begin{equation}
	\left|\partial^k(\phi_* \gamma)\right|_{\delta}=O(r^{-\iota -k}) \;\;\;  \forall k\geq0
	\end{equation}
	holds on $(\mathbb{R}^n \backslash B(x,r))/SO(n)$. The LNE metric $\bar{g}_0$ can be linearly decomposed into a form containing the Euclidean metric $\delta$ and the deviation $\gamma$:
	\begin{equation}
	\bar{g}_0=\delta+\gamma.
  \label{lnemetric}
	\end{equation}
\end{definition}

\subsection{All Time Convergence for $L^2$-norm Perturbations}
\label{4.3}

Firstly, buliding upon previous proofs~\citep{koiso1983einstein,besse2007einstein}, we can establish that the LNE manifold $(\mathcal{M}^n, g_0)$ is integral and linearly stable, as defined in Definition~\ref{def3} and Definition~\ref{def4}.

\begin{definition}
	\label{def3}
	\citep{deruelle2021stability} A complete LNE $n$-manifold $(\mathcal{M}^n, g_0)$ is said to be linearly stable if the $L^2$ spectrum of the Lichnerowicz operator $L_{g_0}:=\Delta_{g_0}+2\operatorname{Rm}(g_0)*$ is in $(-\infty,0]$ where $\Delta_{g_0}$ is the Laplacian, when $L_{g_0}$ acting on $d_{ij}$ satisfies
	\begin{equation}
	\begin{aligned}
	L_{g_0}(d)&=\Delta_{g_0}d+2\operatorname{Rm}(g_0)*d \\
	&=\Delta_{g_0}d+2\operatorname{Rm}(g_0)_{iklj}d_{mn}g_0^{km}g_0^{ln}.
	\end{aligned}
  \end{equation}
\end{definition}

\begin{definition}
	\label{def4}
	\citep{deruelle2021stability} A $n$-manifold $(\mathcal{M}^n, g_0)$ is said to be integrable if a neighbourhood of $g_0$ has a smooth structure.
\end{definition}

Due to the diffeomorphic relationship between the Ricci flow and the Ricci–DeTurck flow, we introduce a metric perturbation for the Ricci-DeTurck flow, and Equation~(\ref{deturck}) can be further reformulated as follows:
\begin{equation}
	\begin{aligned}
		\frac{\partial}{\partial t}\bar{g}(t)&=-2 \operatorname{Ric}(\bar{g}(t))- \mathcal{L}_{\frac{\partial \varphi(t)}{\partial t}} \bar{g}(t) \\
		\bar{g}(0) &=\bar{g}_0 + d
	\end{aligned}
	\label{newdeturck}
\end{equation}
where $d=\bar{g}(0)-\bar{g}_0$ is a metric perturbation deviated from the LNE metric $\bar{g}_0$.
In this way, $d(t) - d_0(t)=\bar{g}(t)-\bar{g}_0(t)$ holds because we define $d_0(t)=\bar{g}_0(t)-\bar{g}_0$.

\begin{theorem}
	\label{thm6}
	Let $(\mathcal{M}^n, \bar{g}_0)$ be the LNE $n$-manifold which is linearly stable and integrable. For any metric $\bar{g}(t) \in \mathcal{B}_{L^2}(\bar{g}_0, \epsilon_2)$ where a constant $\epsilon_2 > 0$, there is a complete Ricci–DeTurck flow $(\mathcal{M}^n, \bar{g}(t))$ starting from $\bar{g}(t)$ converging to the LNE metric
	$\bar{g}(\infty) \in \mathcal{B}_{L^2}(\bar{g}_0, \epsilon_1)$ where $\epsilon_1$ is a small enough constant.
\end{theorem}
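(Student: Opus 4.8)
The plan is to adapt the moving-reference-metric stability argument of \citep{deruelle2021stability} to the LNE setting, using the short-time weighted $L^2$ estimate of Corollary~\ref{cor2} as the seed for a bootstrap. First I would fix, for a Ricci--DeTurck flow $\bar g(t)$ starting in $\mathcal B_{L^2}(\bar g_0,\epsilon_2)$, a moving reference $\bar g_0(t)\in\tilde{\mathcal F}$ obtained as the $L^2$-nearest point of $\tilde{\mathcal F}$ to $\bar g(t)$; integrability (so that $\tilde{\mathcal F}$ is a genuine submanifold whose tangent space at $\bar g_0$ coincides with the kernel of the Lichnerowicz Laplacian $\Delta_L$) together with the implicit function theorem makes this projection well defined for $\epsilon_2$ small, and differentiating the orthogonality relation $h(t):=\bar g(t)-\bar g_0(t)\perp_{L^2}T_{\bar g_0(t)}\tilde{\mathcal F}$ along the flow yields the slow-motion bound $\|\partial_t\bar g_0(t)\|=O(\|h(t)\|^2)$ anticipated in the statement $\frac{\partial}{\partial t}\bar g_0(t)=O((\bar g(t)-\bar g_0(t))^2)$.

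Next I would linearize the Ricci--DeTurck equation about the (Ricci-flat, stationary) reference $\bar g_0(t)$: writing $\partial_t h=-\partial_t\bar g_0(t)-2\operatorname{Ric}(\bar g_0(t)+h)-\mathcal L_{\partial_t\varphi(t)}(\bar g_0(t)+h)$ and using $\operatorname{Ric}(\bar g_0(t))=0$ and $\mathcal L_{\partial_t\varphi(t)}\bar g_0(t)=0$ (both of which hold separately on $\tilde{\mathcal F}$ by the cited structure), the equation takes the schematic form $\partial_t h=-\Delta_{L,\bar g_0(t)}h+Q(h)-\partial_t\bar g_0(t)$ with $Q(h)$ quadratic and higher in $h$ and its first two derivatives. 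Linear stability of $\bar g_0$ gives non-positivity of the quadratic form of $-\Delta_L$, and because $h$ is $L^2$-orthogonal to $\ker\Delta_L=T\tilde{\mathcal F}$ one upgrades this to a coercive estimate $\langle\Delta_{L}h,h\rangle_{L^2(\kappa)}\ge c_0\|h\|_{L^2(\kappa)}^2$ in the weighted space attached to the order-$\tau$ end of Definition~\ref{def2}. I would then run the energy $\mathcal E(t)=\|h(t)\|_{H^k(\kappa)}^2$ for $k$ large enough to absorb the nonlinearity, obtaining $\frac{d}{dt}\mathcal E(t)\le -2c_0\mathcal E(t)+C\|h\|_{L^\infty}\mathcal E(t)+C\|\partial_t\bar g_0\|\,\mathcal E(t)^{1/2}\le -c_0\mathcal E(t)$ as long as $\mathcal E$ remains small, where the $L^\infty$-smallness required is produced by interior parabolic smoothing fed by the short-time bound of Corollary~\ref{cor2}.

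Finally I would close the argument by a continuity/bootstrap on the maximal existence interval $[0,T)$: the differential inequality forces $\mathcal E(t)\le\mathcal E(0)e^{-c_0t}$, hence $\int_0^T\|\partial_t\bar g_0(t)\|\,dt\le C\int_0^T\mathcal E(t)\,dt<\infty$, so $\bar g_0(t)$ is $L^2$-Cauchy and $\|\bar g(t)-\bar g_0\|_{L^2}\le\|h(t)\|+\|\bar g_0(t)-\bar g_0\|$ stays inside $\mathcal B_{L^2}(\bar g_0,\epsilon_1)$ provided $\epsilon_1$ is chosen as a fixed multiple of $\epsilon_2$; together with the curvature bounds coming from the short-time estimates this excludes a finite-time singularity, so $T=\infty$, and letting $t\to\infty$ gives $\bar g(t)\to\bar g(\infty):=\lim_{t\to\infty}\bar g_0(t)\in\tilde{\mathcal F}\cap\mathcal B_{L^2}(\bar g_0,\epsilon_1)$, a Ricci-flat and therefore LNE metric. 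The main obstacle is the weighted coercivity step: on the noncompact LNE manifold $\Delta_L$ has essential spectrum reaching $0$, so no spectral gap exists in plain $L^2$, and one must manufacture the gap through a Hardy/Poincar\'e-type inequality with the weight $\kappa$ tuned to the decay order $\tau$ of the end --- strong enough to dominate the linear part yet weak enough that the quadratic error terms $Q(h)$ still close in the same weighted norm. This balance, and its interaction with the $\mathbb R^n\setminus B/SO(n)$ geometry of the end, is the delicate analytic heart of the proof.
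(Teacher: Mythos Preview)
Your proposal shares the paper's architecture (moving Ricci-flat reference $\bar g_0(t)\in\tilde{\mathcal F}$, linearize, exploit linear stability and integrability), but the decisive coercivity step is set up differently, and in your version unnecessarily hard. The paper never attempts a weighted spectral gap $\langle -\Delta_L h,h\rangle_{L^2(\kappa)}\ge c_0\|h\|_{L^2(\kappa)}^2$ nor obtains exponential decay. Instead, Theorem~\ref{thm5} (via a Devyver-type strict positivity argument) gives only the \emph{gradient} coercivity
\[
\bigl(L_{\bar g_0}d,\,d\bigr)_{L^2}\le -\alpha_{\bar g_0}\|\nabla^{\bar g_0}d\|_{L^2}^2,
\]
and the energy identity for $\|d(t)-d_0(t)\|_{L^2}^2$ (Corollary~\ref{cor3}) yields $\partial_t\|d-d_0\|_{L^2}^2\le -C\|\nabla(d-d_0)\|_{L^2}^2$. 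Integrating in time gives $\int_0^\infty\|\nabla(d-d_0)\|_{L^2}^2\,dt<\infty$; since Lemma~\ref{lem4} bounds $\|\partial_t d_0\|_{L^2}$ by $\|\nabla(d-d_0)\|_{L^2}^2$ (note: the gradient, not $\|d-d_0\|^2$), this already makes $d_0(t)$ $L^2$-Cauchy, and a Hardy-type inequality is invoked only once at the very end to force $\|d-d_0\|_{L^2}\to 0$. Thus the ``main obstacle'' you identify --- manufacturing a spectral gap in a weighted space compatible with the nonlinear terms --- is simply bypassed: the paper trades your strong $L^2$-coercivity for the weaker Dirichlet-form bound, at the cost of losing any explicit decay rate. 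Note also that the $\kappa$ in the paper is a compactly supported cutoff used only in the short-time estimate (Corollary~\ref{cor2}), not an asymptotic weight tuned to the end; and the argument is carried out in plain $L^2$, not in a higher $H^k$.
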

\begin{proof}
The proofs can be found in Appendix~\ref{key}.
\end{proof}

According to Theorem~\ref{thm6}, the $L^2$-norm metric perturbation w.r.t. the LNE metric can be dynamically decayed by the Ricci-DeTurck flow in all time. For more details, refer to Appendix~\ref{applong} (finite-time stability in Appendix~\ref{fts} and all-time stability in Appendix~\ref{ats}). By proving the finite time existence of the Ricci-DeTurck flow with $L^2$-norm perturbations (Corollary~\ref{cor2}), we then establish the convergence of $L^2$-norm perturbations w.r.t. the LNE metric for all time under the Ricci-DeTurck flow (Theorem~\ref{thm6}).

\subsection{Perturbation Analysis}
\label{4.4}

Following the analysis in~\citep{sesum2006linear}, we further obtain $|\bar{g}(t)-\bar{g}_0(\infty)|<C e^{-\epsilon_2 t}$, indicating exponential convergence of the metric perturbation. Consequently, it also exhibits exponential convergence for $g(t)$ under the Ricci flow, assuming the existence of a solution of the Ricci flow. Recall that by reparameterizing $\bar{g}(t)$ to $g(t)=\varphi^*(t) \bar{g}(t)$ via the pullback, the perturbation entirely originates from $\bar{g}(t)$ and is independent of the time-dependent diffeomorphism $\varphi^*(t)$. 

In Section~\ref{3}, the metric $g(\boldsymbol{\xi}) = \delta_{ij} - \left[\tanh(\tau\boldsymbol{\xi})\tanh(\tau\boldsymbol{\xi})^\top\right]_{ij}$ constructed for the neural network is a kind of LNE metrics (as per Definition~\ref{def2}), thereby ensuring the perturbation for this metric undergoes exponential decay under the Ricci flow.

\section{Discretized Neural Networks in LNE Manifolds}
\label{5}

Up to this point, we have tackled the problem of gradient mismatch by constructing LNE manifolds for neural networks (Section~\ref{3}) and implementing an exponential decay mechanism for metric perturbations (Section~\ref{4}). 
However, the practical computation of the steepest descent gradient in the LNE manifold, as indicated by Lemma~\ref{lem5}, poses challenges due to the involovement of the inverse of the LNE metric. In this section, our objective is to approximate the inverse of the LNE metric and sebsequently derive the approximated gradient in the LNE manifold. This step is crucial for developing a practical algorithm to train DNNs in the LNE manifold.

\subsection{Gradient Computation in Discretized Neural Networks}
\label{5.1}

Recall that \citet{courbariaux2016binarized} applied STE to binarized neural networks, formulated as in Equation~(\ref{bste}). Subsequently, \citet{zhou2016dorefa} extended STE to arbitrary bit-width discretized neural networks. The generalized form of STE in discretized neural networks is expressed as:
\begin{equation}
	\frac{\partial L}{\partial \boldsymbol{w}}=\frac{\partial L}{\partial Q(\boldsymbol{w})}.
	\label{ste}
\end{equation}

Before introducing the LNE manifold to DNNs, a contradiction needs resolution. According to Lemma~\ref{lem5}, the LNE manifold is defined based on the parameter $\boldsymbol{\xi}$ across all layers in a neural network. However, back-propagation computes the gradient layer-by-layer, specifically on the weight $\boldsymbol{w}$ of each layer. This misalignment prevents the direct association of gradient updates with the LNE manifold. Fortunately, we can redefine the LNE manifold layer-by-layer by substituting $\boldsymbol{\xi}$ with $\boldsymbol{w}$, effectively defining the LNE manifold for each layer. Building upon Lemma~\ref{lem5}, the steepest descent gradient is then reformulated as:
\begin{equation}
	\tilde{\partial}_{\boldsymbol{w}} =g^{-1}(\boldsymbol{w}) \partial_{\boldsymbol{w}} =\left[\delta - \tanh(\tau\boldsymbol{w})\tanh(\tau\boldsymbol{w})^\top\right]^{-1} \partial_{\boldsymbol{w}},
\end{equation}
which can be used for the gradient computation in DNNs, i.e.,
\begin{equation}
	\label{match}
	\frac{\tilde{\partial} L}{\tilde{\partial} \boldsymbol{w}}=\left[\delta - \tanh(\tau\boldsymbol{w})\tanh(\tau\boldsymbol{w})^\top\right]^{-1}\frac{\partial L}{\partial Q(\boldsymbol{w})}.
\end{equation}
Furthermore, the proposed gradient, as described above, is part of our framework to address the problem of gradient mismatch, based on Equation~(\ref{metripertur}). In this context, the metric is layer-by-layer LNE.
However, the computation of the gradient involves the inverse of the LNE metric, a process that demands significant computational resources. Hence, we introduce two methods for approximating the gradient of DNNs in LNE manifolds: weak approximation and strong approximation, respectively.
The approximated gradient is defined as the direction in parameter space that maximizes the objective's variation per unit change along the layer-by-layer LNE manifold. 

\subsection{Strong Approximation}
\label{5.2}

Our objective is to approximate the inverse of the LNE metric and subsequently approximate the gradient in Equation~(\ref{match}).
Based on the universal approximation theorem~\citep{cybenko1989approximation,hornik1991approximation}, which asserts that a continuous function on compact subsets can be approximated by a neural network with a single hidden layer and a finite number of neurons~\citep{jejjala2020neural}, we introduce a Multi-Layer Perceptron (MLP) neural network, depicted in Figure~\ref{approx}, to minimize the loss function:
\begin{equation}
	\tilde{L}=\|\boldsymbol{I}-g(\boldsymbol{w})\boldsymbol{G}\|^2.
	\label{loss}
\end{equation}
For the $n \times n$ symmetric metric $g(\boldsymbol{w})$, it can be decomposed into the combination of entries $P$ and $A$, where $P$ consists of the elements of the lower triangular matrix, containing $n(n-1)/2$ real parameters, and $A$ consists of the elements of the diagonal matrix, containing $n$ real parameters. 
Therefore, the matrix $\boldsymbol{G}$ can be effectively utilized to strongly approximate the inverse of the metric $g(\boldsymbol{w})$. 

\begin{figure}[th]
	\centering
	\includegraphics[width=\textwidth]{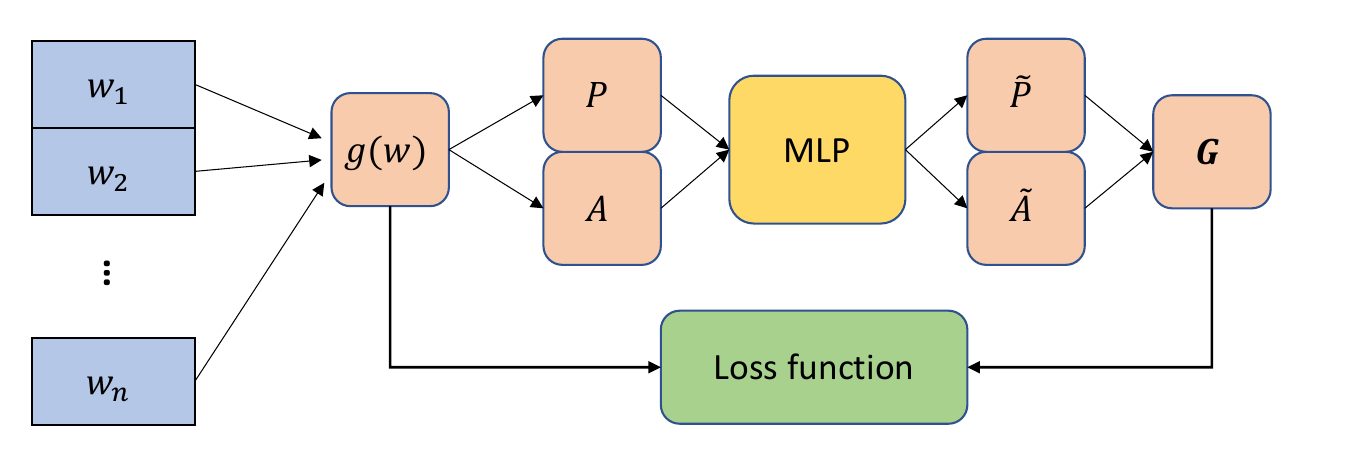}
	\caption{The flow chart of strong approximation of $g^{-1}(\boldsymbol{w})$. The new entries $\tilde{P}$ and $\tilde{A}$ generated by the neural network constitute a matrix $\boldsymbol{G}$, which is multiplied by the metric $g(\boldsymbol{w})$. As the loss function, defined by Equation~(\ref{loss}), decreases, the matrix $\boldsymbol{G}$ serves to approximate the inverse of the metric $g(\boldsymbol{w})$.}
	\label{approx}
\end{figure}

\subsection{Weak Approximation}
\label{5.3}

In this subsection, we present a method for the weak approximation of the inverse of the LNE metric with efficient calculations.

\begin{definition}
	\label{strict}
	For $\boldsymbol{A} \in \mathcal{R}^{n \times n}$, $\boldsymbol{A}$ is called \textbf{diagonally dominant} when it satisfies
	\[
	\big|a_{ii}\big| > \sum^n_{j=1,j\neq i} \big|a_{ij}\big|,\;\;\; i=1,2,\ldots,n.
	\]
\end{definition}

\begin{definition}
	\label{nonsingular}
	If $\boldsymbol{A} \in \mathcal{R}^{n \times n}$ is a diagonally dominant matrix, then $\boldsymbol{A}$ is a \textbf{nonsingular matrix} together, i.e., $\boldsymbol{A}^{-1}$ exists.
\end{definition}

By considering the properties of the LNE metric, adjusting the parameter $\tau$ allows us to easily ensure that the LNE metric $g(\boldsymbol{w})$ is diagonally dominant based on Definition~\ref{strict}. Moreover, the existence of $g^{-1}(\boldsymbol{w})$ can be guaranteed based on Definition~\ref{nonsingular}. According to Corollary~\ref{cor4}, the weak approximation of the gradient in the LNE manifold can be calculated, offering a convenient feature for accelerating the computation of the inverse.

\begin{corollary}
	\label{cor4}
	Based on Definition~\ref{strict} and Definition~\ref{nonsingular}, the weak approximation of the gradient in the LNE manifold is defined as
	\begin{equation}
	\tilde{\partial}_{\boldsymbol{w}}=\left[\delta - \tanh(\tau\boldsymbol{w})\tanh(\tau\boldsymbol{w})^\top\right]^{-1} \partial_{\boldsymbol{w}} \approx \left[\delta + \tanh(\tau\boldsymbol{w})\tanh(\tau\boldsymbol{w})^\top\right] \partial_{\boldsymbol{w}}
	\end{equation}
	if the LNE metric is diagonally dominant.
\end{corollary}
\begin{proof}
	Considering the inverse of the LNE metric, due to the diagonally dominant property in Definition~\ref{strict} and Definition~\ref{nonsingular}, we can approximate $\left[\delta - \tanh(\tau\boldsymbol{w})\tanh(\tau\boldsymbol{w})^\top\right]^{-1}$ by ignoring the fourth-order small quantity $\sum O(\rho_a \rho_b \rho_c \rho_d)$, i.e.,
	\[
	\begin{aligned}
		&\left[\delta - \tanh(\tau\boldsymbol{w})\tanh(\tau\boldsymbol{w})^\top\right]\left[\delta + \tanh(\tau\boldsymbol{w})\tanh(\tau\boldsymbol{w})^\top\right] \\
		&= \begin{bmatrix} 1-\rho_1 \rho_1 & -\rho_1 \rho_2 &\cdots\\ -\rho_2 \rho_1 & 1-\rho_2 \rho_2 &\cdots \\
			\vdots&\vdots&\ddots
		\end{bmatrix} \begin{bmatrix} 1+\rho_1 \rho_1 & \rho_1 \rho_2 &\cdots\\ \rho_2 \rho_1 & 1+\rho_2 \rho_2 &\cdots \\
			\vdots&\vdots&\ddots
		\end{bmatrix} \\
		& = \begin{bmatrix} 1-\sum O(\rho_a \rho_b \rho_c \rho_d)& \rho_1 \rho_2-\rho_1 \rho_2 -\sum O(\rho_a \rho_b \rho_c \rho_d) &\cdots\\ -\rho_2 \rho_1 + \rho_2 \rho_1 -\sum O(\rho_a \rho_b \rho_c \rho_d) & 1-\sum O(\rho_a \rho_b \rho_c \rho_d) &\cdots \\
			\vdots&\vdots&\ddots
		\end{bmatrix} \approx \boldsymbol{I}.
	\end{aligned}
	\]
The proof is completed.
\end{proof}

\subsection{Training}
\label{5.4}

\begin{algorithm}[htb]
	\caption{An algorithm for training DNNs in the LNE manifold. We denote the gradient in the LNE manifold as $\tilde{\partial}$. For brevity, we omit the normalization operation~\citep{ioffe2015batch,ba2016layer}.}
	\label{alg1}
	\begin{algorithmic}[1]
		\Require
		A minibatch of inputs and targets $(\boldsymbol{x}=\boldsymbol{a_0},\boldsymbol{y})$, $\boldsymbol{\xi}$ mapped to $\left(\boldsymbol{W}_{1}, \boldsymbol{W}_{2}, \ldots, \boldsymbol{W}_{l}\right)$, $\boldsymbol{\hat{\xi}}$ mapped to $\left(\boldsymbol{\hat{W}}_{1}, \boldsymbol{\hat{W}}_{2}, \ldots, \boldsymbol{\hat{W}}_{l}\right)$, a nonlinear function $f$, a constant factor $\tau$ and a learning rate $\eta$.
		\Ensure
		The updated discretized parameters $\boldsymbol{\hat{\xi}}$.
		\State \{Forward propagation\}
		\For{$i=1;i \le l;i++$}
		\State Discretize $\boldsymbol{\hat{W}}_{i}=Q(\boldsymbol{W}_i)$;
		\State Compute $\boldsymbol{s}_{i}= \boldsymbol{\hat{W}}_{i}\boldsymbol{\hat{a}}_{i-1}$;
		\State Discretize $\boldsymbol{\hat{a}}_{i}=Q\left(f(\boldsymbol{s}_{i}\right))$;
		\EndFor 
		\State \{Loss derivative\}
		\State Compute $L=L(\boldsymbol{y},\boldsymbol{z})$;
		\State Compute $\partial_{\boldsymbol{a}_l} L=\frac{\partial L(\boldsymbol{y},\boldsymbol{z})}{\partial \boldsymbol{z}}\big|_{\boldsymbol{z}=\boldsymbol{\hat{a}}_l}$;
		\State \{Backward propagation\}
		\For{$i=l;i \ge 1;i--$}
		\State Compute $\partial_{\boldsymbol{s}_i} L=\partial_{\boldsymbol{a}_i} L \odot f'(\boldsymbol{s}_i)$;
		\State Compute $\partial_{\boldsymbol{\hat{W}}_i} L=\left(\nabla_{\boldsymbol{s}_i} L\right) \boldsymbol{\hat{a}}_{i-1}^\top$;
		\State Compute $\tilde{\partial}_{\boldsymbol{W}_i} L=g^{-1}(\boldsymbol{W}_i)\partial_{\boldsymbol{\hat{W}}_i} L$ based on Equation~(\ref{match});
		\State Compute $\partial_{\boldsymbol{\hat{a}}_{i-1}} L=  \boldsymbol{\hat{W}}_{i}^\top \left(\partial_{\boldsymbol{s}_i} L\right)$;
		\EndFor
		\State \{The parameters update\}
		\For{$i=l;i \ge 1;i--$}
		\State Update $\boldsymbol{W}_i \leftarrow \boldsymbol{W}_i-\eta\cdot\tilde{\partial}_{\boldsymbol{W}_i} L$;
		\EndFor
		\State Update $\boldsymbol{\hat{\xi}}=\left[\operatorname{vec}\left(\boldsymbol{\hat{W}}_{1}\right)^{\top}, \operatorname{vec}\left(\boldsymbol{\hat{W}}_{2}\right)^{\top}, \ldots, \operatorname{vec}\left(\boldsymbol{\hat{W}}_{l}\right)^{\top}\right]^{\top}$;
	\end{algorithmic}
\end{algorithm}

Building upon previous work~\citep{courbariaux2016binarized}, we present a practical algorithm for training DNNs in the LNE manifold. As outlined in Algorithm~\ref{alg1}, this algorithm closely resembles the general DNN training algorithm, with the key difference lying in Line 14. Recall that, in Figure~\ref{intro}, conventional DNNs utilize STE to directly copy the gradient, i.e., $\tilde{\partial}_{\boldsymbol{W}_i} L=\partial_{\boldsymbol{\hat{W}}_i} L$. In contrast, our method involves matching the gradient by introducing the LNE metric.
Moreover, we can practically compute this gradient in Line 14 using either the strong approximation or weak approximation mentioned above.

\section{Ricci Flow Discretized Neural Networks}
\label{6}

In this section, we introduce Ricci flow discretized neural networks (RF-DNNs). The introduction of the Ricci flow implies that the background of the discussed DNNs is the LNE manifold. Our primary goal is to offer a practical solution for metric perturbations, thereby addressing the problem of gradient mismatch. Thus, we will focus on the practical calculation of discrete Ricci flow, rather than solely engaging in theoretical analysis.

To establish the connection between the Ricci flow and neural networks, we discretize the Ricci flow and select a suitable coordinate system. In Section~\ref{3}, we have established the relationship between the LNE metric and neural networks for the left-hand side of the Ricci flow. Notably, we utilize the form of the LNE metric in these calculations, and such metrics at this stage incorporate perturbations. Moving to the right-hand side of the Ricci flow, we need to compute the Ricci curvature tensor with the chosen coordinate system. This coordinate system is important for linking Ricci curvature to neural networks. Specifically, we define a method for calculating the Ricci curvature, where the selection of coordinate systems is associated with input transformations. This implies that the Ricci curvature in neural networks reflects the impact of different input transformations on the parameters.

\subsection{Ricci Curvature in Neural Networks}
\label{6.1}

Now, let's consider the Ricci curvature tensor on the Riemannian metric $g$. According to Appendix~\ref{app1}, its coordinate form can be expressed as follows:
\begin{equation}
\begin{aligned}
&-2\operatorname{Ric}(g)=-2R_{i k j}^{i}=2R_{k i j}^{i} \\
&=g^{i p}\left(\partial_{i}\partial_{k}g_{p j}-\partial_{i}\partial_{j}g_{p k}+\partial_{p}\partial_{j}g_{i k}+\partial_{p}\partial_{k}g_{i j}\right).
\end{aligned}
\end{equation}
To establish a connection between the Ricci curvature and neural networks, we introduce a method for calculating the Ricci curvature such that the selection of coordinate systems is linked to input transformations. 
When the Ricci curvature is equal to zero, it implies that different input transformations will not induce variations in the parameters.

Inspired by prior work~\citep{kaul2019riemannian}, we interpret the terms $\partial_{i}$ and $\partial_{p}$ as changes representing translation and rotation of each input, respectively.
Typically, data augmentation in real-world applications like image classification tasks~\citep{he2016deep,shorten2019survey} does not involve rotation. For the sake of fairness in ablation studies, we focus on translation by discarding the index $p$, i.e., $\partial_{p}(\partial_{j}g_{i k}+\partial_{k}g_{i j})=0$. When considering either translation or rotation, $g^{ip}$ degenerates into $\delta^{ip}$ (the identity matrix).
Consequently, $\partial_{i}\partial_{k}g$ and $\partial_{i}\partial_{j}g$ can be treated as changes representing row and column transformations of the input data w.r.t. the metric $g$, respectively. The Ricci curvature can be rewritten as:
\begin{equation}
\label{riccicur}
-2\operatorname{Ric}(g)
=\partial_{i}\partial_{k}g_{pj}-\partial_{i}\partial_{j}g_{pk}.
\end{equation}

\begin{remark} 
The selection of $i$ and $p$ (as well as $k$ and $j$) is arbitrary and can even be represented in other coordinate systems. Here, we provide a specific geometric meaning by considering the characteristic of the image classification task.
\end{remark}

\begin{figure}[thbp]
	\centering
  \subfigure[Metric structure ($g$)]
	{\includegraphics[width=.49\textwidth]{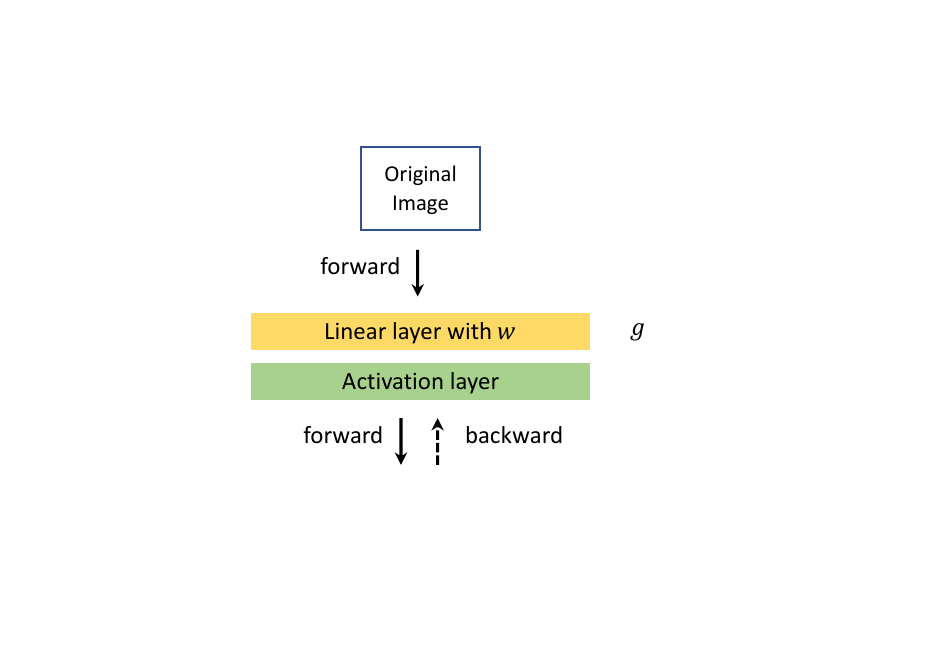}}
  \subfigure[Ricci curvature structure ($\operatorname{Ric}(g)$)]
	{\includegraphics[width=.49\textwidth]{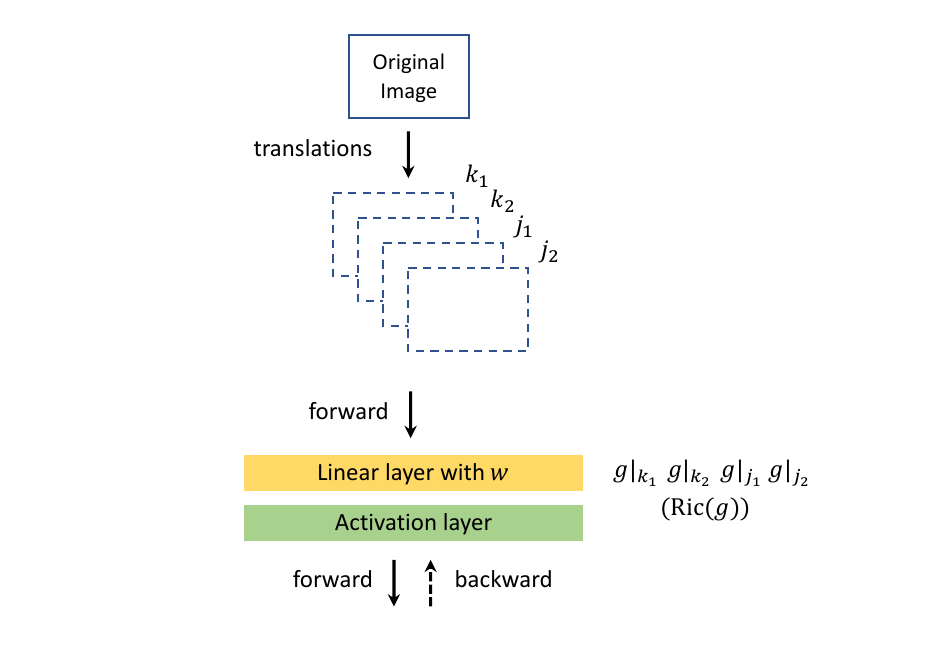}}
	\caption{Upon feeding the original image into the neural network and performing a forward and backward pass on the linear layer to update the weights $\boldsymbol{w}$, we construct the metric structure $g(\boldsymbol{w})$ based on Section~\ref{5.1}. Furthermore, we subject the original image to four distinct small translation transformations ($k_1$, $k_2$, $j_1$, and $j_2$) before inputting them into the neural network. By sequentially performing a forward and backward passes, we obtain four metric structures ($g|_{k_1}$, $g|_{k_2}$, $g|_{j_1}$, and $g|_{j_2}$) corresponding to these translations. The combination of these metrics allows us to characterize the Ricci curvature $\operatorname{Ric}(g)$.}
	\label{graph}
\end{figure}

As shown in Figure~\ref{graph} and leveraging Equation~(\ref{riccicur}), we express the Ricci curvature with coordinate systems using a difference equation:
\begin{equation}
  \label{condition}
  -2\operatorname{Ric}(g)
  =\frac{g|_{k_1} - g|_{k_2}}{k_1-k_2}-\frac{g|_{j_1} - g|_{j_2}}{j_1-j_2}
\end{equation}
where we approximate partial derivatives with difference equations~\citep{kaul2019riemannian}, i.e., $\partial_{i}\partial_{k}g=(g|_{k_1} - g|_{k_2})/(k_1-k_2)$ and $\partial_{i}\partial_{j}g=(g|_{j_1} - g|_{j_2})/(j_1-j_2)$ corresponding to the input translation dimensions $k$ and $j$, respectively.
Here, $g|_{k_1}$, $g|_{k_2}$, $g|_{j_1}$, and $g|_{j_2}$ represent four metric structures under different small translation transformations $k_1$, $k_2$, $j_1$, and $j_2$, respectively.
In general, $(k_1-k_2)$ and $(j_1-j_2)$ denote translations of fewer than 4 pixels, aligning with common data augmentation practices~\citep{he2016deep}.

\subsection{Existence of Discrete Ricci Flow in Neural Networks}
\label{6.2}

Recall that we considered the Ricci-DeTurck flow instead of the Ricci flow, as the solution of the Ricci flow does not always exist, as discussed in Section~\ref{2.3}.
Assuming that the solution of the Ricci flow exists in neural networks, we can utilize the Ricci flow to exponentially decay the metric perturbation, as explained in Section~\ref{4.4}.

In terms of the Ricci flow equation, we have previously examined the right-hand side, namely, the Ricci curvature tensor. Now, we define the equivalent form of the left-hand side of the Ricci flow using a difference equation:
\begin{equation}
    \frac{\partial}{\partial t} g(t) := g(t+1)-g(t),
    \label{left}
\end{equation}
which represents the consecutive iterations in the training process, where $t\in \{0,1,\cdots,T-1\}$ is a uniform partition of the interval $[0,T]$, with $T$ being the total number of iterations. As the number of iterations $T$ approaches infinity, the formula above holds.

Combining Equation~(\ref{condition}) and Equation~(\ref{left}) in neural networks, we present the discrete Ricci flow as a difference equation:
\begin{equation}
\begin{aligned}
    g(t+1)|_{k_1}-g(t)|_{k_1}&=\frac{g(t)|_{k_1} - g(t)|_{k_2}}{k_1-k_2} - \frac{g(t)|_{j_1} - g(t)|_{j_2}}{j_1-j_2} \\
    g(0)|_{k_1} &= \delta - \tanh(\tau\boldsymbol{w})\tanh(\tau\boldsymbol{w})^\top
\end{aligned}
\label{DRF}
\end{equation}
To ensure the existence of the solution of the discrete Ricci flow, we achieve this goal by adding a regularization term to the loss function, constraining the discrete Ricci flow in DNNs. Following Equation~(\ref{DRF}), we present the regularization term:
\begin{equation}
\label{norm}
{\mathop{N}}=\left\|g(t+1)|_{k_1}-g(t)|_{k_1}-\frac{g(t)|_{k_1} - g(t)|_{k_2}}{k_1-k_2} + \frac{g(t)|_{j_1} - g(t)|_{j_2}}{j_1-j_2}\right\|^{2}_{L^2},
\end{equation}
where $g(t)$ is $\epsilon$-close to the LNE metric $g_0$ based on Definition~\ref{eball}. In other words, $g(t)$ is the LNE metric with perturbations. 

\begin{definition}
	\label{eball}
	\citep{sheridan2006hamilton} Let $g(t)$ be the metrics on the LNE manifold. For $\epsilon > 0$, $\mathcal{B}_{L^2}(g_0, \epsilon)$ is the $\epsilon$-ball with respect to the $L^2$-norm induced by $g_0$ and centred at $g_0$, where any metric $g(t) \in \mathcal{B}_{L^2}(g_0, \epsilon)$ is $\epsilon$-close to $g_0$ if
  \[
    (1+\epsilon)^{-1} g_0 \leq g(t) \leq (1+\epsilon) g_0
  \]
  in the sense of matrices.
\end{definition}

By constraining the regularization term $N$ in DNNs, the solution of the discrete Ricci flow exists when $N \rightarrow 0$. Simultaneously, the metric perturbation exponentially converges ($g(t) \rightarrow g_0$) as the discrete Ricci flow evolves.

\subsection{Algorithm Design}
\label{6.3}

\begin{algorithm}[htb]
	\caption{An algorithm for training our RF-DNNs in the LNE manifold. We introduce a parameter $\alpha$ to balance the regularization and ensure the existence of the solution for the discrete Ricci flow. For brevity, we omit the normalization operation~\citep{ioffe2015batch,ba2016layer}.}
	\label{alg2}
	\begin{algorithmic}[1]
		\Require
		A minibatch of inputs and targets $(\boldsymbol{x}=\boldsymbol{a_0},\boldsymbol{y})$, $\boldsymbol{\xi}$ mapped to $\left(\boldsymbol{W}_{1}, \boldsymbol{W}_{2}, \ldots, \boldsymbol{W}_{l}\right)$, $\boldsymbol{\hat{\xi}}$ mapped to $\left(\boldsymbol{\hat{W}}_{1}, \boldsymbol{\hat{W}}_{2}, \ldots, \boldsymbol{\hat{W}}_{l}\right)$, a nonlinear function $f$, a constant factor $\tau$ and a learning rate $\eta$.
		\Ensure
		The updated discretized parameters $\boldsymbol{\hat{\xi}}$.
		\State \{Forward propagation\}
		\For{$i=1;i \le l;i++$}
		\State Compute $\boldsymbol{\hat{W}}_{i}=Q(\boldsymbol{W}_i)$;
		\State Compute $\boldsymbol{s}_{i}= \boldsymbol{\hat{W}}_{i}\boldsymbol{\hat{a}}_{i-1}$;
		\State Compute $\boldsymbol{\hat{a}}_{i}=Q\left(f(\boldsymbol{s}_{i}\right))$;
		\EndFor
		\State Compute the regularization term $N$ based on Equation~(\ref{norm});
		\State \{Loss derivative\}
		\State Compute $L=L(\boldsymbol{y},\boldsymbol{z})+\alpha \cdot N$;
		\State Compute $\partial_{\boldsymbol{a}_l} L=\frac{\partial L(\boldsymbol{y},\boldsymbol{z})}{\partial \boldsymbol{z}}\big|_{\boldsymbol{z}=\boldsymbol{\hat{a}}_l}$;
		\State \{Backward propagation\}
		\For{$i=l;i \ge 1;i--$}
		\State Compute $\partial_{\boldsymbol{s}_i} L=\partial_{\boldsymbol{a}_i} L \odot f'(\boldsymbol{s}_i)$;
		\State Compute $\partial_{\boldsymbol{\hat{W}}_i} L=\left(\nabla_{\boldsymbol{s}_i} L\right) \boldsymbol{\hat{a}}_{i-1}^\top$;
		\State Compute $\tilde{\partial}_{\boldsymbol{W}_i} L=g^{-1}_{\boldsymbol{W}_i}(t)\partial_{\boldsymbol{\hat{W}}_i} L$ based on Equation~(\ref{rfste});
		\State Compute $\partial_{\boldsymbol{\hat{a}}_{i-1}} L=  \boldsymbol{\hat{W}}_{i}^\top \left(\partial_{\boldsymbol{s}_i} L\right)$;
		\EndFor
		\State \{The parameters update\}
		\For{$i=l;i \ge 1;i--$}
		\State Update $\boldsymbol{W}_i \leftarrow \boldsymbol{W}_i-\eta\cdot\tilde{\partial}_{\boldsymbol{W}_i} L$;
		\EndFor
		\State Update $\boldsymbol{\hat{\xi}}=\left[\operatorname{vec}\left(\boldsymbol{\hat{W}}_{1}\right)^{\top}, \operatorname{vec}\left(\boldsymbol{\hat{W}}_{2}\right)^{\top}, \ldots, \operatorname{vec}\left(\boldsymbol{\hat{W}}_{l}\right)^{\top}\right]^{\top}$;
	\end{algorithmic}
\end{algorithm}

By imposing constraints on the discrete Ricci Flow in layer-by-layer LNE manifold, we can effectively address the problem of gradient mismatch. Given that the background is the LNE manifold, we can construct the satisfied gradient based on Equation~(\ref{match}). Note that, at this point, the metric becomes time-dependent under the Ricci flow, i.e., $g_{\boldsymbol{w}}(t)$. And we obtain the gradient under the discrete Ricci flow as follows:
\begin{equation}
  \tilde{\partial}_{\boldsymbol{w}} L = g_{\boldsymbol{w}}^{-1}(t) \partial_{Q(\boldsymbol{w})}.
\label{rfste}
\end{equation}
The overall process is shown in Algorithm~\ref{alg2}. Compared with Algorithm~\ref{alg1}, we have introduced Line 7 and Line 15. In Line 7, the regularization term is calculated to ensure the existence of the solution for the discrete Ricci flow. On the other hand, in Line 15, the gradient is computed in the LNE manifold under the discrete Ricci flow. This is in contrast to Algorithm~\ref{alg1}, which only calculates the gradient in the LNE manifold with perturbations. Applying the Ricci flow indicates that the LNE manifold at this point is dynamic and anti-perturbative.

\begin{remark}
In addition to using discretized weights and activations, DNNs need to store non-discretized weights and activations for gradient updates. It is important to note that the gradients of a DNN are non-discretized.
\end{remark}

\subsection{Complexity Analysis}
\label{6.4}

Based on Algorithm~\ref{alg2}, it is evident that the forward time complexity is approximately $\mathcal{O}(n^2)$, where the time complexities of Line 4 and Line 5 are about $\mathcal{O}(n^2)$ and $\mathcal{O}(n)$, respectively. In the backward pass, the time complexity of Line 13 is around $\mathcal{O}(n)$. Consequently, the time complexities of computing the gradients w.r.t. the weights (Line 15) and activations (Line 17) are both approximately $\mathcal{O}(n^2)$. Therefore, the backward time complexity is roughly $\mathcal{O}(2n^2)$. For the training process of a neural network, its total complexity is $\mathcal{O}(n^2)$.

Since the computation of the Ricci curvature involves four different translations of input data w.r.t. the metric, its time complexity is about $\mathcal{O}(n^2)$. In this manner, the updated weights are only used to calculate the constraints of the discrete Ricci flow, and the final weights can be obtained by a subsequent backward pass. The time complexity of Line 16 is $\mathcal{O}(n^2)$ when we use the weak approximation to calculate the gradient. Thus, the total complexity of RF-DNN remains $\mathcal{O}(n^2)$, which is consistent with that of a neural network.

\section{Experiments}
\label{7}

In this section, we conduct ablation studies to compare our RF-DNN\footnote{For convenient gradient calculation, we utilize the weak approximation of the inverse of the LNE metric in all experiments.} trained from scratch with other STE methods. Additionally, when evaluating the performance of the RF-DNN with a pre-trained model, we compare it with several representative training-based methods on classification benchmark datasets. All experiments are implemented in Python using PyTorch~\citep{paszke2019pytorch}. The hardware environment includes an Intel(R) Xeon(R) Silver 4214 CPU(2.20 GHz), GeForce GTX 2080Ti GPU, and 128GB RAM.

\subsection{Experimental Settings}
\label{7.1}

The two datasets used in our experiments are introduced as follows.

{\bf CIFAR datasets:} There are two CIFAR benchmarks~\citep{krizhevsky2009learning}, each consisting of natural color images with 32 $\times$ 32 pixels. Both datasets comprise 50k training images, 10k test images, and a validation set of 5k images selected from the training set. CIFAR-10 is organized into 10 classes, while CIFAR-100 has 100 classes. We apply a standard data augmentation scheme (random corner cropping and random flipping), widely used for these two datasets.
Images are normalized during preprocessing using the means and standard deviations of the channels.

{\bf ImageNet dataset:} The ImageNet benchmark~\citep{russakovsky2015imagenet} consists of 1.2 million high-resolution natural images, with a validation set containing 50k images. These images are organized into 1000 object categories for training and re resized to 224 $\times$ 224 pixels before fed into the network. In the subsequent experiments, we report our single-crop evaluation results using top-1 and top-5 accuracies.

We specify the discrete function, the composition of which significantly influences the performance and computation of DNNs. Specifically, the discrete function can simplify calculations, which vary depending on different discrete values, such as fixed-point multiplication, SHIFT operation~\citep{elhoushi2019deepshift}, and XNOR operation~\citep{rastegari2016xnor}, etc.

We denote $Q^1$ as the $1$-bit discrete function:
\begin{equation}
Q^1(\cdot) = \operatorname{sign}(\cdot)=\{-1,+1\}.
\label{eq42}
\end{equation}
The $k$-bit, for $k>1$, discrete function is denoted as $Q^k$:
\begin{equation}
Q^{k>1}(\cdot) = \frac{2}{2^k-1}\operatorname{round}\left[(2^k-1)\left(\frac{\cdot}{2\max \lvert\cdot\rvert}+\frac{1}{2}\right)\right] - 1
\label{eq43}
\end{equation}
where $\operatorname{round}[\cdot]$ is the rounding function and $\max \lvert\cdot\rvert$ refers to calculating the absolute value of the input first, and then finding its maximum value. In this way, a DNN using the discrete function $Q^1(\cdot)$ can be computed with the XNOR operation, while a DNN using the discrete function $Q^{k>1}(\cdot)$ can be computed with fixed-point multiplication.

\subsection{Ablation Studies with STE Methods}
\label{7.2}

To showcase the superiority of RF-DNN in addressing the problem of gradient mismatch, we compare it with three other methods by training from scratch. In Table~\ref{table1}, Table~\ref{table2}, and Table~\ref{table3}, we mark $\{-1,+1\}$ in `\textbf{Forward}' to indicate that the weights are binarized using Equation~(\ref{eq42}), i.e., $-1$ or $+1$, in the forward pass of DNNs. In the backward pass, the methods (Dorefa~\citep{zhou2016dorefa}, MultiFCG~\citep{NEURIPS2019f8e59f4b}, and FCGrad~\citep{NEURIPS2019f8e59f4b}) use different approximated gradients to update the weights. Here, we apply different ResNet models~\citep{he2016deep} for ablation studies.

\begin{table}[t]
	\caption{The experimental results on CIFAR10 with ResNet20/32/44. The accuracy of full-precision (FP) baseline is reported by \citep{NEURIPS2019f8e59f4b}.}
  \begin{center}
    \begin{tabular}{ccccc}
      \toprule[1pt]
      \textbf{Network} & \textbf{Forward} & \textbf{Backward} & \textbf{Test Acc (\%)} & \textbf{FP Acc (\%)}  \\
      \toprule[1pt]
      \multirow{3}{*}{ResNet20} & \multirow{3}{*}{\{$-1$,$+1$\}} & Dorefa & 88.28$\pm$0.81 & \multirow{3}{*}{91.50}  \\
      && MultiFCG & 88.94$\pm$0.46 &  \\
      && RF-DNN & \textbf{89.83$\pm$0.23} &  \\
      \midrule
      \multirow{3}{*}{ResNet32} & \multirow{3}{*}{\{$-1$,$+1$\}} & Dorefa & 90.23$\pm$0.63 & \multirow{3}{*}{92.13}  \\
      && MultiFCG & 89.63$\pm$0.38 &  \\
      && RF-DNN & \textbf{90.75$\pm$0.19} &  \\
      \midrule
      \multirow{3}{*}{ResNet44} & \multirow{3}{*}{\{$-1$,$+1$\}} & Dorefa & 90.71$\pm$0.58 & \multirow{3}{*}{93.56} \\
      && MultiFCG & 90.54$\pm$0.21 & \\
      && RF-DNN & \textbf{91.63$\pm$0.11} & \\
      \bottomrule[1pt]
    \end{tabular}
  \end{center}
  \label{table1}
\end{table}

\begin{table}[thbp]
	\caption{The experimental results on CIFAR100 with ResNet56/110. The accuracy of full-precision (FP) baseline is reported by \citep{NEURIPS2019f8e59f4b}.}
  \begin{center}
    \begin{tabular}{ccccc}
      \toprule[1pt]
      \textbf{Network} & \textbf{Forward} & \textbf{Backward} & \textbf{Test Acc (\%)} & \textbf{FP Acc (\%)}  \\
      \toprule[1pt]
      \multirow{4}{*}{ResNet56} & \multirow{4}{*}{\{$-1$,$+1$\}} & Dorefa & 66.71$\pm$2.32 & \multirow{4}{*}{71.22}  \\
      && MultiFCG & 66.58$\pm$0.37 &  \\
      && FCGrad & 66.56$\pm$0.35 &  \\
      && RF-DNN & \textbf{68.56$\pm$0.32} &  \\
      \midrule
      \multirow{4}{*}{ResNet110} & \multirow{4}{*}{\{$-1$,$+1$\}} & Dorefa & 68.15$\pm$0.50 & \multirow{4}{*}{72.54}  \\
      && MultiFCG & 68.27$\pm$0.14 &  \\
      && FCGrad & 68.74$\pm$0.36 &  \\
      && RF-DNN & \textbf{69.20$\pm$0.28} &  \\
      \bottomrule[1pt]
    \end{tabular}
  \end{center}
  \label{table2}
\end{table}
\begin{table}[t]
	\caption{The experimental results on ImageNet with ResNet18. The accuracy of full-precision (FP) baseline is reported by \citep{NEURIPS2019f8e59f4b}.}
  \begin{center}
    \begin{tabular}{ccccc}
      \toprule[1pt]
      \textbf{Network} & \textbf{Forward} & \textbf{Backward} & \textbf{Test Top1/Top5 (\%)} & \textbf{FP Top1/Top5 (\%)}  \\
      \toprule[1pt]
      \multirow{4}{*}{ResNet18} & \multirow{4}{*}{\{$-1$,$+1$\}} & Dorefa & 58.34$\pm$2.07/81.47$\pm$1.56 & \multirow{4}{*}{69.76/89.08}  \\
      && MultiFCG & 59.47$\pm$0.02/82.41$\pm$0.01 &  \\
      && FCGrad & 59.83$\pm$0.36/82.67$\pm$0.23 &  \\
      && RF-DNN & \textbf{60.83$\pm$0.41/83.54$\pm$0.18} &  \\
      \bottomrule[1pt]
    \end{tabular}
  \end{center}
  \label{table3}
\end{table}

Batch normalization with a batch size of 128 is employed in the learning strategy, and Nesterov momentum of 0.9~\citep{dozat2016incorporating} is used in SGD optimization. For CIFAR, we set the total training epochs to 200 and a weight decay of 0.0005. The learning rate is reduced by a factor of 10 at epoch 80, 150, and 190, starting with an initial value of 0.1. For ImageNet, we set the total training epochs to 100 and use a cosine annealing schedule for the learning rate of each parameter group with a weight decay of 0.0001. All experiments are conducted 5 times, and the statistics of the test accuracies from the last 10/5 epochs are reported for a fair comparison. Hence, we evaluate the accuracy performance in terms of (mean $\pm$ std). Note that we perform standard data augmentation and pre-processing on CIFAR and ImageNet datasets.

In Table~\ref{table1}, Table~\ref{table2}, and Table~\ref{table3}, we use the same the discrete function $Q^1(\cdot)$, parameter settings, and optimizer for fairness in the forward pass. The only difference is the gradient in the backward propagation.
The performance across various models and datasets demonstrates that RF-DNN exhibits significant improvement over other STE methods. The average results of multiple experiments surpass those of other methods, which likely benefit from the alleviation of the gradient mismatch, making the loss function of DNNs more fully descended. Additionally, the minor variances indicate that our training method is relatively stable such that confirming our point of view.

\subsection{Convergence and Stability Analysis}
\label{7.3}

\begin{figure}[htbp]
	\centering
  \subfigure[Comparison of different methods]
	{\includegraphics[width=.49\textwidth]{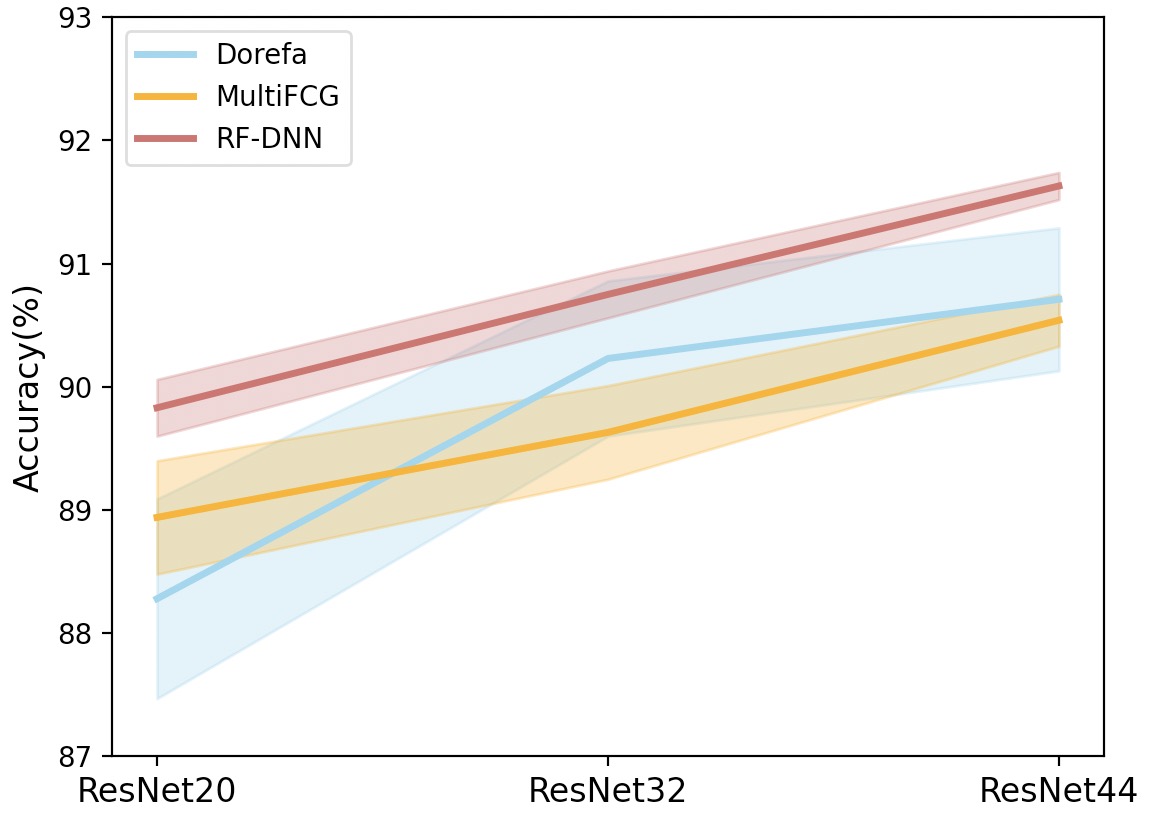}\label{visualizationa}}
  \subfigure[Comparison of different bit widths]
	{\includegraphics[width=.49\textwidth]{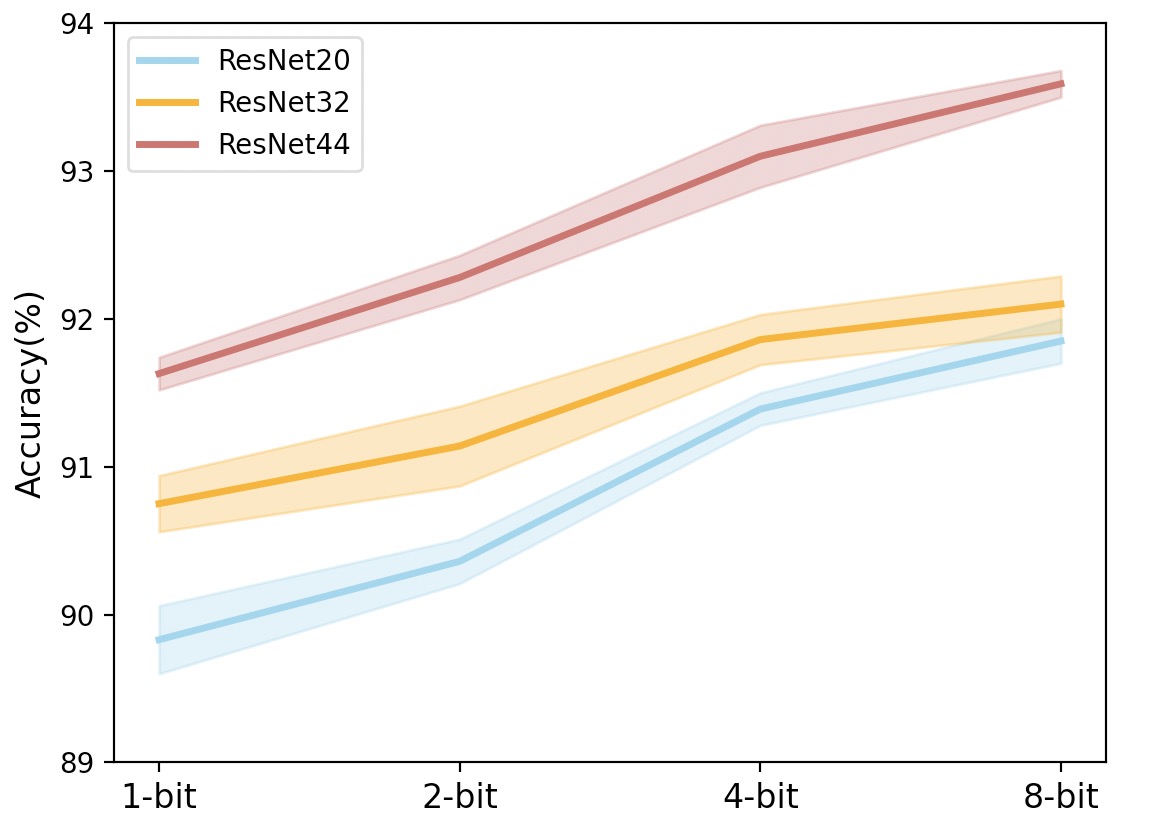}\label{visualizationb}}
	\caption{Accuracy performance (mean $\pm$ std) for ResNet20/32/44 on CIFAR10. The lines and bars represent the mean and standard deviation of the results from different random seeds, respectively. (a) We compare RF-DNN with Dorefa and MultiFCG using 1-bit weight representation, also visualized in Table~\ref{table1}. (b) RF-DNN is presented with different bit-width weight representations. Note that a higher mean and lower deviation typically imply better convergence and stability.}
	\label{visualization}
\end{figure}

\begin{figure}[thbp]
	\centering
  \subfigure[ResNet56]
	{\includegraphics[width=.49\textwidth]{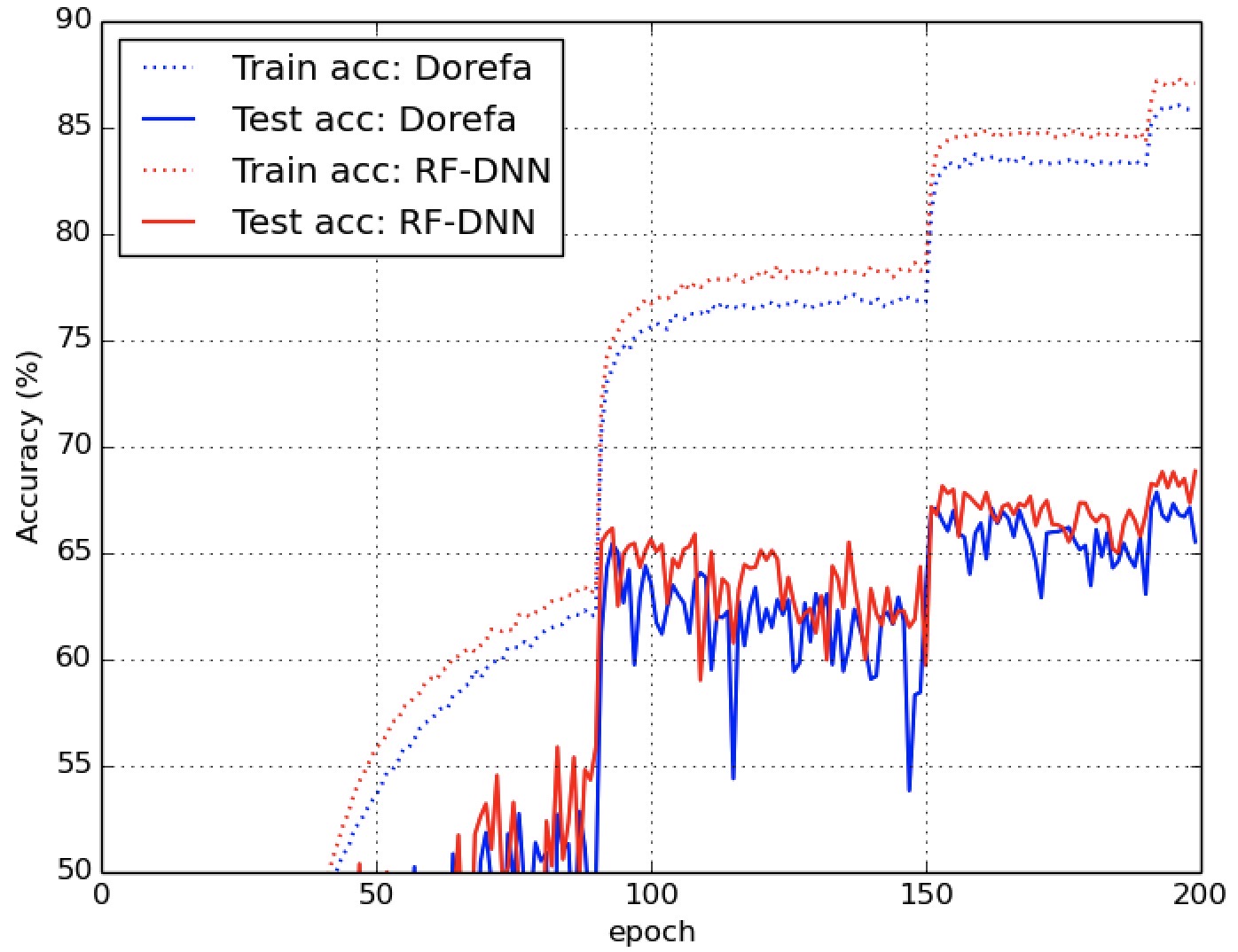}}
  \subfigure[ResNet110]
	{\includegraphics[width=.49\textwidth]{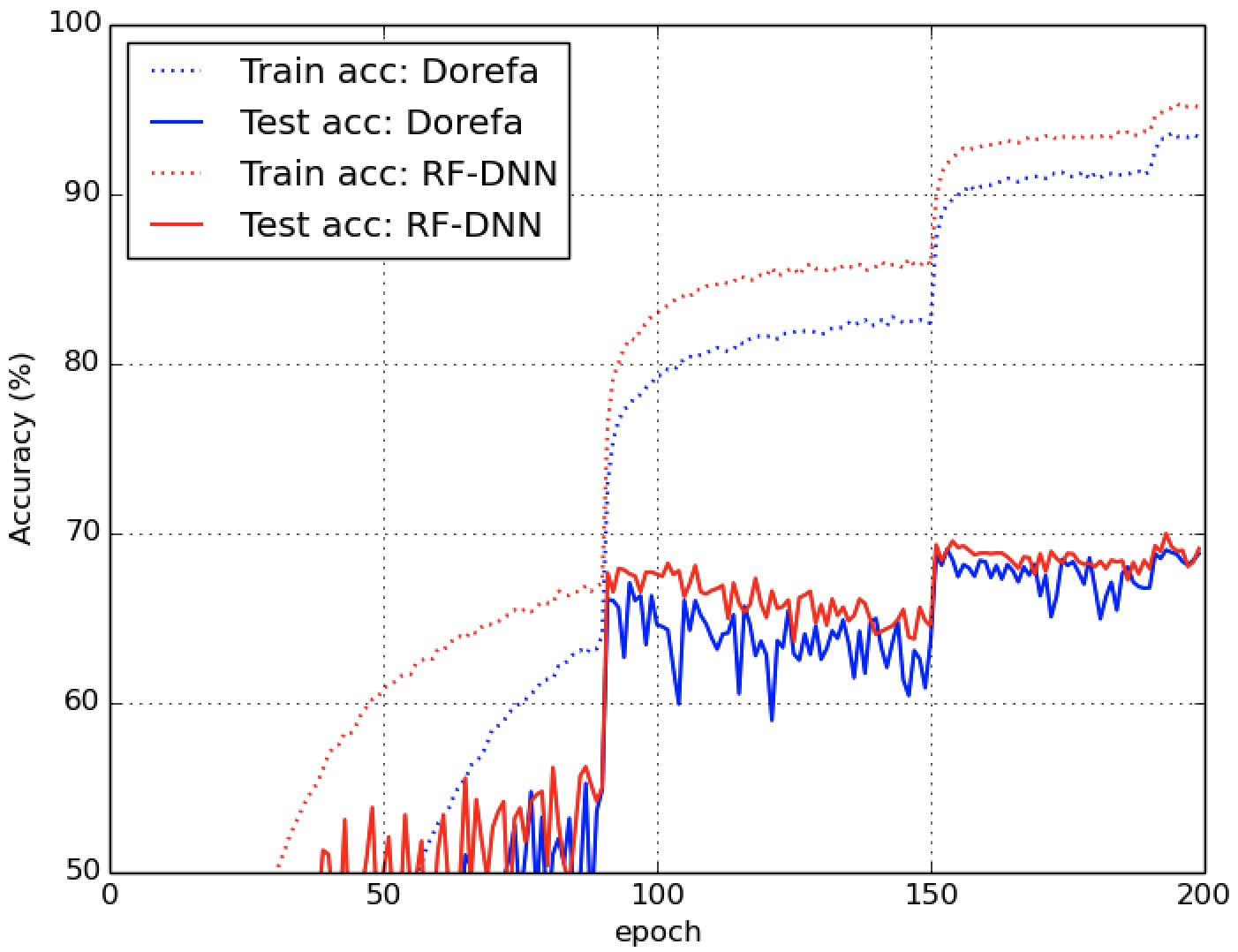}}
	\caption{Training and test curves of ResNet56/110 on CIFAR100 compared between Dorefa and RF-DNN. Intuitively, RF-DNN exhibits more stable training performance than Dorefa.}
	\label{convergence}
\end{figure}

\begin{table}[ht]
	\caption{The classification accuracy results on ImageNet are compared with other training-based methods, including AlexNet~\citep{krizhevsky2012imagenet}, ResNet18, ResNet50 and MobileNet~\citep{howard2017mobilenets}. Note that the accuracy of full-precision baseline is reported by \citet{elhoushi2019deepshift}.}
  \begin{center}
    \begin{tabular}{lcccccc}
      \toprule[1pt]
      \multirow{2}{*}{\textbf{Method}} & \multirow{2}{*}{\textbf{W}} & \multirow{2}{*}{\textbf{A}} & \multicolumn{2}{c}{\textbf{Top-1}} & \multicolumn{2}{c}{\textbf{Top-5}} \\
      \cmidrule{4-7}
      & & & Accuracy & Gap & Accuracy & Gap \\

      \toprule[1pt]
      \textbf{AlexNet} (Original) & 32 & 32 & 56.52\% & - & 79.07\% & - \\
      \midrule
      RF-DNN (ours) & 6 & 32 & \textbf{56.39\%} & \textbf{$-$0.13\%} & \textbf{78.78\%} & \textbf{$-$0.29\%} \\
      DeepShift~\citep{elhoushi2019deepshift} & 6 & 32 & 54.97\% & $-$1.55\% & 78.26\% & $-$0.81\% \\

      \toprule[1pt]
      \textbf{ResNet18} (Original) & 32 & 32 & 69.76\% & - & 89.08\% & - \\
      \midrule
			RF-DNN (ours) & 1 & 32 & \textbf{67.05\%} & \textbf{$-$2.71\%} & \textbf{88.09\%} & \textbf{$-$0.99\%} \\
	  MD~\citep{ajanthan2021mirror} & 1 & 32 & 66.78\% & $-$2.98\% & 87.01\% & $-$2.07\% \\
      ELQ~\citep{zhou2018explicit} & 1 & 32 & 66.21\% & $-$3.55\% & 86.43\% & $-$2.65\% \\
      ADMM~\citep{leng2018extremely} & 1 & 32 & 64.80\% & $-$4.96\% & 86.20\% & $-$2.88\% \\
      QN~\citep{Yang2019CVPR} & 1 & 32 & 66.50\% & $-$3.26\% & 87.30\% & $-$1.78\% \\
      MetaQuant~\citep{NEURIPS2019f8e59f4b} & 1 & 32 & 63.44\% & $-$6.32\% & 84.77\% & $-$4.31\% \\
      RF-DNN (ours) & 4 & 4 & \textbf{66.75\%} & \textbf{$-$3.01\%} & \textbf{87.02\%} & \textbf{$-$2.06\%} \\
      RQ ST~\citep{louizos2018relaxed} & 4 & 4 & 62.46\% & $-$7.30\% & 84.78\% & $-$4.30\% \\

      \toprule[1pt]
      \textbf{ResNet50} (Original) & 32 & 32 & 76.13\% & - & 92.86\% & - \\
      \midrule
			RF-DNN (ours) & 8 & 8 & \textbf{76.07\%} & \textbf{$-$0.06\%} & \textbf{92.87\%} & \textbf{$+$0.01\%} \\
      INT8~\citep{zhu2020towards} & 8 & 8 & 75.87\% & $-$0.26\% & - & - \\

      \toprule[1pt]
      \textbf{MobileNet} (Original) & 32 & 32 & 70.61\% & - & 89.47\% & - \\
      \midrule
      RF-DNN (ours) & 5 & 5 & \textbf{61.32\%} & \textbf{$-$9.29\%} & \textbf{84.08\%} & \textbf{$-$5.39\%} \\
			SR+DR~\citep{gysel2018ristretto} & 5 & 5 & 59.39\% & $-$11.22\% & 82.35\% & $-$7.12\% \\
			RQ ST~\citep{louizos2018relaxed} & 5 & 5 & 56.85\% & $-$13.76\% & 80.35\% & $-$9.12\% \\
      RF-DNN (ours) & 8 & 8 & \textbf{70.76\%} & \textbf{$+$0.15\%} & \textbf{89.54\%} & \textbf{$+$0.07\%} \\
      RQ~\citep{louizos2018relaxed} & 8 & 8 & 70.43\% & $-$0.18\% & 89.42\% & $-$0.05\% \\
      \bottomrule[1pt]
    \end{tabular}
  \end{center}
  \label{table4}
\end{table}

Since standard deviations can reflect the convergence and stability of training to a certain extent, we visualize the data from Table~\ref{table1} in Figure~\ref{visualizationa}. Intuitively, when compared to Dorefa and MultiFCG, our proposed RF-DNN better alleviates perturbations caused by gradient mismatch, leading to more stable performance. Furthermore, we present the accuracy performance of RF-DNN with different bit width weight representations in Figure~\ref{visualizationb}. We observe fairly consistent stability across different bit widths and backbone models.

As depicted in Figure~\ref{convergence}, RF-DNN achieves higher accuracies than Dorefa on CIFAR100 dataset, i.e., 1.25\% higher on the training dataset with ResNet56, 1.85\% higher on the test dataset with ResNet56, 1.97\% higher on the training dataset with ResNet110, and 1.05\% higher on the test dataset with ResNet110. Additionally, the fluctuation of the test curves in Figure~\ref{convergence} indicates that RF-DNN shows tremendous improvement compared to Dorefa in terms of training stability.
From the training curve, our method significantly outperforms Dorefa. However, this superiority needs to be considered in conjunction with the test curve. The accuracy of our method is consistently higher than that of Dorefa in the test curve, thereby indicating an improvement in stability.
The experimental results verify that our theoretical framework is an effective solution against gradient mismatch, further enhancing the training performance of DNNs.



\subsection{Comparisons with Training-based Methods}
\label{7.5}

Here, we compare RF-DNN with several state-of-the-art DNNs, such as DeepShift~\citep{elhoushi2019deepshift}, QN~\citep{Yang2019CVPR}, ADMM~\citep{leng2018extremely}, MetaQuant~\citep{NEURIPS2019f8e59f4b}, INT8~\citep{zhu2020towards}, SR+DR~\citep{gysel2018ristretto}, ELQ~\citep{zhou2018explicit}, MD~\citep{ajanthan2021mirror}, and RQ~\citep{louizos2018relaxed}, all under the same bit width using Equation~(\ref{eq42}) or Equation~(\ref{eq43}).
Note that $\textbf{W}$ and $\textbf{A}$ represent the bit width of weights and activations, respectively, in Table~\ref{table4}.
The experimental results demonstrate that RF-DNN outperforms other recent state-of-the-art training-based methods, which appears to be attributed to our effective solution for addressing gradient mismatch.

\section{Conclusion and Future Work}
\label{8}

Traditional discretized neural networks (DNNs) suggest that both weights and activations can only take low-precision discrete values, reducing the memory footprint compared to full-precision floating-point networks. However, training such networks becomes challenging due to the need to maintain discrete weights. Generally, the gradient w.r.t. discrete weights is approximated using the Straight-Through Estimator (STE), resulting in a \emph{gradient mismatch} compared to the gradient w.r.t. continuous weights.

This paper introduces a novel analysis of the gradient mismatch phenomenon through the lens of duality theory. The mismatch is interpreted as metric perturbations in a Riemannian manifold. Theoretical insights, rooted in information geometry, lead to the construction of the LNE manifold for neural networks. This manifold forms the background to effectively address metric perturbations.
The stability of LNE metrics with the $L^2$-norm perturbation under the Ricci-DeTurck flow is revealed, paving the way for practical introduction of the Ricci flow Discretized Neural Network (RF-DNN). The constraints of the discrete Ricci flow in the LNE manifold are used to alleviate metric perturbations, achieving an exponential convergence rate and providing a compelling solution for DNNs.
Experimental results demonstrate improvements in both the stability and performance of DNNs.

In this paper, information geometry plays a crucial role in combining geometric tool (Ricci flow) with neural networks. For future research, we aim to further explore the connection between neural networks and manifolds, leveraging geometric ideas to address practical challenges in deep learning.


\acks{We thank all reviewers and the editor for excellent contributions.}

\newpage
\appendix
\section{Differential Geometry}
\label{app1}
1. Riemann curvature tensor (Rm) is a (1,3)-tensor defined for a 1-form $\omega$:
\[
R^l_{ijk}\omega_l=\nabla_i \nabla_j \omega_k - \nabla_j \nabla_i \omega_k
\]
where the covariant derivative of $F$ satisfies
\[
\nabla_{p} F_{i_{1} \ldots i_{k}}^{j_{1} \ldots j_{l}}=\partial_{p} F_{i_{1} \ldots i_{k}}^{j_{1} \ldots j_{l}}+\sum_{s=1}^{l} F_{i_{1} \ldots i_{k}}^{j_{1} \ldots q \ldots j_{l}} \Gamma_{p q}^{j_{s}}-\sum_{s=1}^{k} F_{i_{1} \ldots q_{\ldots} i_{k}}^{j_{1} \ldots j_{l}} \Gamma_{p i_{s}}^{q}.
\]
In particular, coordinate form of the Riemann curvature tensor is:
\[
R_{i j k}^{l}=\partial_{i} \Gamma_{j k}^{l}-\partial_{j} \Gamma_{i k}^{l}+\Gamma_{j k}^{p} \Gamma_{i p}^{l}-\Gamma_{i k}^{p} \Gamma_{j p}^{l}.
\]
2. Christoffel symbol in terms of an ordinary derivative operator is:
\[
\Gamma^k_{ij}=\frac{1}{2}g^{kl}(\partial_i g_{jl}+\partial_j g_{il}-\partial_l g_{ij}).
\]
3. Ricci curvature tensor (Ric) is a (0,2)-tensor:
\[
R_{ij}=R^p_{pij}.
\]
4. Scalar curvature is the trace of the Ricci curvature tensor:
\[
R=g^{ij}R_{ij}.
\]
5. Lie derivative of $F$ in the direction $\frac{d \varphi(t)}{dt}$:
\[
\mathcal{L}_{\frac{d \varphi(t)}{dt}} F=\left(\frac{d}{dt}\varphi^*(t) F\right)_{t=0}
\]
where $\varphi(t): \mathcal{M} \rightarrow \mathcal{M}$ for $t\in(-\epsilon,\epsilon)$ is a time-dependent diffeomorphism of $\mathcal{M}$ to $\mathcal{M}$.

\section{Notation}
\label{appnotation}

For clarity of definitions in this paper, we list the important notations as shown in Table~\ref{tabnotation}.

\begin{table}[th]
  \begin{center}
    \caption{Definitions of notations}
    \begin{tabular}{|ll|ll|}
    \hline
    $\boldsymbol{W}_i$: & \makecell[l]{weight matrix \\ for the $i$-th layer} & $\boldsymbol{\hat{W}}_i$: & \makecell[l]{discretized weight matrix \\ for the $i$-th layer} \\
    \hline
    $\boldsymbol{w}$: & \makecell[l]{vectorized weights \\ in each layer} & $\boldsymbol{\hat{w}}$: & \makecell[l]{discretized vectorized weights \\ in each layer} \\
    \hline
    $\boldsymbol{a}_i$: & \makecell[l]{activation vector \\ for the $i$-th layer} & $\boldsymbol{\hat{a}}_i$: & \makecell[l]{discretized activation vector \\ for the $i$-th layer} \\
    \hline
    $\boldsymbol{\xi}$: & parameter vector & $\boldsymbol{\hat{\xi}}$: & discretized parameter vector \\
    \hline
    $Q^1$: & 1-bit discrete function & $Q^{k>1}$: & \makecell[l]{k-bit discrete function \\ (over 1-bit)} \\
    \hline
    $\delta$: & \makecell[l]{Euclidean metric \\ (identity matrix)} & $\Phi$: & convex function \\
    \hline
    $g_0$: & LNE metric under Ricci flow & $\bar{g}_0$: & \makecell[l]{LNE metric \\ under Ricci-DeTurck flow} \\
    \hline
    $g$ or $g(t)$: & \makecell[l]{the metrics under Ricci flow} & $\bar{g}$ or $\bar{g}(t)$: & \makecell[l]{the metrics \\ under Ricci-DeTurck flow} \\
    \hline
    $g(0)$: & initial metric under Ricci flow & $\bar{g}(0)$: & \makecell[l]{initial metric \\ under Ricci-DeTurck flow} \\
    \hline
    $d(0)$: & the initial perturbation & $d(t)$: & \makecell[l]{the time-evolving \\ perturbation} \\
    \hline
    $D$: & divergence & $L$ or $\tilde{L}$: & loss function \\
    \hline
    $L_{g_0}$: & Lichnerowicz operator & $L^2$ or $L^\infty$: & norm \\
    \hline
    $\partial$: & partial derivative & $\nabla$: & covariant derivative \\
    \hline
    $\mathcal{L}$: & Lie derivative & $\Delta_{g_0}$: & the Laplacian \\
    \hline
    $\operatorname{Rm}$: & Riemann curvature tensor & $f$: & nonlinear function \\
    \hline
    $\operatorname{Ric}$: & Ricci curvature tensor & $\mathcal{D}[\operatorname{Ric}]$: & \makecell[l]{the linearization of \\ the Ricci curvature tensor} \\
    \hline
    $\varphi^*$: & pullback & $\phi_*$: & pushforward \\
    \hline
     $B(x,r)$: & \makecell[l]{the ball with a radius $r$ \\ and a point $x \in \mathcal{M}$} & $\mathcal{B}_{L^2}(\bar{g}_0, \epsilon)$: & \makecell[l]{the $\epsilon$-ball with respect to \\ the $L^2$-norm induced by $\bar{g}_0$ \\ and centred at $\bar{g}_0$}\\
    \hline
    \end{tabular}
    \label{tabnotation}
  \end{center}
\end{table}

\section{Proof of the Ricci Flow}
\label{app2}

\subsection{Proof of Lemma~\ref{le1}}
\label{app21}

\begin{lemma}
	\label{linear}
	The linearization of the Ricci curvature tensor is given by
	\[
	\mathcal{D}[\operatorname{Ric}](h)_{i j}=-\frac{1}{2} g^{p q}(\nabla_{p} \nabla_{q} h_{i j}+\nabla_{i} \nabla_{j} h_{p q}-\nabla_{q} \nabla_{i} h_{jp}-\nabla_{q} \nabla_{j} h_{i p}).
	\]
\end{lemma}
\begin{proof}
	Based on Appendix~\ref{app1}, we have
	\[
	\nabla_{q} \nabla_{i} h_{j p} =\nabla_{i} \nabla_{q} h_{j p}-R_{q i j}^{r} h_{r p}-R_{q i p}^{r} h_{j m}.
	\]
	Combining with Lemma~\ref{linear}, we can obtain the deformation equation because of $\nabla g=0$,
	\[
	\begin{aligned}
	\mathcal{D}[-2 \mathrm{Ric}](h)_{i j}=& g^{p q} \nabla_{p} \nabla_{q} h_{i j}+\nabla_{i}\left(\frac{1}{2} \nabla_{j} h_{p q}-\nabla_{q} h_{j p}\right)+\nabla_{j}\left(\frac{1}{2} \nabla_{i} h_{p q}-\nabla_{q} h_{i p}\right)+O(h_{ij}) \\
	=& g^{p q} \nabla_{p} \nabla_{q} h_{i j}+\nabla_{i} V_{j}+\nabla_{j} V_{i}+O(h_{ij}).
	\end{aligned}
	\]
 The proof is completed.
\end{proof}

\subsection{Description of the DeTurck Trick}
\label{app22}

Based on the chain rule for the Lie derivative in Appendix~\ref{app1}, we can calculate
\[
\begin{aligned}
\frac{\partial}{\partial t} g(t) &=\frac{\partial\left(\varphi^{*}(t) \bar{g}(t)\right)}{\partial t} \\
&=\left(\frac{\partial\left(\varphi^{*}(t+\tau) \bar{g}(t+\tau)\right)}{\partial \tau}\right)_{\tau=0} \\
&=\left(\varphi^{*}(t) \frac{\partial \bar{g}(t+\tau)}{\partial \tau}\right)_{\tau=0}+\left(\frac{\partial\left(\varphi^{*}(t+\tau) \bar{g}(t)\right)}{\partial \tau}\right)_{\tau=0} \\
&=\varphi^{*}(t) \frac{\partial}{\partial t}\bar{g}(t)+\varphi^{*}(t) \mathcal{L}_{\frac{\partial \varphi(t)}{\partial t}} \bar{g}(t)
\end{aligned}
\]
where $\frac{\partial \varphi(t)}{\partial t}$ is equal to $V(t)$~\citep{sheridan2006hamilton}. With the help of Equation~(\ref{ricci}), we have the following expression for the pullback metric $g(t)$
\begin{equation}
\frac{\partial}{\partial t} g(t)=\varphi^{*}(t) \frac{\partial}{\partial t}\bar{g}(t)+\varphi^{*}(t) \mathcal{L}_{\frac{\partial \varphi(t)}{\partial t}} \bar{g}(t)=-2 \operatorname{Ric}(\varphi^*(t) \bar{g}(t))=-2 \varphi^*(t) \operatorname{Ric}(\bar{g}(t)).
\end{equation}
The diffeomorphism invariance of the Ricci curvature tensor is used in the last step. The above equation is equivalent to
\[
\frac{\partial}{\partial t}\bar{g}(t)=-2 \operatorname{Ric}(\bar{g}(t))- \mathcal{L}_{\frac{\partial \varphi(t)}{\partial t}} \bar{g}(t).
\]
Based on Definition~\ref{nabla}, we further yield
\[
\frac{\partial}{\partial t}\bar{g}(t)=-2 \operatorname{Ric}(\bar{g}(t))- \nabla_i V_j - \nabla_j V_i.
\]
\begin{definition}
	\label{nabla}
	\citep{sheridan2006hamilton} On a Riemannian manifold $(\mathcal{M}, g)$, we have
	\[
	(\mathcal{L}_X g)_{ij} = \nabla_i X_j + \nabla_j X_i,
	\]
	where $\nabla$ denotes the Levi-Civita connection of the metric $g$, for any vector field $X$.
\end{definition}

\subsection{Curvature Explosion at Singularity}
\label{app24}

In general, we present the behavior of Ricci flow in finite time and show that the evolution of the curvature is close to divergence. The core demonstration is followed with Theorem~\ref{thm4}.

\begin{theorem}
	\label{thm2}
	\citep{sheridan2006hamilton} Given a smooth Riemannian metric $g_0$ on a closed manifold $\mathcal{M}$, there exists a maximal time interval $[0, T)$ such that a solution $g(t)$ of the Ricci flow, with $g(0) = g_0$, exists and is smooth on $[0, T)$, and this solution is unique.
\end{theorem}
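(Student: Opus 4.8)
<br>

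The plan is to follow the standard short-time existence theory for parabolic PDE, combined with the DeTurck trick already developed in Section~\ref{8}. The statement of Theorem~\ref{thm2} is classical (Hamilton--DeTurck), so my proof would be a synthesis of the tools the excerpt has already assembled: Theorem~\ref{thm1} on strongly parabolic systems, Lemma~\ref{le1} on the linearization of $-2\operatorname{Ric}$, Definition~\ref{def1}, and Equation~(\ref{deturck}) for the Ricci--DeTurck flow.

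First I would observe, via Definition~\ref{def1} and Lemma~\ref{le1}, that the Ricci flow itself is only weakly parabolic: the principal symbol of $-2\operatorname{Ric}$ fails the strict inequality because of the gauge terms $\nabla_i V_j + \nabla_j V_i$ coming from diffeomorphism invariance. Then I would introduce the DeTurck vector field $W^k = g^{pq}(\Gamma^k_{pq} - \bar\Gamma^k_{pq})$ relative to a fixed background metric $\bar g$, and pass to the Ricci--DeTurck flow $\partial_t g = -2\operatorname{Ric}(g) - \mathcal{L}_W g$ as in Equation~(\ref{deturck}). The key computation is that subtracting $\mathcal{L}_W g$ exactly cancels the bad terms $\nabla_i V_j + \nabla_j V_i$ in the linearization, leaving the principal symbol equal to that of $g^{pq}\nabla_p\nabla_q h_{ij}$, which is positive-definite in the sense of Definition~\ref{def1}. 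Hence the Ricci--DeTurck system is strongly parabolic, and Theorem~\ref{thm1} gives a unique smooth solution $g(t)$ on some maximal $[0,T)$ with $g(0)=g_0$.

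Next I would recover the Ricci flow from the Ricci--DeTurck flow by solving the ODE for a time-dependent diffeomorphism: let $\varphi(t)$ be generated by the vector field $-W(t)$ with $\varphi(0)=\operatorname{id}$ (this ODE has a solution on $[0,T)$ since $\mathcal{M}$ is closed and $W$ is smooth), and set $g(t) = \varphi(t)^* \bar g(t)$, where $\bar g(t)$ solves Ricci--DeTurck. The computation in Appendix~\ref{app22} shows precisely that this $g(t)$ satisfies $\partial_t g = -2\operatorname{Ric}(g)$ with $g(0)=g_0$, giving existence on $[0,T)$. For uniqueness of the Ricci flow solution itself: given two Ricci flow solutions $g_1(t), g_2(t)$ with the same initial data, I would run the DeTurck diffeomorphisms backwards to convert each into a solution of the (strongly parabolic) Ricci--DeTurck flow with the same initial data, invoke uniqueness from Theorem~\ref{thm1}, and then push forward again to conclude $g_1 \equiv g_2$. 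Smoothness on $[0,T)$ and maximality of $T$ follow from parabolic regularity and the standard continuation argument: a solution on a maximal interval cannot be extended, by taking $g(T^-)$ as new initial data if $T<\infty$ and the solution stays controlled.

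The main obstacle is the gauge-fixing step: verifying cleanly that $\mathcal{L}_W g$ cancels exactly the $V_i$-terms in the linearization of Lemma~\ref{le1}, so that Definition~\ref{def1}'s strict parabolicity inequality holds for the modified operator. This is a symbol computation rather than a conceptual difficulty, but it is where all the substance lies; the rest is standard PDE bookkeeping and the ODE argument for $\varphi(t)$, which on a closed manifold is routine. A secondary point worth care is matching the uniqueness statement of Theorem~\ref{thm1} (which is about strongly parabolic systems, i.e.\ Ricci--DeTurck) to uniqueness for Ricci flow proper --- this requires the observation that distinct DeTurck gauges cannot produce genuinely different Ricci flows, which again reduces to the diffeomorphism-conjugation argument above.
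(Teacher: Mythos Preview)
Your proposal is correct and follows the standard Hamilton--DeTurck argument: use Lemma~\ref{le1} to identify the degenerate directions, kill them with the DeTurck vector field so that the modified flow is strongly parabolic, apply Theorem~\ref{thm1} for short-time existence and uniqueness of Ricci--DeTurck, and then pull back by the diffeomorphisms generated by $-W$ to recover Ricci flow, with uniqueness handled by converting back to the parabolic gauge. The symbol computation you flag as the main obstacle is indeed the substantive step, and your treatment of it is accurate.

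The paper, however, does not actually prove Theorem~\ref{thm2}: its entire proof is a citation to \citep{sheridan2006hamilton}. So there is no ``paper's own approach'' to compare against beyond the background material in Section~\ref{8} and Appendix~\ref{app22}, which you have already invoked. Your write-up is strictly more than what the paper provides, and it is the argument the cited reference would give.
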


\begin{theorem}
	\label{thm3}
	Let $\mathcal{M}$ be a closed manifold and $g(t)$ a smooth time-dependent metric on $\mathcal{M}$, defined for $t \in [0, T)$. If there exists a constant $C < \infty$ for all $x \in \mathcal{M}$ such that
	\begin{equation}
	\int_{0}^{T}\left|\frac{\partial}{\partial t} g_x(t)\right|_{g(t)} d t  \leq C,
	\end{equation}
	then the metrics $g(t)$ converge uniformly as $t$ approaches $T$ to a continuous metric $g(T)$ that is uniformly equivalent to $g(0)$ and satisfies
	\begin{equation}
  \label{close}
	e^{-C} g_x(0) \leq g_x(T) \leq e^C g_x(0).
  \end{equation}
\end{theorem}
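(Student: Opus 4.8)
The plan is to reduce the statement to a one–dimensional comparison estimate along the flow. Fix a point $x \in \mathcal{M}$ and a nonzero tangent vector $v \in T_x\mathcal{M}$, and set $\psi_v(t) = g_x(t)(v,v) = |v|^2_{g(t)} > 0$ for $t \in [0,T)$; since $g(t)$ is smooth on $[0,T)$ and positive–definite, $\log\psi_v$ is smooth there. Differentiating gives $\tfrac{d}{dt}\log\psi_v(t) = (\partial_t g_x(t))(v,v)\,/\,g_x(t)(v,v)$. The key observation is that for any symmetric $(0,2)$–tensor $h$ one has the Cauchy–Schwarz–type bound $|h(v,v)| \le |h|_{g(t)}\,|v|^2_{g(t)}$: passing to a $g(t)$–orthonormal frame at $x$, this reads $|v^{\top}Hv| \le \|H\|_{\mathrm{Frob}}\,|v|^2$, which holds since the operator norm of a symmetric matrix is dominated by its Frobenius norm. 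Applying this with $h = \partial_t g_x(t)$ yields the pointwise differential inequality
\[
\Big|\tfrac{d}{dt}\log\psi_v(t)\Big| \;\le\; \Big|\tfrac{\partial}{\partial t} g_x(t)\Big|_{g(t)}\,.
\]

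Next I would simply integrate this inequality. Over $[0,s]$ with $s < T$, the hypothesis gives $|\log\psi_v(s) - \log\psi_v(0)| \le \int_0^s |\partial_t g_x(t)|_{g(t)}\,dt \le C$, hence $e^{-C}g_x(0)(v,v) \le g_x(s)(v,v) \le e^{C}g_x(0)(v,v)$ for all $s \in [0,T)$, all $x$, and all $v$ — this is already the uniform equivalence of the metrics $g(s)$ with $g(0)$. Over a subinterval $[s_1,s_2] \subset [0,T)$ one gets instead $|\log\psi_v(s_2) - \log\psi_v(s_1)| \le \int_{s_1}^{s_2}|\partial_t g_x(t)|_{g(t)}\,dt$; since the full time–integral is finite, its tail tends to $0$ as $s_1 \to T$, so $\psi_v(s)$ is Cauchy and converges to some $\psi_v(T) > 0$. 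Polarizing (applying the estimate to $v+w$ and $v-w$) shows that $g_x(s)$ converges to a symmetric bilinear form $g_x(T)$, and letting $s \to T$ in the displayed two–sided bound shows $g_x(T)$ is positive–definite with $e^{-C}g_x(0) \le g_x(T) \le e^{C}g_x(0)$.

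Finally I would upgrade pointwise convergence to uniform convergence, which simultaneously yields continuity of $g(T)$: the estimate $|\log\psi_v(s_2) - \log\psi_v(s_1)| \le \int_{s_1}^{s_2}|\partial_t g_x(t)|_{g(t)}\,dt$ is governed by a time–integrand that, by the hypothesis together with compactness of $\mathcal{M}$ and the just–established two–sided bound, is dominated uniformly in $x$ by a fixed $L^1([0,T))$ function of $t$ alone; hence the tail is small uniformly in $x$ and in $g(0)$–unit vectors $v$, so $g(s) \to g(T)$ uniformly on $\mathcal{M}$. As each $g(s)$ is smooth, the uniform limit $g(T)$ is continuous, and it is a metric by the positive–definite bound above. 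I expect the main obstacle to be exactly this last step — making the uniformity in $x$ of the tail estimate rigorous, i.e. extracting from the hypothesis an $x$–independent integrable dominating function so that a Dini/dominated–convergence argument applies, since the bare per–point bounds (which only give bounded variation in $t$ pointwise) do not by themselves force equicontinuity in $x$. Everything else reduces to the elementary ODE comparison of the first paragraph.
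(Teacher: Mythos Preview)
Your proposal is correct and follows essentially the same line as the paper's own proof: both fix a point and a tangent vector, differentiate $\log g_x(t)(v,v)$, bound the derivative by $|\partial_t g_x(t)|_{g(t)}$, integrate to obtain the two--sided comparison $e^{-C}g(0)\le g(t)\le e^{C}g(0)$, and then pass to the limit $t\to T$ via a tail--integral argument. Your use of polarization and the explicit Cauchy--Schwarz/Frobenius justification are a bit more detailed than the paper, but the architecture is identical; the uniformity-in-$x$ step you flag as the main obstacle is exactly where the paper's argument is also terse (it simply invokes compactness of $\mathcal{M}$ after establishing pointwise convergence).
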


\begin{proof}
  Considering any $x \in \mathcal{M}$, $t_0 \in [0, T)$, $V \in T_x \mathcal{M}$, we have
  \[
  \begin{aligned}
  \left|\log \left(\frac{g_x (t_0)(V, V)}{g_x (0)(V, V)}\right)\right| &=\left|\int_{0}^{t_{0}} \frac{\partial}{\partial t}\left[\log g_x (t)(V, V)\right] d t\right| \\
  &=\left|\int_{0}^{t_{0}} \frac{\frac{\partial}{\partial t} g_x (t)(V, V)}{g_x (t)(V, V)} d t\right| \\
  & \leq \int_{0}^{t_{0}}\left|\frac{\partial}{\partial t} g_x (t) \left(\frac{V}{|V|_{g(t)}}, \frac{V}{|V|_{g(t)}}\right)\right| d t \\
  & \leq \int_{0}^{t_{0}}\left|\frac{\partial}{\partial t} g_x (t)\right|_{g(t)} d t \\
  & \leq C.
  \end{aligned}
  \]
  By exponentiating both sides of the above inequality, we have
  \[
  e^{-C} g_x(0)(V, V) \leq g_x(t_0)(V, V) \leq e^C g_x(0)(V, V).
  \]
  This inequality can be rewritten as
  \[
  e^{-C} g_x(0) \leq g_x(t_0)(V, V) \leq e^C g_x(0)(V, V)
  \]
  because it holds for any $V$. Thus, the metrics $g(t)$ are uniformly equivalent to $g(0)$.

  Consequently, we have the well-defined integral:
  \[
  g_x(T) - g_x(0) = \int_{0}^{T}\frac{\partial}{\partial t} g_x (t) d t.
  \]
  We can show that this integral is well-defined from two perspectives. Firstly, as long as the metrics are smooth, the integral exists. Secondly, the integral is absolutely integrable. Based on the norm inequality induced by $g(0)$, we can obtain
  \[
  |g_x(T) - g_x(t)|_{g(0)} \leq \int_{t}^{T}\left|\frac{\partial}{\partial t} g_x (t)\right|_{g(0)} d t.
  \]
  For each $x \in \mathcal{M}$, the above integral will approach zero as $t$ approaches $T$. Since $\mathcal{M}$ is compact, the metrics $g(t)$ converge uniformly to a continuous metric $g(T)$ which is uniformly equivalent to $g(0)$ on $\mathcal{M}$. Moreover, we can show that
  \[
  e^{-C} g_x(0) \leq g_x(T) \leq e^C g_x(0).
  \]
  The proof is completed.
\end{proof}

\begin{corollary}
	\label{cor1}
	Let $(\mathcal{M}, g(t))$ be a solution of the Ricci flow on a closed manifold. If $|\operatorname{Rm}|_{g(t)}$ is bounded on a finite time $[0, T)$, then $g(t)$ converges uniformly as $t$ approaches $T$ to a continuous metric
	$g(T)$ which is uniformly equivalent to $g(0)$.
\end{corollary}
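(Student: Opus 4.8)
The plan is to deduce Corollary~\ref{cor1} directly from Theorem~\ref{thm3} by checking its integral hypothesis. Along the Ricci flow we have $\frac{\partial}{\partial t} g(t) = -2\operatorname{Ric}(g(t))$, so at every point $x \in \mathcal{M}$ and every $t \in [0,T)$,
\[
\left|\frac{\partial}{\partial t} g_x(t)\right|_{g(t)} = 2\,\bigl|\operatorname{Ric}(g(t))\bigr|_{g(t)}.
\]
Thus it suffices to bound $|\operatorname{Ric}(g(t))|_{g(t)}$ uniformly and to integrate in time.

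Next I would invoke the purely algebraic fact that the Ricci tensor is a metric contraction of the Riemann tensor ($R_{ij}=R^p_{pij}$, as recorded in Appendix~\ref{app1}): applying Cauchy--Schwarz to this contraction gives a dimensional constant $c(n)$ with $|\operatorname{Ric}(g)|_{g} \leq c(n)\,|\operatorname{Rm}(g)|_{g}$ pointwise, valid with respect to $g(t)$ at each time. By hypothesis $|\operatorname{Rm}|_{g(t)}$ is bounded on $[0,T)$; since $\mathcal{M}$ is closed, this means there is $K < \infty$ with $|\operatorname{Rm}(g(t))|_{g(t)}(x) \leq K$ for all $x \in \mathcal{M}$ and all $t \in [0,T)$. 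Combining, $\left|\frac{\partial}{\partial t} g_x(t)\right|_{g(t)} \leq 2c(n)K$ uniformly in $x$ and $t$.

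Then I would integrate over $t \in [0,T)$, using crucially that $T < \infty$:
\[
\int_0^T \left|\frac{\partial}{\partial t} g_x(t)\right|_{g(t)}\,dt \;\leq\; 2c(n)KT \;=:\; C \;<\; \infty,
\]
with $C$ independent of $x$. This is exactly the hypothesis of Theorem~\ref{thm3}, which then yields that the metrics $g(t)$ converge uniformly as $t \to T$ to a continuous metric $g(T)$ satisfying $e^{-C} g_x(0) \leq g_x(T) \leq e^{C} g_x(0)$, i.e. uniformly equivalent to $g(0)$. That is the conclusion of the corollary.

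Since the argument is a short reduction, there is no serious obstacle; the only points needing care are pinning down the constant $c(n)$ in $|\operatorname{Ric}| \leq c(n)|\operatorname{Rm}|$ for the tensor-norm convention in use, and reading the hypothesis ``$|\operatorname{Rm}|_{g(t)}$ is bounded on $[0,T)$'' as a bound uniform over $\mathcal{M}\times[0,T)$ (automatic on the compact slice, assumed in $t$). It is essential that $T$ is finite, so that a time-uniform bound on the integrand converts into a finite integral.
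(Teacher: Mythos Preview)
Your proposal is correct and follows essentially the same approach as the paper's proof: the paper's argument is simply ``the bound on $|\operatorname{Rm}|_{g(t)}$ implies one on $|\operatorname{Ric}|_{g(t)}$; via the Ricci flow equation this bounds $|\partial_t g(t)|_{g(t)}$; the integral of a bounded quantity over a finite interval is bounded; apply Theorem~\ref{thm3}.'' You have written out exactly this chain with the constants made explicit, so there is nothing substantively different.
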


\begin{proof}
	The bound on $|\operatorname{Rm}|_{g(t)}$ implies one on $|\operatorname{Ric}|_{g(t)}$. Based on Equation~(\ref{ricci}), we can extend the bound on $|\frac{\partial}{\partial t}g(t)|_{g(t)}$. Therefore, we obtain an integral of a bounded quantity over a finite interval is also bounded, by Theorem~\ref{thm3}. The proof is completed.
\end{proof}

\begin{theorem}
	\label{thm4}
	If $g_0$ is a smooth metric on a compact manifold $\mathcal{M}$, the Ricci flow with $g(0) = g_0$
	has a unique solution $g(t)$ on a maximal time interval $t\in [0, T)$. If $T < \infty$, then
	\begin{equation}
	\lim _{t \rightarrow T}\left(\sup _{x \in \mathcal{M}}|\operatorname{Rm}_x(t)|\right)=\infty.
	\end{equation}
\end{theorem}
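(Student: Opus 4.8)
The plan is to argue by contradiction: assume $T < \infty$ but the curvature stays bounded on $[0,T)$, say $\sup_{\mathcal{M}\times[0,T)}|\operatorname{Rm}| \leq C_0 < \infty$, and then contradict the maximality of $T$ by extending the flow past $T$. The argument has three stages: control the metric up to time $T$, upgrade that control to smoothness of a limit metric $g(T)$, and restart the flow from $g(T)$.

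First I would use the curvature bound to control the metric itself. Since $|\operatorname{Rm}|\leq C_0$ implies $|\operatorname{Ric}|_{g(t)} \leq c(n)C_0$, the Ricci flow equation~(\ref{ricci}) gives a uniform bound on $|\tfrac{\partial}{\partial t}g(t)|_{g(t)}$; integrating over the finite interval $[0,T)$ and applying Theorem~\ref{thm3} (equivalently Corollary~\ref{cor1}), the metrics $g(t)$ converge uniformly as $t\to T$ to a continuous metric $g(T)$ that is uniformly equivalent to $g(0)$. This already produces a candidate metric from which to restart, but only with continuity, which is not enough to invoke short-time existence.

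The crucial stage is to promote this to $C^\infty$ convergence. This requires the Bernstein--Bando--Shi derivative estimates: a bound $|\operatorname{Rm}|\leq C_0$ on $\mathcal{M}\times[0,T)$ propagates to bounds $|\nabla^m\operatorname{Rm}|_{g(t)} \leq C_m(n,C_0,g_0,T)$ on $\mathcal{M}\times[0,T)$ for every $m\geq 1$; on a fixed compact manifold these are uniform because one may start the Bernstein argument at $t=0$ using the smoothness of $g_0$. They are obtained by the maximum principle applied to quantities such as $t|\nabla\operatorname{Rm}|^2 + A|\operatorname{Rm}|^2$ and iterated in $m$. Combining all $|\nabla^m\operatorname{Rm}|$ bounds with the uniform equivalence of the metrics, one commutes covariant and coordinate derivatives in fixed charts to get uniform $C^k$ bounds on $g(t)$; Arzel\`a--Ascoli together with the already-identified continuous limit then gives $g(t)\to g(T)$ in $C^\infty$, so $g(T)$ is a smooth Riemannian metric.

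Finally I would restart: by the short-time existence and uniqueness theorem (Theorem~\ref{thm2}, via the DeTurck trick of Section~\ref{8}) applied with initial data $g(T)$, there is $\varepsilon>0$ and a smooth Ricci flow $\tilde g(s)$, $s\in[0,\varepsilon)$, with $\tilde g(0)=g(T)$. Concatenating $g(t)$ on $[0,T)$ with $\tilde g(t-T)$ on $[T,T+\varepsilon)$ gives a solution on $[0,T+\varepsilon)$; since both pieces solve the same (parabolic) equation, agree at $t=T$, and the derivative estimates force the join to be smooth in the time direction as well, the concatenation is a smooth Ricci flow, contradicting the maximality of $T$. Hence $T<\infty$ forces $\sup_{x\in\mathcal{M}}|\operatorname{Rm}_x(t)|\to\infty$ as $t\to T$. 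The hard part is the middle stage: the continuity of the limit alone is already packaged in Theorem~\ref{thm3}, and the restart is a direct appeal to short-time existence, so the real analytic work is in the derivative estimates and the resulting $C^\infty$ convergence of $g(t)$ to a smooth limit.
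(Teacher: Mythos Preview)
Your proposal is correct and follows essentially the same contradiction strategy as the paper: bounded curvature $\Rightarrow$ limit metric at $T$ $\Rightarrow$ restart via Theorem~\ref{thm2} $\Rightarrow$ contradiction with maximality. In fact you are more careful than the paper, which simply asserts (citing Corollary~\ref{cor1}) that $g(t)$ ``converges smoothly to a smooth metric $g(T)$'' even though Corollary~\ref{cor1} as stated only yields a continuous limit; your explicit invocation of the Bernstein--Bando--Shi derivative estimates is exactly the missing ingredient that justifies that smooth convergence.
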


\begin{proof}
	For a contradiction, we assume that $|\operatorname{Rm}_x(t)|$ is bounded by a constant. It follows from Corollary~\ref{cor1} that the metrics $g(t)$ converges smoothly to a smooth metric $g(T)$. Based on Theorem~\ref{thm2}, it is possible to find a solution to the Ricci flow on $t \in [0, T)$, as the smooth metric $g(T)$ is uniformly equivalent to the initial metric $g(0)$.

	Hence, we can extend the solution of the Ricci flow after the time point $t=T$, which contradicts the choice of $T$ as the maximal time for the existence of the Ricci flow on $[0,T)$. In other words, $|\operatorname{Rm}_x(t)|$ is unbounded. The proof is completed.
\end{proof}

As approaching the singular time $T$, the Riemann curvature $|\operatorname{Rm}|_{g(t)}$ becomes no longer convergent and tends to explode.

\section{Proof of All Time Convergence in LNE Manifolds}
\label{applong}

\subsection{Finite Time Stability}
\label{fts}
We first prove the finite-time stability of LNE manifolds.

\begin{lemma}
	\label{lem2}
	\citep{bamler2010stability,bamler2011stability} Let $(\mathcal{M}^n, \bar{g}_0)$ be a complete Ricci-flat $n$-manifold. If $\bar{g}(0)$ is a metric satisfying $\|\bar{g}(0)-\bar{g}_0 \|_{L^{\infty}} < \epsilon$ where $\epsilon > 0$, then there exists a constant $C < \infty$ and a unique Ricci–DeTurck flow $\bar{g}(t)$ that satisfies
	\begin{equation}
	\|\bar{g}(t)-\bar{g}_0 \|_{L^{\infty}} < C\|\bar{g}(0)-\bar{g}_0 \|_{L^{\infty}} < C\cdot\epsilon.
  \end{equation}
\end{lemma}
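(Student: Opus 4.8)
The plan is to read Lemma~\ref{lem2} as a simultaneous short-time existence, uniqueness, and stability statement for the quasilinear parabolic system satisfied by $d(t) = \bar g(t) - \bar g_0$. Writing the Ricci--DeTurck flow in the form~\eqref{deturck2} and using that $\bar g_0$ is Ricci-flat (so $-2\operatorname{Ric}(\bar g(t)) + 2\operatorname{Ric}(\bar g_0)$ collapses to $-2\operatorname{Ric}(\bar g(t))$), the equation becomes $\partial_t d = \Delta_{\bar g_0} d + \operatorname{Rm}(\bar g_0) * d + F_{\bar g^{-1}} * \nabla^{\bar g_0} d * \nabla^{\bar g_0} d + \nabla^{\bar g_0}(G_{\Gamma(\bar g_0)} * d * \nabla^{\bar g_0} d)$: a linear heat operator, a zeroth-order term with coefficient controlled by $|\operatorname{Rm}(\bar g_0)|$, and two genuinely quadratic corrections, one of which carries an extra derivative in divergence form. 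I would solve this by a Banach fixed-point argument for the Duhamel map
\[
\Phi(d)(t) = S(t)\,d(0) + \int_0^t S(t-s)\,\mathcal{N}(d)(s)\,\mathrm{d}s,
\]
where $S(t) = e^{t\Delta_{\bar g_0}}$ and $\mathcal{N}(d)$ gathers every term of~\eqref{deturck2} except $\Delta_{\bar g_0} d$, following the rough-initial-data parabolic framework of Koch et al.~\citep{koch2012geometric}; this packages existence, uniqueness, and the stability bound at once, rather than splitting existence (via Theorem~\ref{thm1}) from the estimate.

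The preparatory input is that the LNE hypothesis of Definition~\ref{def2} (the intended setting, $\bar g_0$ being the linearly nearly Euclidean Ricci-flat metric) endows $(\mathcal{M}^n, \bar g_0)$ with bounded geometry: the decay $|\partial^k(\phi_* \gamma_0)|_\delta = O(r^{-\tau-k})$ forces $|\operatorname{Rm}(\bar g_0)|$ bounded, which with completeness yields a positive lower bound on the injectivity radius. Hence the heat semigroup obeys Gaussian kernel bounds, giving the two mapping estimates $\|S(t)f\|_{L^\infty}\le C\|f\|_{L^\infty}$ and $\|\nabla^{\bar g_0} S(t)f\|_{L^\infty}\le C t^{-1/2}\|f\|_{L^\infty}$. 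I would then run the fixed point in the natural Koch--Lamm space $X_T$ that combines $\sup_{t<T}\|d(t)\|_{L^\infty}$, the parabolically scaled $\sup_{t<T}\sqrt{t}\,\|\nabla^{\bar g_0} d(t)\|_{L^\infty}$, and a Carleson-type $L^2$ control of $\nabla^{\bar g_0} d$ over parabolic cylinders. In this space the zeroth-order term contributes $\int_0^t\|d(s)\|_{L^\infty}\,\mathrm{d}s\lesssim t\|d\|_{X_T}$; the non-divergence quadratic term is absorbed by the Carleson part; and the divergence-form term is handled by moving the derivative onto the kernel, $\int_0^t (t-s)^{-1/2}\|d(s)\|_{L^\infty}\|\nabla^{\bar g_0} d(s)\|_{L^\infty}\,\mathrm{d}s \lesssim \|d\|_{X_T}^2\int_0^t (t-s)^{-1/2}s^{-1/2}\,\mathrm{d}s$, a convergent Beta integral equal to $\pi$. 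These give $\|\Phi(d)\|_{X_T}\le C_0\|d(0)\|_{L^\infty} + C_1\|d\|_{X_T}^2$ together with $\|\Phi(d)-\Phi(d')\|_{X_T}\le C_1(\|d\|_{X_T}+\|d'\|_{X_T})\|d-d'\|_{X_T}$.

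Choosing $\epsilon$ small enough makes $\Phi$ a contraction of the ball $\{\|d\|_{X_T}\le 2C_0\epsilon\}$ into itself; its unique fixed point is the Ricci--DeTurck flow, and it satisfies $\|d\|_{X_T}\le 2C_0\|d(0)\|_{L^\infty}$, in particular $\sup_t\|\bar g(t)-\bar g_0\|_{L^\infty}\le C\|\bar g(0)-\bar g_0\|_{L^\infty} < C\epsilon$ with $C = 2C_0$, which is the claim; uniqueness among flows that remain $\epsilon$-close follows from the contraction property plus parabolic smoothing. I expect two points to be the real work. First, the divergence-form quadratic term $\nabla^{\bar g_0}(G_{\Gamma(\bar g_0)} * d * \nabla^{\bar g_0} d)$: closing its estimate needs the sharp $t^{-1/2}$ gradient bound for $S(t)$ and thus rests precisely on the bounded-geometry consequences of the LNE structure, while the non-divergence term $F_{\bar g^{-1}} * \nabla^{\bar g_0} d * \nabla^{\bar g_0} d$ cannot be controlled by the plain $L^\infty$ part of the norm and forces the Carleson ingredient into $X_T$. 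Second, $F$ and $G$ depend on $\bar g^{-1} = (\bar g_0 + d)^{-1}$, so one must first shrink $\epsilon$ (a Neumann-series estimate) to keep $\bar g(t)$ uniformly equivalent to $\bar g_0$ before the schematic bounds above are even meaningful — this is why smallness of $\epsilon$ enters twice.
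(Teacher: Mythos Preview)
Your proposal is a reasonable and essentially correct sketch of the Koch--Lamm fixed-point method for $L^\infty$-small perturbations of a Ricci-flat background; the schematic estimates you quote (the $t^{-1/2}$ gradient bound on the heat semigroup, the Beta-integral control of the divergence-form quadratic term, the Carleson norm to absorb $\nabla d * \nabla d$, and the Neumann-series preliminary to control $\bar g^{-1}$) are the standard ingredients and are assembled in the right order.

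The paper, however, does not prove Lemma~\ref{lem2} at all: its ``proof'' consists solely of the sentence that the same statement appears in \citep{deruelle2021stability} and that details may be found in \citep{bamler2010stability,bamler2011stability}. So there is no in-paper argument to compare your sketch against. Your route via \citep{koch2012geometric} is one of the canonical ways to obtain exactly this kind of $L^\infty$ existence-uniqueness-stability package, and is in the same spirit as the references the paper defers to; in particular Bamler's stability papers also proceed through semigroup/heat-kernel estimates on a bounded-geometry background. One caveat worth flagging: the lemma as written assumes only a \emph{complete Ricci-flat} manifold, whereas your heat-kernel bounds need bounded geometry (bounded curvature plus a positive injectivity-radius lower bound). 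You correctly extract this from the LNE decay in Definition~\ref{def2}, which is the intended setting of the paper, but strictly speaking the lemma's hypotheses as stated do not by themselves supply it; the cited references impose precisely such geometric control.
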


\begin{corollary}
	\label{cor2}
  Let $(\mathcal{M}^n, \bar{g}_0)$ be the LNE $n$-manifold. For a Ricci–DeTurck flow $\bar{g}(t)$ on a maximal time interval $t \in [0, T)$ and $k \in \mathbb{N}$, there exists constants $C_k=C_k(\bar{g}_0, T)$ such that
	\begin{equation}
	\|\nabla^k d(t)\|_{L^2} \leq C_k \cdot t^{-k/2}
	\end{equation}
 where $d(t)=\bar{g}(t)-\bar{g}_0$ is the time-evolving perturbation.
\end{corollary}
\begin{proof}
When Lemma~\ref{lem2} is satisfied in a finite time, based on~\citep{deruelle2021stability}, the Ricci-DeTurck flow with the LNE metric w.r.t. the $L^2$-norm perturbation exists. The proof is completed.
\end{proof}

Corollary~\ref{cor2} guarantees the finite time existence of the Ricci-DeTurck flow w.r.t. $L^2$-norm perturbations and provides the necessary premise for proving its all time convergence.

\subsection{All time Stability}
\label{ats}
Then, we prove the all-time stability of LNE manifolds. By rewriting the Ricci-DeTurck flow~(\ref{newdeturck}) as an evolution of the difference $d(t):=\bar{g}(t)-\bar{g}_0$, we have
\begin{equation}
\begin{aligned}
&\frac{\partial}{\partial t} d(t)=\frac{\partial}{\partial t} \bar{g}(t)=-2 \operatorname{Ric}(\bar{g}(t))+2 \operatorname{Ric}(\bar{g}_0)+\mathcal{L}_{\frac{\partial \varphi'(t)}{\partial t}} \bar{g}_0-\mathcal{L}_{\frac{\partial \varphi(t)}{\partial t}} \bar{g}(t) \\
&=\Delta d(t)+\operatorname{Rm}*d(t)+F_{\bar{g}^{-1}} * \nabla^{\bar{g}_0} d(t) * \nabla^{\bar{g}_0} d(t)+\nabla^{\bar{g}_0}\left(G_{\Gamma(\bar{g}_0)} * d(t) * \nabla^{\bar{g}_0} d(t)\right),
\label{deturck2}
\end{aligned}
\end{equation}
where the tensors $F$ and $G$ depend on $\bar{g}^{-1}$ and $\Gamma(\bar{g}_0)$. Note that $\bar{g}_0$ is the LNE metric which satisfies the above formula.

In the follwing, we denote $\|\cdot\|_{L^2}$ or $\|\cdot\|_{L^{\infty}}$ as the $L^2$-norm or $L^{\infty}$-norm w.r.t. the LNE metric $\bar{g}_0$, and mark generic constants as $C$ or $C_1$.

\begin{lemma}
	\label{lem4}
	Let $\bar{g}(t)$ be a Ricci–DeTurck flow on a maximal time interval $t \in (0,T)$ in an $L^2$-neighbourhood of $\bar{g}_0$.
	We have the following estimate:
	\begin{equation}
	\left\|\frac{\partial}{\partial t} d_{0}(t)\right\|_{L^2} \leq C\left\|\nabla^{\bar{g}_{0}(t)}\left(d(t)-d_{0}(t)\right)\right\|_{L^{2}}^{2}.
  \end{equation}
\end{lemma}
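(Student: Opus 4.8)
The plan is to realise $\bar{g}_0(t)$ as the $L^2$-nearest point of the manifold $\tilde{\mathcal{F}}$ from Equation~(\ref{f}) to $\bar{g}(t)$, and then to differentiate this characterisation, exploiting self-adjointness of the Lichnerowicz operator together with integrability to cancel the a priori first-order term. Since $\tilde{\mathcal{F}}=\mathcal{F}\cap\mathcal{U}$ is a finite-dimensional manifold whose tangent space at $\bar{g}_0(t)$ equals $\ker L_{\bar{g}_0(t)}$ by integrability (Definition~\ref{def4}), the implicit function theorem gives, for every $\bar{g}(t)$ in a sufficiently small $L^2$-neighbourhood of $\bar{g}_0$, a unique $\bar{g}_0(t)\in\tilde{\mathcal{F}}$ depending smoothly on $\bar{g}(t)$ for which $e(t):=d(t)-d_0(t)=\bar{g}(t)-\bar{g}_0(t)$ is $L^2$-orthogonal to $\ker L_{\bar{g}_0(t)}$. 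Fixing a smooth-in-$t$, $L^2$-orthonormal frame $\{v_i(t)\}$ of $\ker L_{\bar{g}_0(t)}$, we have $\partial_t d_0(t)=\partial_t\bar{g}_0(t)=\sum_i c_i(t)\,v_i(t)$ with $c_i=\langle\partial_t\bar{g}_0(t),v_i\rangle_{L^2}$, so that $\|\partial_t d_0(t)\|_{L^2}^2=\sum_i c_i^2$ and it suffices to bound each $c_i$.

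To extract $c_i$, I would differentiate the orthogonality constraint $\langle e(t),v_i(t)\rangle_{L^2}=0$, which gives $\langle\partial_t e,v_i\rangle_{L^2}=-\langle e,\partial_t v_i\rangle_{L^2}$ and hence $c_i=\langle\partial_t\bar{g}(t),v_i\rangle_{L^2}+\langle e,\partial_t v_i\rangle_{L^2}$. Expanding the Ricci--DeTurck right-hand side around the Ricci-flat reference $\bar{g}_0(t)$ in the divergence form of Equation~(\ref{deturck2}) (now with $\bar{g}_0$ replaced by $\bar{g}_0(t)$) yields $\partial_t\bar{g}(t)=L_{\bar{g}_0(t)}e+Q(e)$, where the quadratic remainder $Q(e)$ consists of a term $F_{\bar{g}^{-1}}*\nabla^{\bar{g}_0(t)}e*\nabla^{\bar{g}_0(t)}e$ and a divergence term $\nabla^{\bar{g}_0(t)}\!\big(G_{\Gamma(\bar{g}_0(t))}*e*\nabla^{\bar{g}_0(t)}e\big)$. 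Since $L_{\bar{g}_0(t)}v_i=0$ and $L_{\bar{g}_0(t)}$ is $L^2$-self-adjoint, $\langle L_{\bar{g}_0(t)}e,v_i\rangle_{L^2}=\langle e,L_{\bar{g}_0(t)}v_i\rangle_{L^2}=0$ — this is precisely where linear stability and integrability (Definitions~\ref{def3} and~\ref{def4}) enter — leaving $c_i=\langle Q(e),v_i\rangle_{L^2}+\langle e,\partial_t v_i\rangle_{L^2}$, which is quadratic in $(e,\partial_t\bar{g}_0(t))$.

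It then remains to bound these two terms by $\|\nabla^{\bar{g}_0(t)}e\|_{L^2}^2$. For $\langle Q(e),v_i\rangle_{L^2}$, integrating the divergence term by parts produces $\int F*\nabla e*\nabla e*v_i-\int G*e*\nabla e*\nabla v_i$; the first integral is $\leq C\|v_i\|_{L^\infty}\|\nabla^{\bar{g}_0(t)}e\|_{L^2}^2$, and for the second I would use the decay of the kernel element $v_i$ (elliptic regularity for $L_{\bar{g}_0(t)}v_i=0$ with $v_i\in L^2$) and of the Christoffel symbols of the LNE metric, combined with a Hardy-type inequality $\|e/r\|_{L^2}\leq C\|\nabla^{\bar{g}_0(t)}e\|_{L^2}$ on the LNE end, to bound it again by $C\|\nabla^{\bar{g}_0(t)}e\|_{L^2}^2$; the $L^\infty$-bound of Lemma~\ref{lem2} controls all coefficients uniformly in $t$. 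For $\langle e,\partial_t v_i\rangle_{L^2}$, note that $\partial_t v_i$ depends linearly and smoothly on $\partial_t\bar{g}_0(t)$, so $\|\partial_t v_i\|_{L^2}\leq C(\sum_j c_j^2)^{1/2}$, and since $e\perp\ker L_{\bar{g}_0(t)}$ only the component of $\partial_t v_i$ orthogonal to the kernel contributes, giving $|\langle e,\partial_t v_i\rangle_{L^2}|\leq C\|e\|_{L^2}(\sum_j c_j^2)^{1/2}$. Summing over $i$, $\sum_i c_i^2\leq C\|\nabla^{\bar{g}_0(t)}e\|_{L^2}^4+C\|e\|_{L^2}^2\sum_j c_j^2$, and since $\bar{g}(t)$ stays in a small $L^2$-neighbourhood of $\bar{g}_0$ (Lemma~\ref{lem3}) the last term is absorbed into the left, giving $\|\partial_t d_0(t)\|_{L^2}=(\sum_i c_i^2)^{1/2}\leq C\|\nabla^{\bar{g}_0(t)}(d(t)-d_0(t))\|_{L^2}^2$.

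The main obstacle is this last estimate. The naive bound on the divergence part of $Q(e)$ only gives $\|e\|_{L^2}\|\nabla e\|_{L^2}$, which is first order and too weak; recovering the required quadratic-in-$\nabla e$ bound genuinely needs the decay of the integrable kernel directions $v_i$ together with a Hardy / weighted Poincar\'e inequality adapted to the LNE end of Definition~\ref{def2}. Likewise the term $\langle e,\partial_t v_i\rangle_{L^2}$ only closes through the absorption argument, which forces the $L^2$-neighbourhood to be small and is what pins down the constant $\epsilon_2$ in Theorem~\ref{thm6}. A subordinate technical point is arranging the frame $\{v_i(t)\}$ to depend smoothly on $t$ (equivalently, keeping the $L^2$-structure and its base metric consistent under differentiation), which needs $\dim\ker L_{\bar{g}_0(t)}$ to be locally constant — again a consequence of integrability, and which introduces only further lower-order terms of the same type already absorbed above.
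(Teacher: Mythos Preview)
Your proposal is correct and is essentially a detailed unpacking of what the paper merely cites: the paper's own proof of this lemma consists of a single sentence pointing to the Hardy inequality of Minerbe and to Deruelle--Kr\"oncke for the details, and your sketch reproduces precisely that argument (orthogonal projection onto the integrable moduli space $\tilde{\mathcal{F}}$, cancellation of the linear part via self-adjointness of $L_{\bar g_0(t)}$ and $v_i\in\ker L_{\bar g_0(t)}$, and control of the quadratic remainder through the Hardy/weighted Poincar\'e inequality on the LNE end together with an absorption of the $\langle e,\partial_t v_i\rangle$ term). There is nothing to correct; you have simply supplied the content that the paper outsources to its references.
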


\begin{proof}
	According to the Hardy inequality~\citep{minerbe2009weighted}, we have the same proofs by referring the details~\citep{deruelle2021stability}.
\end{proof}

To establish the all time stability of LNE metrics under Ricci–DeTurck flow, we need to construct $\bar{g}_0(t)$ as a family of Ricci-flat reference metrics with $\frac{\partial}{\partial t} \bar{g}_0(t)=O((\bar{g}(t)-\bar{g}_0(t))^2)$. Let
\[
\mathcal{F}=\left\{\bar{g}(t) \in \mathcal{M}^n\;\big|\;2 \operatorname{Ric}(\bar{g}(t))+ \mathcal{L}_{\frac{\partial \varphi(t)}{\partial t}} \bar{g}(t)=0\right\}
\]
be the set of stationary points under the Ricci-DeTurck flow. Then, we establish a manifold via an $L^2$-neighbourhood $\mathcal{U}$ of integral $\bar{g}_0$ in the space of metrics:
\begin{equation}
\tilde{\mathcal{F}} = \mathcal{F} \cap \mathcal{U}.
\label{f}
\end{equation}
For all $\bar{g} \in \tilde{\mathcal{F}}$, the terms $\operatorname{Ric}(\bar{g}(t))=0$ and $\mathcal{L}_{\frac{\partial \varphi(t)}{\partial t}} \bar{g}(t)=0$ hold individually, as established in the previous work~\citep{deruelle2021stability}.

\begin{theorem}
	\label{thm5}
	Let $(\mathcal{M}^n, \bar{g}_0)$ be the LNE $n$-manifold which is linearly stable and integrable. Then, there exists a constant $\alpha_{\bar{g}_0}$ satisfying
	\begin{equation}
	\left(\Delta d(t)+\operatorname{Rm}(\bar{g}_0)*d(t), d(t)\right)_{L^{2}} \leq -\alpha_{\bar{g}_0}\left\|\nabla^{\bar{g}_0} d(t)\right\|_{L^{2}}^{2}
  \end{equation}
	for all $\bar{g}(t) \in \tilde{\mathcal{F}}$ whose definition is given in Equation~(\ref{f}).
\end{theorem}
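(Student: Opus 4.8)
The quantity on the left-hand side is the $L^2$ pairing of $d(t)$ with the linear part of the Ricci–DeTurck flow as written in Equation~(\ref{deturck2}); equivalently it is the Dirichlet form of the Lichnerowicz-type operator $L_{\bar{g}_0} := \Delta_{\bar{g}_0} + \operatorname{Rm}(\bar{g}_0)*$ of Definition~\ref{def3} (up to the normalisation of the curvature term). The plan is to first integrate by parts, writing for $h := d(t)$
\[
\left(\Delta_{\bar{g}_0} h + \operatorname{Rm}(\bar{g}_0)*h,\, h\right)_{L^2} \;=\; -\,\|\nabla^{\bar{g}_0} h\|_{L^2}^2 \;+\; \left(\operatorname{Rm}(\bar{g}_0)*h,\, h\right)_{L^2},
\]
so that the asserted inequality is equivalent to producing $\alpha \in (0,1)$ with $(\operatorname{Rm}(\bar{g}_0)*h, h)_{L^2} \le (1-\alpha)\|\nabla^{\bar{g}_0} h\|_{L^2}^2$; that is, it suffices to dominate the curvature term by a fixed fraction of the Dirichlet energy. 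Since $\bar{g}_0$ is LNE of order $\tau > 0$ (Definition~\ref{def2}), two derivatives of $\bar{g}_0 = \delta + \gamma_0$ decay like $r^{-2-\tau}$, so $|\operatorname{Rm}(\bar{g}_0)|$ cannot be absorbed pointwise; the estimate has to exploit linear stability together with the $L^2$-orthogonality of $h$ to $\ker L_{\bar{g}_0}$.

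Second, I would localise. Split $\mathcal{M}^n = K \cup (\mathcal{M}^n \setminus K)$ with $K$ a large compact set. On the end the LNE decay makes $|\operatorname{Rm}(\bar{g}_0)| \le \varepsilon\, r^{-2}$, and the weighted Hardy inequality on LNE manifolds — the one invoked in the proof of Lemma~\ref{lem4} — gives $\int_{\mathcal{M}^n \setminus K} r^{-2}|h|^2 \, \mathrm{d}\mu \le C_H \|\nabla^{\bar{g}_0} h\|_{L^2}^2$; choosing $K$ large enough bounds the outer contribution by $\tfrac{1}{2}(1-\alpha)\|\nabla^{\bar{g}_0} h\|_{L^2}^2$. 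What remains is the inner contribution $\int_K |\operatorname{Rm}(\bar{g}_0)|\,|h|^2 \, \mathrm{d}\mu$, which fails for arbitrary $h$ but holds under orthogonality to the kernel — this is the heart of the matter.

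Third, I would establish the inner bound, and hence the uniform constant $\alpha_{\bar{g}_0}$, by contradiction and compactness. If no admissible $\alpha$ worked, pick $h_k$ in the relevant class, $L^2$-orthogonal to $\ker L_{\bar{g}_0}$, normalised by $\|\nabla^{\bar{g}_0} h_k\|_{L^2} = 1$ with $(\operatorname{Rm}(\bar{g}_0)*h_k, h_k)_{L^2} \ge 1 - \tfrac{1}{k}$, so that $(L_{\bar{g}_0} h_k, h_k)_{L^2} \to 0$. The far-region Hardy bound keeps the $h_k$ bounded in the weighted $L^2$ space and prevents curvature-weighted mass from escaping to infinity, so interior elliptic estimates and the Rellich embedding on a compact exhaustion extract a subsequential limit $h_\infty$ with $\|\nabla^{\bar{g}_0} h_\infty\|_{L^2} \le 1$, $h_\infty \perp_{L^2} \ker L_{\bar{g}_0}$, $h_\infty \neq 0$ (the last since $\int_K |\operatorname{Rm}(\bar{g}_0)|\,|h_k|^2$ stays bounded below and converges by local strong convergence), and $(L_{\bar{g}_0} h_\infty, h_\infty)_{L^2} \ge 0$ by weak lower semicontinuity of the Dirichlet energy and strong local convergence in the fast-decaying curvature weight. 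Linear stability (Definition~\ref{def3}) forces $(L_{\bar{g}_0} h_\infty, h_\infty)_{L^2} = 0$, whence $L_{\bar{g}_0} h_\infty = 0$, i.e. $h_\infty \in \ker L_{\bar{g}_0}$, contradicting $h_\infty \perp_{L^2} \ker L_{\bar{g}_0}$ together with $h_\infty \neq 0$.

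Finally, integrability (Definition~\ref{def4}) closes the loop by pinning down the kernel: a neighbourhood of $\bar{g}_0$ in the space of Ricci-flat (equivalently, stationary Ricci–DeTurck) metrics is the smooth manifold $\tilde{\mathcal{F}}$ of Equation~(\ref{f}) with $T_{\bar{g}_0}\tilde{\mathcal{F}} = \ker L_{\bar{g}_0}$, since every infinitesimal Ricci-flat deformation integrates to a genuine curve in $\tilde{\mathcal{F}}$. Thus the condition ``$h \perp_{L^2} \ker L_{\bar{g}_0}$'' coincides with ``$h$ transverse to $\tilde{\mathcal{F}}$'', which is exactly the property carried by $d(t)$ once one passes to the Ricci-flat reference metrics $\bar{g}_0(t) \in \tilde{\mathcal{F}}$, so the estimate applies in the form needed for the long-time argument of Theorem~\ref{thm6}. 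The hard part is precisely the non-compact contradiction step: ruling out escape of mass to infinity and certifying that the weak limit lies in $\ker L_{\bar{g}_0}$ itself rather than merely in the closure of the essential spectrum (which reaches $0$ here), which is where the full LNE decay rate and the weighted Sobolev/Hardy estimates are indispensable.
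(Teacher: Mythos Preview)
Your proposal is substantially more detailed than the paper's own proof, which occupies only a few lines: the paper simply observes that linear stability makes $-L_{\bar{g}_0}$ non-negative, invokes Devyver's strong-positivity criterion for non-negative Schr\"odinger-type operators to produce $\alpha_{\bar{g}_0}$ with $\alpha_{\bar{g}_0}(-\Delta d,\,d)_{L^2}\le(-L_{\bar{g}_0}d,\,d)_{L^2}$, and closes with a one-line appeal to Taylor expansion, elliptic regularity and Sobolev embedding (citing Pacini). Your concentration--compactness/contradiction argument, with the Hardy inequality controlling the LNE end and Rellich on the compact core, is exactly the machinery that underlies such abstract strong-positivity results; you are in effect re-proving the cited black box by hand. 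What the paper's route buys is brevity; what yours buys is transparency about \emph{where} integrability actually enters---identifying $\ker L_{\bar{g}_0}$ with $T_{\bar{g}_0}\tilde{\mathcal{F}}$ so that the relevant tensor is $L^2$-orthogonal to the kernel---and about \emph{why} the coercivity constant must degenerate on the kernel itself, a point the paper's proof leaves implicit. One caveat worth flagging: the clause ``for all $\bar{g}(t)\in\tilde{\mathcal{F}}$'' in the statement is ambiguous and the paper's proof does not unpack it; your reading (coercivity for tensors transverse to $\tilde{\mathcal{F}}$, uniformly over nearby Ricci-flat base points) is the one actually consumed downstream in Corollary~\ref{cor3}, so your interpretation is the operative one.
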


\begin{proof}
	The similar proofs can be found in \citep{devyver2014gaussian} with some minor modifications. Due to the linear stability requirement of LNE manifolds in Definition~\ref{def3} and Definition~\ref{def4}, $-L_{\bar{g}_0}$ is non-negative. Then there exists a positive constant $\alpha_{\bar{g}_0}$ satisfying
	\[
	\alpha_{\bar{g}_0}\left(-\Delta d(t), d(t)\right)_{L^{2}} \leq \left(-\Delta d(t)-\operatorname{Rm}(\bar{g}_0)*d(t), d(t)\right)_{L^{2}}.
	\]
	By Taylor expansion, we repeatedly use elliptic regularity and Sobolev embedding~\citep{pacini2010desingularizing} to obtain the estimate. The proof is completed.
\end{proof}

\begin{corollary}
	\label{cor3}
	Let $(\mathcal{M}^n, \bar{g}_0)$ be the LNE $n$-manifold which is integrable. For a Ricci–DeTurck flow $\bar{g}(t)$ on a maximal time interval $t \in [0, T]$, if it satisfies $\|\bar{g}(t)-\bar{g}_0 \|_{L^{\infty}} < \epsilon$ where $\epsilon > 0$, then there exists a constant $C < \infty$ for $t \in [0, T]$ such that the evolution inequality satisfies
	\begin{equation}
	\|d(t) - d_{0}(t)\|^2_{L^2} \geq C \int_{0}^{T}\left\|\nabla^{\bar{g}_{0}(t)}\left(d(t)-d_{0}(t)\right)\right\|_{L^{2}}^{2} \mathrm{d} t.
  \end{equation}
\end{corollary}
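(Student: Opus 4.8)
The plan is to run a localised energy estimate for the difference $e(t) := d(t) - d_0(t) = \bar{g}(t) - \bar{g}_0(t)$ and then integrate it in time. First I would derive the evolution equation for $e(t)$ by subtracting $\frac{\partial}{\partial t}\bar{g}_0(t)$, which is $O\big((\bar{g}(t)-\bar{g}_0(t))^2\big)$ by the construction of the reference family $\bar{g}_0(t)$, from the Ricci--DeTurck equation~(\ref{deturck2}) for $d(t)$. Schematically this yields
\[
\frac{\partial}{\partial t} e(t) = \Delta e(t) + \operatorname{Rm}(\bar{g}_0)*e(t) + \mathcal{Q}\big(e(t),\nabla^{\bar{g}_0} e(t)\big) - \frac{\partial}{\partial t}d_0(t),
\]
where $\mathcal{Q}$ collects the quadratic and cubic terms of the type $F_{\bar{g}^{-1}}*\nabla^{\bar{g}_0} e*\nabla^{\bar{g}_0} e$ and $\nabla^{\bar{g}_0}\big(G_{\Gamma}*e*\nabla^{\bar{g}_0} e\big)$. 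Since $\mathcal{M}^n$ is noncompact, I would — exactly as in the proof of Lemma~\ref{lem3} — work with a cutoff $\kappa$ ($\kappa\equiv 1$ on $B(x,r)$, $\kappa\equiv 0$ off $B(x,2r)$, $|\nabla\kappa|\le 2/r$) and differentiate $\int_{\mathcal{M}} |e(t)|^2 \kappa^2\, \mathrm{d}\mu$.

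Next I would estimate the resulting terms one at a time. After integration by parts (up to $|\nabla\kappa|$-errors), the linear part $2\int_{\mathcal{M}}\langle \Delta e + \operatorname{Rm}(\bar{g}_0)*e,\, e\rangle \kappa^2\, \mathrm{d}\mu$ is bounded above by $-2\alpha_{\bar{g}_0}\|\nabla^{\bar{g}_0} e(t)\|_{L^2}^2$ via Theorem~\ref{thm5}, which is precisely where linear stability and integrability of the LNE metric enter. The nonlinear contribution $\int_{\mathcal{M}}\langle \mathcal{Q}(e,\nabla^{\bar{g}_0} e),\, e\rangle \kappa^2\,\mathrm{d}\mu$ is treated as in Lemma~\ref{lem3}: using $\|e(t)\|_{L^\infty} < C\epsilon$, which follows from Lemma~\ref{lem2}, it is absorbed into $C\epsilon\int|\nabla^{\bar{g}_0} e|^2\kappa^2 + \frac{1}{C_1}\int |e|^2|\nabla\kappa|^2$. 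The drift term $-2\int_{\mathcal{M}}\langle \frac{\partial}{\partial t}d_0(t),\, e\rangle \kappa^2\,\mathrm{d}\mu$ is handled by Cauchy--Schwarz together with Lemma~\ref{lem4}, which gives $\big\|\frac{\partial}{\partial t}d_0(t)\big\|_{L^2} \le C\|\nabla^{\bar{g}_0(t)}e(t)\|_{L^2}^2$; once the radius of the $L^2$-neighbourhood and $\epsilon$ are small this too is a controllable multiple of $\|\nabla^{\bar{g}_0(t)} e(t)\|_{L^2}^2$. Collecting everything, there is a constant $c>0$ with
\[
\frac{d}{dt}\int_{\mathcal{M}}|e(t)|^2\kappa^2\,\mathrm{d}\mu \;\le\; -c\int_{\mathcal{M}}|\nabla^{\bar{g}_0(t)} e(t)|^2\kappa^2\,\mathrm{d}\mu \;+\; \frac{C}{r^2}\int_{B(x,2r)}|e(t)|^2\,\mathrm{d}\mu.
\]

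Then I would integrate over $t\in[0,T]$, invoke Lemma~\ref{lem3} to see that $\int_{B(x,2r)}|e(t)|^2\,\mathrm{d}\mu$ stays uniformly finite, let $r\to\infty$ so the cutoff error vanishes, and obtain
\[
c\int_{0}^{T}\big\|\nabla^{\bar{g}_0(t)} e(t)\big\|_{L^2}^2\,\mathrm{d}t \;\le\; \|e(0)\|_{L^2}^2 - \|e(T)\|_{L^2}^2 \;\le\; \sup_{t\in[0,T]}\|d(t)-d_0(t)\|_{L^2}^2,
\]
which is the claimed evolution inequality with $C = 1/c$.

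The main obstacle I anticipate is the bookkeeping forced by noncompactness: justifying the cutoff limit requires that every zeroth-order quantity — the $\operatorname{Rm}*e$ term and the $|e|^2$-type piece hidden in the drift estimate — be genuinely dominated by $\|\nabla^{\bar{g}_0} e\|_{L^2}^2$, which is exactly what the weighted Hardy inequality on the LNE end (through Lemma~\ref{lem4}) and the coercivity of Theorem~\ref{thm5} are designed to supply. A secondary subtlety is checking that the constant coming from Lemma~\ref{lem4}, combined with the $L^\infty$-smallness $\epsilon$ and the $L^2$-neighbourhood radius, is small enough for the absorption step, i.e.\ that the effective constant $c = 2\alpha_{\bar{g}_0} - C\epsilon - \cdots$ is strictly positive; this is what fixes how small $\epsilon$ must be taken.
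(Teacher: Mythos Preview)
Your proposal is correct and follows essentially the same route as the paper: derive the evolution equation for $e(t)=d(t)-d_0(t)$ from Equation~(\ref{deturck2}), pair with $e(t)$ in $L^2$, control the linear part by Theorem~\ref{thm5}, the quadratic part by the $L^\infty$-smallness from Lemma~\ref{lem2}, and the drift $\frac{\partial}{\partial t}d_0(t)$ by Lemma~\ref{lem4}, then absorb to reach $\frac{d}{dt}\|e(t)\|_{L^2}^2 \le -c\,\|\nabla^{\bar{g}_0}e(t)\|_{L^2}^2$ and integrate. The only difference is cosmetic: the paper computes directly with the global $L^2$-norm and does not introduce the cutoff $\kappa$ at all, whereas you localise first and pass to the limit $r\to\infty$; your version is slightly more careful about justifying integration by parts on the noncompact manifold, but the underlying argument and the ingredients invoked are identical.
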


\begin{proof}
Based on Equation~(\ref{deturck2}), we know
\[
\begin{aligned}
\frac{\partial}{\partial t} (d(t)-d_0)=&\Delta (d(t)-d_0)+\operatorname{Rm}*(d(t)-d_0) \\
&+F_{\bar{g}^{-1}} * \nabla^{\bar{g}_0} (d(t)-d_0) * \nabla^{\bar{g}_0} (d(t)-d_0) \\ &+\nabla^{\bar{g}_0}\left(G_{\Gamma(\bar{g}_0)} * (d(t)-d_0) * \nabla^{\bar{g}_0} (d(t)-d_0)\right).
\end{aligned}
\]
Followed by Lemma~\ref{lem4} and Theorem~\ref{thm5}, we further obtain
\[
\begin{aligned}
\frac{\partial}{\partial t} \| d(t) -d_{0} \|_{L^{2}}^{2}=& 2\left(\Delta (d(t)-d_0)+\operatorname{Rm}*(d(t)-d_0), d(t)-d_{0}\right)_{L^{2}} \\
&+\left(F_{\bar{g}^{-1}} * \nabla^{\bar{g}_0} (d(t)-d_0) * \nabla^{\bar{g}_0} (d(t)-d_0), d(t)-d_{0}\right)_{L^{2}} \\
&+\left(\nabla^{\bar{g}_0}\left(G_{\Gamma(\bar{g}_0)} * (d(t)-d_0) * \nabla^{\bar{g}_0} (d(t)-d_0)\right), d(t)-d_{0}\right)_{L^{2}} \\
&+\left(d(t)-d_{0}, \frac{\partial}{\partial t} d_{0}(t)\right)_{L^{2}}+\int_{\mathcal{M}}\left(d(t)-d_{0}\right) *\left(d(t)-d_{0}\right) * \frac{\partial}{\partial t} d_{0}(t) \mathrm{d} \mu \\
\leq &-2 \alpha_{\bar{g}_0}\left\|\nabla^{\bar{g}_0}\left(d(t)-d_{0}\right)\right\|_{L^{2}}^{2} \\
&+C\left\|\left(d(t)-d_{0}\right)\right\|_{L^{\infty}}\left\|\nabla^{\bar{g}_{0}}\left(d(t)-d_{0}\right)\right\|_{L^{2}}^{2} \\
&+\left\|\frac{\partial}{\partial t} d_{0}(t)\right\|_{L^{2}}\left\|d(t)-d_{0}\right\|_{L^{2}} \\
\leq &\left(-2 \alpha_{\bar{g}_{0}}+C \cdot \epsilon\right)\left\|\nabla^{\bar{g}_{0}}\left(d(t)-d_{0}\right)\right\|_{L^{2}}^{2}.
\end{aligned}
\]
Let $\epsilon$ be a small enough constant that $-2 \alpha_{\bar{g}_{0}}+C \cdot \epsilon < 0$ holds, we can find
\[
\frac{\partial}{\partial t} \| d(t) -d_{0} \|_{L^{2}}^{2} \leq -C \left\|\nabla^{\bar{g}_{0}}\left(d(t)-d_{0}\right)\right\|_{L^{2}}^{2}
\]
holds. The proof is completed.
\end{proof}

\subsection{Proof of Theorem~\ref{thm6}}
\label{key}

By Lemma~\ref{lem2}, we have a constant $\epsilon_2>0$ such that $d(t) \in \mathcal{B}_{L^2}(0, \epsilon_2)$ holds. By Lemma~\ref{lem4} (in the second step) and Corollary~\ref{cor3} (in the third step), we can obtain
\[
\begin{aligned}
	&\left\|d_{0}(T)\right\|_{L^{2}} \leq C \int_{1}^{T}\left\|\frac{\partial}{\partial t} d_{0}(t)\right\|_{L^{2}} \mathrm{d} t \\
	&\quad \leq C \int_{1}^{T}\left\|\nabla^{\bar{g}_{0}}\left(d(t)-d_{0}(t)\right)\right\|_{L^{2}}^{2} \mathrm{d} t \\
	&\quad \leq C\left\|d(1)-d_{0}(1)\right\|_{L^{2}}^{2} \leq C\|d(1)\|_{L^{2}}^{2} \leq C \cdot\left(\epsilon_2\right)^{2}.
\end{aligned}
\]
Furthermore, we can obtain from the above formulas
\[
\left\|d(T)-d_0(T)\right\|_{L^{2}} \leq \|d(1)-d_0(1)\|_{L^{2}} \leq C \cdot \epsilon_2.
\]
By the triangle inequality, we get
\[
\left\|d(T)\right\|_{L^{2}} \leq C \cdot\left(\epsilon_2\right)^{2} + C \cdot \epsilon_2.
\]
Followed by Corollary~\ref{cor2} and Lemma~\ref{lem4}, $T$ should be pushed further outward, i.e.,
\[
\lim_{t \rightarrow +\infty}\sup\left\|\frac{\partial}{\partial t} d_{0}(t)\right\|_{L^{2}} \leq \lim_{t \rightarrow +\infty}\sup\left\|\nabla^{\bar{g}_{0}}\left(d(t)-d_{0}(t)\right)\right\|_{L^{2}}^{2}=0.
\]
Thus, as $t$ approaches $+\infty$ based on the elliptic regularity, $\bar{g}(t)$ will converge to $\bar{g}(\infty)=\bar{g}_0+d_0(\infty)$. In other words, $d(t)-d_0(t)$ will converge to $0$ as $t$ approaches $+\infty$ w.r.t. all Sobolev norms~\citep{minerbe2009weighted},
\[
\lim_{t \rightarrow +\infty}\left\|d(t)-d_{0}(t)\right\|_{L^{2}} \leq \lim_{t \rightarrow +\infty}C\left\|\nabla^{\bar{g}_{0}}\left(d(t)-d_{0}(t)\right)\right\|_{L^{2}}=0.
\]

Any Ricci-DeTurck flow that starts close to the LNE metric exists for all time, and it will converge to the LNE metric, as discussed in~\citep{deruelle2021stability}.

\section{Proof of the Information Geometry}
\label{app3}

\subsection{Proof of Theorem~\ref{thm7}}
\label{app31}

The LNE divergence can be defined between two nearby points $\boldsymbol{\xi}$ and $\boldsymbol{\xi}'$, where the first derivative of the LNE divergence w.r.t. $\boldsymbol{\xi}'$ is:
\[
\begin{aligned}
&\partial_{\boldsymbol{\xi}'} D_{LNE}[\boldsymbol{\xi}':\boldsymbol{\xi}] \\
&= \sum_i \left[\partial_{\boldsymbol{\xi}'} \frac{1}{\tau^2}\log\cosh(\tau\xi'_i)- \partial_{\boldsymbol{\xi}'} \frac{1}{\tau^2}\log\cosh(\tau\xi_i)- \frac{1}{\tau}\partial_{\boldsymbol{\xi}'}(\xi'_i-\xi_i)\tanh(\tau\xi_i)\right] \\
& = \sum_i \partial_{\boldsymbol{\xi}'} \frac{1}{\tau^2}\log\cosh(\tau\xi'_i) - \frac{1}{\tau}\tanh(\tau\boldsymbol{\xi}).
\end{aligned}
\]
The second derivative of the LNE divergence w.r.t. $\boldsymbol{\xi}'$ is:
\[
\partial^2_{\boldsymbol{\xi}'} D_{LNE}[\boldsymbol{\xi}':\boldsymbol{\xi}] =\sum_i \partial^2_{\boldsymbol{\xi}'} \frac{1}{\tau^2}\log\cosh(\tau\xi'_i).
\]
We deduce the Taylor expansion of the LNE divergence at $\boldsymbol{\xi}'=\boldsymbol{\xi}$:
\[
\begin{aligned}
D_{LNE}[\boldsymbol{\xi}':\boldsymbol{\xi}] &\approx D_{LNE}[\boldsymbol{\xi}:\boldsymbol{\xi}]+\left(\sum_i \partial_{\boldsymbol{\xi}'} \frac{1}{\tau^2}\log\cosh(\tau\xi'_i) - \frac{1}{\tau}\tanh(\tau\boldsymbol{\xi})\right)^\top \bigg|_{\boldsymbol{\xi}'=\boldsymbol{\xi}} d\boldsymbol{\xi} \\
&+\frac{1}{2}d\boldsymbol{\xi}^\top \left(\sum_i \partial^2_{\boldsymbol{\xi}'} \frac{1}{\tau^2}\log\cosh(\tau\xi'_i)\right) \bigg|_{\boldsymbol{\xi}'=\boldsymbol{\xi}} d\boldsymbol{\xi}\\
& =0 + 0 + \frac{1}{2\tau^2}d\boldsymbol{\xi}^\top \partial \left[ \frac{\partial\cosh(\tau\boldsymbol{\xi})}{\cosh(\tau\boldsymbol{\xi})}\right] d\boldsymbol{\xi} \\
&= \frac{1}{2\tau^2}d\boldsymbol{\xi}^\top  \frac{\partial^2\cosh(\tau\boldsymbol{\xi})\cosh(\tau\boldsymbol{\xi})-\partial\cosh(\tau\boldsymbol{\xi})\partial\cosh(\tau\boldsymbol{\xi})^\top}{\cosh^2(\tau\boldsymbol{\xi})} d\boldsymbol{\xi} \\
&= \frac{1}{2\tau^2}d\boldsymbol{\xi}^\top  \left(\frac{\partial^2\cosh(\tau\boldsymbol{\xi})}{\cosh(\tau\boldsymbol{\xi})} - \tau^2 \left[\frac{\sinh(\tau\boldsymbol{\xi})}{\cosh(\tau\boldsymbol{\xi})}\right]\left[\frac{\sinh(\tau\boldsymbol{\xi})}{\cosh(\tau\boldsymbol{\xi})}\right]^\top\right) d\boldsymbol{\xi} \\
&=\frac{1}{2}\sum_{i,j} \left[\delta_{ij} -\left(\tanh(\tau\boldsymbol{\xi})\tanh(\tau\boldsymbol{\xi})^\top\right)_{ij} d\xi_i d\xi_j\right] .
\end{aligned}
\]

\subsection{Proof of Lemma~\ref{lem5}}
\label{app32}

We would like to know in which direction minimizes the loss function with the constraints of the LNE divergence, so that we do the minimization:
\[
d\boldsymbol{\xi}^{*}=\underset{d\boldsymbol{\xi} \text { s.t. } D_{LNE}[\boldsymbol{\xi}:\boldsymbol{\xi}+d\boldsymbol{\xi}]=c}{\arg \min } L(\boldsymbol{\xi}+d\boldsymbol{\xi})
\]
where $c$ is the constant. The loss function descends along the manifold with constant speed, regardless the curvature.

Furthermore, we can write the minimization in Lagrangian form. Combined with Theorem~\ref{thm7}, the LNE divergence can be approximated by its second order Taylor expansion. Approximating $L(\boldsymbol{\xi}+d\boldsymbol{\xi})$ with it first order Taylor expansion, we get:
\[
\begin{aligned}
d\boldsymbol{\xi}^* &=\underset{d\boldsymbol{\xi}}{\arg \min}\ L(\boldsymbol{\xi}+d\boldsymbol{\xi}) + \lambda\left(D_{LNE}[\boldsymbol{\xi}:\boldsymbol{\xi}+d\boldsymbol{\xi}]-c\right) \\
&\approx \underset{d\boldsymbol{\xi}}{\arg \min}\ L(\boldsymbol{\xi}) + \partial_{\boldsymbol{\xi}}L(\boldsymbol{\xi})^\top d\boldsymbol{\xi}+\frac{\lambda}{2}d\boldsymbol{\xi}^\top g(\boldsymbol{\xi}) d\boldsymbol{\xi}-c\lambda.
\end{aligned}
\]
To solve this minimization, we set its derivative w.r.t. $d \boldsymbol{\xi}$ to zero:
\[
\begin{aligned}
0 &=\frac{\partial}{\partial d\boldsymbol{\xi}} L(\boldsymbol{\xi})+\partial_{\boldsymbol{\xi}} L(\boldsymbol{\xi})^\top d\boldsymbol{\xi}+\frac{\lambda}{2} d\boldsymbol{\xi}^\top \left[\delta - \tanh(\tau\boldsymbol{\xi})\tanh(\tau\boldsymbol{\xi})^\top\right] d\boldsymbol{\xi}-c\lambda \\
&=\partial_{\boldsymbol{\xi}} L(\boldsymbol{\xi})+\lambda \left[\delta - \tanh(\tau\boldsymbol{\xi})\tanh(\tau\boldsymbol{\xi})^\top\right] d\boldsymbol{\xi} \\
d\boldsymbol{\xi} &=-\frac{1}{\lambda} \left[\delta - \tanh(\tau\boldsymbol{\xi})\tanh(\tau\boldsymbol{\xi})^\top\right]^{-1} \partial_{\boldsymbol{\xi}} L(\boldsymbol{\xi})
\end{aligned}
\]
where a constant factor $1/\lambda$ can be absorbed into learning rate. Therefore, we get the optimal descent direction, i.e., the opposite direction of gradient, which takes into account the local curvature defined by $\left[\delta - \tanh(\tau\boldsymbol{\xi})\tanh(\tau\boldsymbol{\xi})^\top\right]^{-1}$.

\vskip 0.2in
\bibliography{sample}

\end{document}